\newcommand{\norm}[1]{\|{#1}\|}
\newcommand{\rank}{\operatorname{rank}}
\newcommand{\vect}{\operatorname{vec}}
\newcommand{\var}{\operatorname{Var}}
\newcommand{\diag}{\operatorname{diag}}
\newcommand{\ind}[1]{\mathbf{1}\left(#1\right)}
\newcommand{\calB}{\mathcal{B}}
\newcommand{\rmd}{\mathrm{d}}
\newcommand{\rmF}{\mathrm{F}}
\newcommand{\bbR}{\mathbb{R}}
\newcommand{\bbE}{\mathbb{E}}
\newcommand{\bbP}{\mathbb{P}}
\newcommand{\tU}{\widetilde{U}}
\newcommand{\tV}{\widetilde{V}}
\newcommand{\ba}{\bar{a}}
\newcommand{\bb}{\bar{b}}
\newcommand{\ta}{\tilde{a}}
\newcommand{\tb}{\tilde{b}}
\newcommand{\hX}{\widehat{X}}
\newcommand{\hU}{\widehat{U}}
\newcommand{\hV}{\widehat{V}}
\newcommand{\hSigma}{\widehat{\Sigma}}
\newtheorem{theorem}{Theorem}
\newtheorem{lemma}{Lemma}
\newtheorem{corollary}{Corollary}
\newtheorem{proposition}{Proposition}
\newtheorem{definition}{Definition}
\title{Joint Dimensionality Reduction\\for Separable Embedding Estimation}
\author{
  Yanjun~Li\\
  CSL and Department of ECE\\
  University of Illinois, Urbana-Champaign\\
  Urbana, IL 61801 \\
  \texttt{yli145@illinois.edu} \\
	\And
  Bihan~Wen\\
  School of Electrical and Electronic Engineering\\
  Nanyang Technological University\\
  Singapore\\
  \texttt{bihan.wen@ntu.edu.sg} \\
  	\And
  Hao~Cheng\\
  School of Electrical and Electronic Engineering\\
  Nanyang Technological University\\
  Singapore\\
  \texttt{hao006@e.ntu.edu.sg} \\
	\And
  Yoram~Bresler\\
  CSL and Department of ECE\\
  University of Illinois, Urbana-Champaign\\
  Urbana, IL 61801 \\
  \texttt{ybresler@illinois.edu} \\
}
\begin{document}

\maketitle

\begin{abstract}

Low-dimensional embeddings for data from disparate sources play critical roles in multi-modal machine learning, multimedia information retrieval, and bioinformatics. In this paper, we propose a supervised dimensionality reduction method that learns linear embeddings jointly for two feature vectors representing data of different modalities or data from distinct types of entities. We also propose an efficient feature selection method that complements, and can be applied prior to, our joint dimensionality reduction method. Assuming that there exist true linear embeddings for these features, our analysis of the error in the learned linear embeddings provides theoretical guarantees that the dimensionality reduction method accurately estimates the true embeddings when certain technical conditions are satisfied and the number of samples is sufficiently large. The derived sample complexity results are echoed by numerical experiments. We apply the proposed dimensionality reduction method to gene-disease association, and predict unknown associations using kernel regression on the dimension-reduced feature vectors. Our approach compares favorably against other dimensionality reduction methods, and against a state-of-the-art method of bilinear regression for predicting gene-disease associations.
\end{abstract}

\section{Introduction}

Dimensionality reduction (DR) is the process of extracting features from high dimensional data, which is essential in big data applications. Unsupervised learning techniques aim to embed high-dimensional data into low-dimensional features that most accurately represent the original data. The literature on this topic is vast, from classical methods, such as principal component analysis (PCA) and multidimensional scaling (MDS), to more recent approaches, such as Isomap and locally-linear embedding \cite{Tenenbaum2000,Roweis2000}. On the other hand, supervised learning techniques -- a long line of work including linear discriminant analysis (LDA) and canonical correlation analysis (CCA) -- extract features that are most relevant to the dependent variables. This is also the goal of the work in this paper. 

More recently, sufficient DR methods \cite{Adragni2009} were proposed to identify a linear embedding $U\in\bbR^{n \times r}$ ($r < n$) of feature $x\in\bbR^n$, such that the reduced feature $U^Tx\in\bbR^r$ captures all the information in the conditional distribution $y|x$ or the conditional mean $\bbE[y|x]$ of some response $y$. Such methods include ordinary least squares (OLS) \cite{Li1989}, sliced inverse regression (SIR) \cite{Li1991}, principal Hessian direction (pHd) \cite{Li1992}, sliced average variance estimation (SAVE) \cite{Cook1991,Cook2000}, structure-adaptive approach \cite{Hristache2001}, parametric inverse regression (PIR) \cite{Bura2001}, iterative Hessian transformation \cite{Cook2002}, minimum average variance estimation (MAVE) \cite{Xia2002}, minimum discrepancy approach \cite{Cook2005}, intraslice covariance estimation \cite{Cook2006}, density based MAVE \cite{Xia2007}, contour projection approaches \cite{Luo2009}, and marginal high moment regression \cite{Yin2004}. We refer the reader to a review \cite{Ma2012} of supervised DR methods.

In multi-modal machine learning, multimedia information retrieval, and bioinformatics, there usually exist multiple sets of features with distinct characteristics. In some cases, features come from data of different modalities. For example, image retrieval models are usually trained on datasets with both images and associated tags (text) \cite{Chua2009}. In other applications, data may represent completely different entities. For example, both gene features and disease features are critical in gene-disease association \cite{Natarajan2014}. Joint dimensionality reduction of two sets of features naturally arises in these scenarios. 

One might be tempted to learn a low-dimensional embedding for a long feature vector $[a^T, b^T]^T$ formed by concatenating the two types of features $a$ and $b$. The single embedding will inevitably mix the features from different sources. However, separable embeddings for different types of data are more interpretable, and are preferable for some data mining tasks \cite{Chang2015}. In addition, for some applications, separability in DR is an effective regularization, which results in faster inference and better generalization ability. An alternative approach is to learn two linear embeddings for the two types of features separately \cite{Li2009}. Unfortunately, as shown later in this paper, applying off-the-shelf algorithms (e.g., pHd) to two types of features separately may fail to recover the correct embeddings, therefore cannot extract useful features.

For the aforementioned reasons, there is much interest in learning separable embeddings jointly for two feature vectors. Li et al. \cite{Li2010} extended MAVE and proposed to learn a block structured linear embedding. Like MAVE, the time complexity scales quadratically with the number of samples, which limits its applicability to large datasets. Moreover, their approach combines the linear embedding estimator with a non-parametric link function estimator, which heavily relies on the smoothness of the underlying link function. Naik and Tsai \cite{Naik2005} proposed to estimate SIR embeddings subject to linear constraints induced by groups of features. Guo et al. \cite{Guo2015} estimated a separable embedding that encloses the conventional unstructured embedding. Liu et al. \cite{Liu2016} applied the same idea to OLS, and derived a more efficient algorithm (sOLS) for learning separable embeddings. These methods all treat disparate feature types homogeneously by concatenating them into one long feature vector, and enforce a separable structure in the embedding. Other than groupwise MAVE \cite{Li2010}, the previous works on learning separable embedding did not give error analysis or sample complexity results.

In this paper, we propose a supervised DR method that jointly learns linear embeddings for disparate types of features. Since the embeddings correspond to different factors of a matrix (resp. a higher order tensor) formed by a weighted sum of outer products (resp. tensor products) of disparate types of features, separability arises naturally.
Our method can serve several practical purposes. First, the learned linear embeddings extract features that best explain the response, which is of interest to many data mining problems. Secondly, by learning a low-dimensional feature representation,
our method improves the computational efficiency and the generalization ability of learning algorithms, such as kernel regression and k-nearest neighbors classification. Lastly, the estimated embeddings can be used to initialize more sophisticated models (e.g., groupwise dimension reduction \cite{Li2010}, or heterogeneous network embedding \cite{Chang2015}). Our contributions are summarized as follows:

\begin{enumerate}
	\item We establish a novel joint dimensionality reduction (JDR) algorithm based on matrix factorization, where separability in feature embeddings arises naturally. The time complexity of our method scales linearly with the number of samples. Our method, which learns two embeddings jointly and can be extended to the case of multiple embeddings, is a new member in the family of supervised DR algorithms. 
	
	\item We also propose a feature selection method that can be applied prior to DR, and improves interpretability of the extracted features. As far as we know, this is the first work that combines separability and sparsity in feature embeddings.
	
	\item The error of the estimated linear embeddings is analyzed under some technical conditions, which leads to performance guarantees for our dimensionality reduction and feature selection algorithms in terms of sample complexities. Our analysis is an extension of the work on generalized linear models by Plan et al. \cite{Plan2016}. The theoretical results are complemented by numerical experiments on synthetic data. 
	
	\item We combine our JDR method with kernel regression, and apply this approach to predicting gene-disease associations. Our approach outperforms other DR approaches, as well as a state-of-the-art bilinear regression method for predicting gene-disease associations \cite{Natarajan2014}.
\end{enumerate}


\section{Embedding Models for Two Feature Vectors} \label{sec:model}

In this paper, we assume that the joint distribution of response $y$ and features $a\in\bbR^{n_1}$ and $b\in\bbR^{n_2}$ satisfies a low-dimensional embedding model. In particular, we assume that there exist linear embeddings $U\in\bbR^{n_1\times r}$ and $V\in\bbR^{n_2\times r}$ ($n_1>r$, $n_2>r$), such that $y$ depends on $a$ and $b$ only through low-dimensional features $U^T a\in \bbR^r$ and $V^T b \in\bbR^r$, i.e. we have the following Markov chain:
\begin{align}
(a,b) \rightarrow (U^T a, V^T b) \rightarrow y. \label{eq:markov}
\end{align}
In other words, the conditional distributions $y|(a,b)$ and $y|(U^Ta,V^Tb)$ are identical. 
Given \eqref{eq:markov}, there exists a deterministic bivariate functional $f(\cdot,\cdot)$ such that $\bbE[y|a,b] = \mu_{y|(a,b)} = f(U^T a, V^T b)$, and the randomness of $\mu_{y|(a,b)}$ comes from $U^T a$ and $V^T b$. The link function $f(\cdot,\cdot)$ can take many different forms, including a bilinear function $a^T UV^T b$ \cite{Natarajan2014}, a bilinear logistic function $1/(1+\exp(- a^T UV^T b - t))$ \cite{Chang2015}, or a radial basis function (RBF) kernel $\exp(-\norm{U^Ta-V^Tb}_2^2 / \sigma_\mathrm{RBF}^2)$.

A special case of the above model is very common in learning problems with two feature vectors. The more restrictive model assumes that $y$ depends on $a$ and $b$ only through $\mu_{y|(a,b)}$, i.e.,
\begin{align}
(a,b) \rightarrow (U^T a, V^T b) \rightarrow \mu_{y|(a,b)} = f(U^T a, V^T b) \rightarrow y, \label{eq:markov2}
\end{align}
Common examples of the conditional distribution $y|\mu_{y|(a,b)}$ include a Gaussian distribution $N(\mu_{y|(a,b)}, \sigma_c^2)$ (regression) and a Bernoulli distribution $\mathrm{Ber}(\mu_{y|(a,b)})$ (binary classification). 
In the rest of the paper, we assume only \eqref{eq:markov} in our analysis. The sole purpose of the special case \eqref{eq:markov2} is to demonstrate the connections of our model with popular learning models.

The goal of this paper is to introduce a supervised JDR method that does not assume any specific form of link function or conditional distribution. Other than the low-dimensional embedding assumption in \eqref{eq:markov}, the proposed JDR algorithm does not introduce any additional bias. When training data is abundant, our algorithm, combined with non-parametric learning methods, can potentially outperform conventional parametric models. For example, JDR combined with kernel regression leads to better prediction of gene-disease association than bilinear regression (see Section \ref{sec:gda}).


\section{Joint Dimensionality Reduction via Matrix Factorization}

In this section, we introduce a JDR algorithm (Algorithm \ref{alg:jdr}) that learns $r$-dimensional embeddings $U$ and $V$ from $m$ samples of both the features $\{a_i\}_{i=1}^m$, $\{b_i\}_{i=1}^m$, and the corresponding responses $\{y_i\}_{i=1}^m$. The parameter $r$ depends on data, and can be selected based on cross validation.

\begin{algorithm} \label{alg:jdr}
\DontPrintSemicolon
\SetKwInput{KwPara}{Parameter}
\caption{Joint Dimensionality Reduction (JDR)}
\KwIn{features $\{a_i\}_{i=1}^m\subset\bbR^{n_1}$, $\{b_i\}_{i=1}^m\subset\bbR^{n_2}$, responses $\{y_i\}_{i=1}^m\subset\bbR$}
\KwPara{$r$}
\textbf{1. Data normalization:} \;
1.1. Sample means: $\mu_a = \frac{1}{m} \sum_{i=1}^m a_i$,~~~$\mu_b = \frac{1}{m} \sum_{i=1}^m b_i$,~~~$\mu_y = \frac{1}{m} \sum_{i=1}^m y_i$ \;
1.2. Sample covariances: $\Sigma_a = \frac{1}{m} \sum_{i=1}^m (a_i-\mu_a)(a_i-\mu_a)^T$,~~~$\Sigma_b = \frac{1}{m} \sum_{i=1}^m (b_i-\mu_b)(b_i-\mu_b)^T$ \;
1.3. Cholesky decompositions: $C_aC_a^T = \mathrm{Cholesky}(\Sigma_a)$,~~~$C_bC_b^T = \mathrm{Cholesky}(\Sigma_b)$ \;
1.4. Normalization: $a_i' = C_a^{-1}(a_i-\mu_a)$,~~~$b_i' \leftarrow C_b^{-1}(b_i-\mu_b)$,~~~$y_i' \leftarrow y_i-\mu_y$ \;
\textbf{2. Embedding estimation:} \;
2.1. Embedding proxy: 
\vspace{-0.1in}
\begin{align}
\hX_0 \coloneqq \frac{1}{m} \sum_{i=1}^{m} a_i' y_i' b_i'^T \label{eq:lin}
\end{align}\vspace{-0.2in}\;
2.2. Compact singular value decomposition (SVD): $\hU'\hSigma' \hV'^T = \mathrm{SVD}_r(\hX_0)$ \;
2.3. Embeddings: $\hU = (C_a^T)^{-1}\hU'$,~~~$\hV=(C_b^T)^{-1}\hV'$ \;
\KwOut{$\hU \in \bbR^{n_1\times r}$, $\hV \in \bbR^{n_2\times r}$}
\end{algorithm}

\textbf{Error Analysis.} We derive error bounds for the learned embeddings $\hU$ and $\hV$ under the assumption that $\{a_i\}_{i=1}^m$ and $\{b_i\}_{i=1}^m$ are mutually independent sets of Gaussian random vectors. Although Gaussianity and independence are crucial to our theoretical analysis, Algorithm \ref{alg:jdr} produces good embeddings even if these assumptions are violated. (Experiments in the supplementary material confirm that our method can estimate the embeddings accurately even if (a) the distributions are non-Gaussian (e.g., uniform, Poisson); or (b) there is weak dependence between $\{a_i\}_{i=1}^m$ and $\{b_i\}_{i=1}^m$.)

The data normalization procedure creates uncorrelated feature vectors via an affine transform of the original data. If $\{a_i\}_{i=1}^m$ (resp. $\{b_i\}_{i=1}^m$) are i.i.d. Gaussian random vectors, and the true means and covariance matrices are used in data normalization, then $\{a_i'\}_{i=1}^m$ (resp. $\{b_i'\}_{i=1}^m$) are i.i.d. following standard Gaussian distribution $N(0,I_{n_1})$ (resp. $N(0,I_{n_2})$). 
Since the sample mean and sample covariance are consistent estimators for true mean and true covariance, the normalization error incurred by a finite sample size $m$ is inversely proportional to $\sqrt{m}$.

For simplicity, we only analyze the case where $\{a_i\}_{i=1}^m$ and $\{b_i\}_{i=1}^m$ follow standard Gaussian distributions, and data normalization in Algorithm \ref{alg:jdr} is deactivated, i.e., $a_i'=a_i$, $b_i'=b_i$, $y_i'=y_i$, and $\hU'=\hU$, $\hV'=\hV$ have orthonormal columns.
Theorem \ref{thm:lin} bounds the error between the true embeddings $U$, $V$ and the estimated embeddings $\hU$, $\hV$. Rather than recovering the exact embedding matrices $U$ and $V$, we care more about their range spaces, which we call dimensionality reduction subspaces. Without loss of generality, we assume that $U,V$ also have orthonormal columns. Therefore, we consider two embedding matrices $\hU$ and $U$ to be equivalent if their columns span the same subspace. Let $P_U$ denote the projection onto the column space of $U$. We use $d(U, \hU) \coloneqq \|\hU-P_{U}\hU\|_\rmF$ to quantify the subspace estimation error, which evaluates the residual of $\hU$ when projected onto the subspace encoded by $U$. Clearly, the estimation error is between $0$ and $\sqrt{r}$, attaining $0$ when $\hU$ and $U$ span the same subspace, and attaining $\sqrt{r}$ when the two subspaces are orthogonal.

\begin{theorem}\label{thm:lin}
Suppose features $\{a_i\}_{i=1}^m$ (resp. $\{b_i\}_{i=1}^m$) are i.i.d. random vectors following a Gaussian distribution $N(0,I_{n_1})$ (resp. $N(0,I_{n_2})$), and the responses $\{y_i\}_{i=1}^m$ are i.i.d. following the Markov chain model in \eqref{eq:markov}. Then the embedding proxy $\hX_0$ in \eqref{eq:lin} satisfies $\bbE \left[ \hX_0 \right] = UQV^T$ for some matrix $Q\in\bbR^{r\times r}$ determined by the link function $f(\cdot,\cdot)$. Moreover, if $y_i$, $y_ia_i$, $y_ib_i$, and $a_iy_ib_i^T$ have finite second moments, $Q$ is non-singular, and $r = O(1)$, then the estimated embeddings satisfy
\begin{align*}
\max\left\{\bbE\left[d(U,\hU)\right],~ \bbE\left[d(V,\hV)\right]\right\}= O\left(\sqrt{\frac{n_1n_2}{m}} \right).
\end{align*}
\end{theorem}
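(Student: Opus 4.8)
The plan is to decompose the argument into three parts: (i) establish the identity $\bbE[\hX_0] = UQV^T$ by Gaussian conditioning, (ii) bound the Frobenius norm of the fluctuation $E \coloneqq \hX_0 - UQV^T$, showing $\bbE\norm{E}_\rmF = O(\sqrt{n_1n_2/m})$, and (iii) convert the matrix error into the subspace errors $d(U,\hU)$, $d(V,\hV)$ via a Wedin-type $\sin\Theta$ bound together with a truncation argument. Throughout I use that $\hU = \hU'$, $\hV = \hV'$ have orthonormal columns (normalization deactivated), and that $r = O(1)$ makes Frobenius and operator norms interchangeable up to constants.

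For (i), write $a_i = P_U a_i + (I-P_U)a_i$ with $P_U = UU^T$, and likewise $b_i = P_V b_i + (I-P_V)b_i$. Since $a_i,b_i$ are standard Gaussian, $(I-P_U)a_i$ is independent of $P_U a_i$, and since $a_i\perp b_i$ and, by \eqref{eq:markov}, $y_i$ is a function of $(U^Ta_i, V^Tb_i)$ and exogenous noise, the three blocks $(U^Ta_i, V^Tb_i, y_i)$, $(I-P_U)a_i$, $(I-P_V)b_i$ are mutually independent, with the latter two centered. Expanding $\bbE[a_i y_i b_i^T]$, every term containing $(I-P_U)a_i$ or $(I-P_V)b_i$ vanishes, leaving $\bbE[P_U a_i\, y_i\, b_i^T P_V] = U\,\bbE[(U^Ta_i)\,y_i\,(b_i^TV)]\,V^T = UQV^T$ with $Q \coloneqq \bbE[(U^Ta_i)\,y_i\,(b_i^TV)]\in\bbR^{r\times r}$.

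For (ii), $E = \frac1m\sum_i(a_iy_ib_i^T - UQV^T)$ is a normalized sum of i.i.d. centered matrices, so $\bbE\norm{E}_\rmF^2 \le \frac1m\bbE\norm{a_1y_1b_1^T}_\rmF^2 = \frac1m\bbE\big[y_1^2\norm{a_1}_2^2\norm{b_1}_2^2\big]$. The crux is to show this last expectation is $O(n_1n_2)$. Using $\norm{a_1}_2^2 = \norm{U^Ta_1}_2^2 + \norm{(I-P_U)a_1}_2^2$ and $\norm{b_1}_2^2 = \norm{V^Tb_1}_2^2 + \norm{(I-P_V)b_1}_2^2$, with $\norm{(I-P_U)a_1}_2^2\sim\chi^2_{n_1-r}$ and $\norm{(I-P_V)b_1}_2^2\sim\chi^2_{n_2-r}$, and the independence from (i), the product expands into four terms: $\bbE[y_1^2\norm{U^Ta_1}_2^2\norm{V^Tb_1}_2^2] + (n_2-r)\bbE[y_1^2\norm{U^Ta_1}_2^2] + (n_1-r)\bbE[y_1^2\norm{V^Tb_1}_2^2] + (n_1-r)(n_2-r)\bbE[y_1^2]$. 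The four prefactor expectations are finite by the hypotheses (they are bounded, respectively, by $\bbE\norm{a_1y_1b_1^T}_\rmF^2$, $\bbE\norm{y_1a_1}_2^2$, $\bbE\norm{y_1b_1}_2^2$, $\bbE[y_1^2]$), and $r=O(1)$; hence $\bbE\big[y_1^2\norm{a_1}_2^2\norm{b_1}_2^2\big] = O(n_1n_2)$ and $\bbE\norm{E}_\rmF \le (\bbE\norm{E}_\rmF^2)^{1/2} = O(\sqrt{n_1n_2/m})$.

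For (iii), $UQV^T$ has rank exactly $r$ (since $Q$ is nonsingular), its left (resp. right) singular subspace is $\mathrm{range}(U)$ (resp. $\mathrm{range}(V)$), $\sigma_{r+1}(UQV^T)=0$, and $\sigma_r(UQV^T) = \sigma_r(Q)>0$ is a fixed constant. As $\hU$ holds the top-$r$ left singular vectors of $\hX_0$, Wedin's theorem gives, on the event $\{\norm{E}_{\mathrm{op}}\le\sigma_r(Q)/2\}$, $d(U,\hU) = \norm{(I-P_U)\hU}_\rmF \le \sqrt r\,\norm{E}_{\mathrm{op}}/(\sigma_r(Q)-\norm{E}_{\mathrm{op}}) \le (2\sqrt r/\sigma_r(Q))\norm{E}_{\mathrm{op}}$, and on the complement the trivial bound $d(U,\hU)\le\sqrt r$ applies; combining these with Markov's inequality $\bbP(\norm{E}_{\mathrm{op}}>\sigma_r(Q)/2)\le 4\bbE\norm{E}_\rmF^2/\sigma_r(Q)^2$ yields $\bbE[d(U,\hU)] = O(\sqrt{n_1n_2/m}) + O(n_1n_2/m)$, and since also $\bbE[d(U,\hU)]\le\sqrt r = O(1)$, a one-line case split on whether $n_1n_2/m$ exceeds $1$ reduces this to $O(\sqrt{n_1n_2/m})$; the same argument on the right singular subspace handles $\hV$. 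The main obstacle is step (ii) — specifically, pinning down that the $n_1n_2$ factor is produced entirely by the $\chi^2_{n_1-r}$ and $\chi^2_{n_2-r}$ energies of $a_1,b_1$ orthogonal to the embeddings, while the finite-second-moment hypotheses are exactly what keeps the remaining, dimension-free prefactors bounded; once that decomposition is in hand, the Stein-type conditioning of step (i) and the $\sin\Theta$ perturbation with truncation in step (iii) are routine given $r = O(1)$.
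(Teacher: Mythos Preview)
Your steps (i) and (ii) mirror the paper's proof closely: the same orthogonal decomposition $a_i = P_U a_i + (I-P_U)a_i$ and independence argument, and the same $\chi^2$ expansion to extract the $n_1 n_2$ factor from the perpendicular energies. One small caveat: your parenthetical bounds on the four prefactors, e.g.\ by $\bbE\norm{a_1y_1b_1^T}_\rmF^2$, establish only finiteness, and those upper bounds are themselves $O(n_1n_2)$; what you actually need --- and what you do say at the end --- is that the prefactors depend only on $(\ba_1,\bb_1,y_1)\in\bbR^r\times\bbR^r\times\bbR$ and hence are dimension-free.

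Step (iii) genuinely differs. You invoke Wedin's theorem on the event $\{\norm{E}_{\mathrm{op}}\le\sigma_r(Q)/2\}$, apply Markov on the complement, then case-split on $n_1n_2/m \gtrless 1$ to absorb the extra $O(n_1n_2/m)$ term. The paper instead proves directly (its Lemma~\ref{lem:subspace}) the unconditional bound $d(U,\hU) \le \sigma_r(Q)^{-1}\norm{\hU\hSigma\hV^T - UQV^T}_\rmF$: writing $\widetilde{\hU}$ for an orthonormal complement of $\hU$, one has $\widetilde{\hU}^T\hU\hSigma\hV^T = 0$ exactly, so $d(U,\hU) = \norm{\widetilde{\hU}^T U}_\rmF \le \sigma_r(Q)^{-1}\norm{\widetilde{\hU}^T UQV^T}_\rmF = \sigma_r(Q)^{-1}\norm{\widetilde{\hU}^T(UQV^T - \hU\hSigma\hV^T)}_\rmF$. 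Together with $\norm{\hU\hSigma\hV^T - UQV^T}_\rmF \le 2\norm{\hX_0 - UQV^T}_\rmF$ (best rank-$r$ approximation plus triangle inequality), this gives the result in one line with no truncation, no Markov tail, and no case split. Your route works, but the paper's lemma is cleaner because it exploits the exact low rank of $\hU\hSigma\hV^T$ rather than a spectral-gap perturbation bound.
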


Theorem \ref{thm:lin} holds for any link function $f(\cdot,\cdot)$ such that $Q$ is non-singular. The dependence of the estimation error on dimension $r$ varies for different link functions, and is unknown since the link function is not specified. However, because $r=O(1)$, the dependence is hidden in constants. As an example, we derive the explicit dependence on $r$ for the bilinear link function $f(U^Ta,V^Tb)= a^TUV^Tb$, in which case $Q = I_r$. If the conditional variance $\var[y_i|a_i, b_i]\leq cr$ for some absolute constant $c$, then $\max\left\{\bbE\left[d(U,\hU)\right],~ \bbE\left[d(V,\hV)\right]\right\} \leq 2\sqrt{(c+1)r(n_1+2)(n_2+2)/m}$.

By Theorem \ref{thm:lin}, we need $m =O(n_1n_2)$ samples to produce an accurate estimate. However, when the responses $\{y_i\}_i^{m}$ are light-tailed random variables, we show that the sample complexity can be reduced to $m=O((n_1+n_2)\log^7(n_1+n_2))$.
\begin{definition}[Light-tailed Response Condition]\label{def:light}
We say $y_i$ satisfies the light-tailed response condition, if there exists absolute constants $c,C>0$, such that $\bbP\left[\left|y_i\right|\geq t\right] \leq C e^{-ct}$, for all $t\geq 0$.
\end{definition}

\begin{theorem}\label{thm:optimal}
Suppose the assumptions in Theorem \ref{thm:lin} are satisfied, and $\{y_i\}_{i=1}^m$ satisfy the light-tailed response condition, where $C>0$ and $c>\frac{1}{8\log(n_1+n_2)}$. If $m>n_1+n_2$, then
\begin{align*}
\max\left\{\bbE\left[d(U,\hU)\right],~ \bbE\left[d(V,\hV)\right] \right\} = O\left(\sqrt{\frac{(n_1+n_2)\log^2 m \log^4(n_1+n_2)}{m}}\right).
\end{align*}
\end{theorem}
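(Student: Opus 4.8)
The plan is to bound the spectral-norm deviation $\|\hX_0-\bbE\hX_0\|$ and feed it into the same Wedin-type perturbation argument used for Theorem~\ref{thm:lin}, but replacing the crude estimate $\|\hX_0-\bbE\hX_0\|\le\|\hX_0-\bbE\hX_0\|_\rmF=O(\sqrt{n_1n_2/m})$ of that proof by a direct bound on the spectral norm. By Theorem~\ref{thm:lin}, $\bbE\hX_0=UQV^T$ is rank $r$ with least nonzero singular value $\sigma_r(Q)$, a positive constant since $Q$ is fixed and nonsingular and $r=O(1)$. Wedin's theorem then gives $\max\{d(U,\hU),\,d(V,\hV)\}\lesssim\|\hX_0-UQV^T\|/\sigma_r(Q)$ on the event $\mathcal{E}:=\{\|\hX_0-UQV^T\|\le\sigma_r(Q)/2\}$, while $d\le\sqrt r=O(1)$ holds deterministically. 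Writing $\rho_m:=\sqrt{(n_1+n_2)\log^2 m\,\log^4(n_1+n_2)/m}$, it therefore suffices to prove $\bbE\|\hX_0-UQV^T\|=O(\rho_m)$ together with $\bbP[\mathcal{E}^c]=O(\rho_m/\sqrt r)$; and since the claim is vacuous when $\rho_m\gtrsim\sqrt r$, we may assume $m\gtrsim(n_1+n_2)\,\mathrm{polylog}(n_1+n_2)$.

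To control $\bbE\|\hX_0-UQV^T\|$ I would truncate and then apply the matrix Bernstein inequality. Put $R_1\asymp\sqrt{n_1}+\sqrt{\log m}$, $R_2\asymp\sqrt{n_2}+\sqrt{\log m}$ and $L\asymp\log(n_1+n_2)\log m$, with large enough absolute constants. Gaussian norm concentration gives $\bbP[\|a_i\|>R_1],\,\bbP[\|b_i\|>R_2]=O(m^{-4})$, and the light-tailed response condition gives $\bbP[|y_i|>L]=O(m^{-4})$ --- this is exactly where the hypothesis $c>\tfrac1{8\log(n_1+n_2)}$ enters, keeping $L$ polylogarithmic; integrating the tails it also yields $\bbE[y_i^2]=O(\log^2(n_1+n_2))$ and $\bbE[y_i^4]=O(\log^4(n_1+n_2))$. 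Replacing $a_i,b_i,y_i$ by their truncations changes $\hX_0$ only on an event of probability $O(m^{-3})$ and shifts $\bbE\hX_0$ by $O(m^{-3})$ in spectral norm (Cauchy--Schwarz on the tails), and both effects are $O(\rho_m)$. The truncated, centered summands are independent with spectral norm at most $R:=LR_1R_2\asymp\sqrt{n_1n_2}\,\log(n_1+n_2)\log m$.

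The crux is the matrix variance parameter $\sigma^2$, governed by $\big\|\sum_i\bbE[\tilde{y}_i^2\,\|\tilde{b}_i\|^2\,\tilde{a}_i\tilde{a}_i^T]\big\|$ and its analogue with $a,b$ interchanged; I claim $\sigma^2=O\big(m\,\max(n_1,n_2)\log^2(n_1+n_2)\big)$ rather than the naive $O(mn_1n_2)$. The key is the Markov structure \eqref{eq:markov}: $y_i$ depends on $a_i$ only through $U^Ta_i$, so decomposing $a_i=P_Ua_i+(I-P_U)a_i$, the second component is independent of $(U^Ta_i,V^Tb_i)$ and hence of $y_i$, giving $\bbE\big[y_i^2\|b_i\|^2(I-P_U)a_ia_i^T(I-P_U)\big]=\bbE\big[y_i^2\|b_i\|^2\big](I-P_U)$, where $\bbE\big[y_i^2\|b_i\|^2\big]\le\sqrt{\bbE[y_i^4]\,\bbE\|b_i\|^4}=O(n_2\log^2(n_1+n_2))$; the remaining terms involve $P_Ua_i$, lie in the $r$-dimensional range of $U$, and by Cauchy--Schwarz have spectral norm $O(\log^2(n_1+n_2))$ (using $\bbE\|P_Ua_i\|^4=O(r^2)$ and $r=O(1)$), so they add only a rank-$\le r$ perturbation of comparable norm. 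Granting $\sigma^2=O\big(m\max(n_1,n_2)\log^2(n_1+n_2)\big)$, matrix Bernstein yields
\begin{align*}
\bbE\|\hX_0-UQV^T\| \;\lesssim\; \frac{\sqrt{\sigma^2\log(n_1+n_2)}}{m} + \frac{R\log(n_1+n_2)}{m},
\end{align*}
whose first term is $O\big(\sqrt{\max(n_1,n_2)/m}\cdot\log^{3/2}(n_1+n_2)\big)$ and, since $m>n_1+n_2\ge n_1n_2/(n_1+n_2)$, whose second term is $O\big(\sqrt{(n_1+n_2)/m}\cdot\log m\,\log^2(n_1+n_2)\big)$; both are $O(\rho_m)$, as claimed. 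The same inequality, applied at the constant level $\sigma_r(Q)/2$, makes $\bbP[\mathcal{E}^c]$ superpolynomially small in $n_1+n_2$ throughout the regime $\rho_m\lesssim\sqrt r$, giving $\bbP[\mathcal{E}^c]=O(\rho_m/\sqrt r)$.

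I expect the main obstacle to be precisely the variance estimate $\sigma^2=O\big(m\max(n_1,n_2)\,\mathrm{polylog}\big)$: it requires a careful use of \eqref{eq:markov} to decouple $y_i$ from the high-dimensional bulk of $a_i$ and $b_i$, and then controlling both the residual rank-$\le r$ terms and the small correlations introduced by truncation without losing polylogarithmic control. The rest --- the Wedin reduction, the tail truncation, the moment bounds on $y_i$, and the $\log$-factor bookkeeping --- is routine given the machinery of Plan et al.~\cite{Plan2016} and standard matrix concentration.
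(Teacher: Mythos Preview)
Your approach is correct and close in spirit to the paper's, but the execution differs in a useful way. The paper first performs the orthogonal decomposition $a_iy_ib_i^T = U\ba_iy_i\bb_i^TV^T + \tU\ta_iy_i\tb_i^T\tV^T + \text{(two cross terms)}$ and bounds the four resulting averages in the $\Delta_r^\circ$-seminorm of Plan et al.: the three pieces involving $\ba_i$ or $\bb_i$ have an $r$-dimensional factor and are handled in Frobenius norm via the constants $\sigma,\tau_1,\tau_2$ from the proof of Theorem~\ref{thm:lin}; only the ``bulk'' term $\frac1m\sum_i\tU\ta_iy_i\tb_i^T\tV^T$ requires the spectral-norm/matrix-Bernstein argument, and there $\ta_i,\tb_i$ are \emph{fully} independent of $y_i$ (Lemma~\ref{lem:independent}), so Bernstein is applied conditionally on $\{y_i\}$ with no decoupling needed inside the variance. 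You instead apply matrix Bernstein to the undecomposed sum and push the $P_U/P_U^\perp$ splitting into the variance computation; this is also valid --- note in particular that truncating by $\|a_i\|,\|b_i\|,|y_i|$ only shrinks $y_i^2\|b_i\|^2a_ia_i^T$ in the PSD order, so the truncated matrix variance is dominated by the untruncated one, where the clean independence applies and your $(I-P_U)$ calculation goes through --- and it replaces the $\Delta_r^\circ$/projection machinery by a direct Wedin bound. The paper's route isolates the single genuinely high-dimensional piece before concentrating, which makes the Bernstein step essentially a weighted Gaussian sum; your route is more monolithic but arguably more direct. One minor slip: your ``remaining terms'' involving $P_Ua_i$ still carry the factor $\|b_i\|^2$ and hence have spectral norm $O(n_2\log^2(n_1+n_2))$, not $O(\log^2(n_1+n_2))$ --- but this matches the main term and does not affect the conclusion.
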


Light-tailedness is a very mild condition. For example, the two models we examine in Section \ref{sec:synthetic} both satisfy this condition. Moreover, all random variables that are bounded almost surely are light-tailed (e.g., the association scores in Section \ref{sec:gda}). Light-tailedness can be checked empirically on data \cite{Bryson1974}.
When the condition is not satisfied, one can replace $y_i$ with a transformed response $y_i'=g(y_i)$ such that $y_i'$ is light-tailed. Owing to the Markov chain $(a,b) \rightarrow (U^T a, V^T b) \rightarrow y \rightarrow y'$, one can estimate $U$ and $V$ by applying Algorithm \ref{alg:jdr} to the transformed responses $\{y_i'\}_{i=1}^m$. 
The optimal choice of response transformation needs further investigation.

\textbf{Time Complexity and A Fast Approximate Algorithm.} The time complexities for data normalization and embedding estimation in Algorithm \ref{alg:jdr} are $O(mn_1^2 +  mn_2^2 + n_1^3 + n_2^3)$ and $O(mn_1n_2 + n_1n_2r + n_1^2r+n_2^2r)$, respectively. Assuming that $n_1 \approx n_2$ (on the order of $n$) and $m\geq n$, then the time complexities reduce to $O(mn^2)$. 

For very high dimensional data with weak correlation between features, one can use a feature-wise normalization similar to the batch normalization in deep learning \cite{Ioffe2015}, which computes only the diagonal entries of $\Sigma_a$ and $\Sigma_b$. The time complexity for this simplified data normalization is $O(mn)$.

At the same time, one can apply the randomized SVD algorithm \cite[Section 1.6]{Halko2011}, or a CountSketch based low-rank approximation algorithm \cite[Section 8]{Clarkson2013}, to the superposition of rank-$1$ terms in \eqref{eq:lin} without computing $\hX_0$. The resulting embedding estimation with randomized SVD (resp. CountSketch approximation) has time complexity $O(mnr)$ (resp. $O(mn)$).

Algorithm \ref{alg:jdr_fast} summarizes the fast JDR algorithm with feature-wise normalization and randomized SVD. The submatrix $\hU'^{(:,1:r)}$ (resp. $\hV'^{(:,1:r)}$) contains the first $r$ columns of $\hU'$ (resp. $\hV'$).
\begin{algorithm} \label{alg:jdr_fast}
\DontPrintSemicolon
\SetKwInput{KwPara}{Parameter}
\caption{Fast Joint Dimensionality Reduction}
\KwIn{features $\{a_i\}_{i=1}^m\subset\bbR^{n_1}$, $\{b_i\}_{i=1}^m\subset\bbR^{n_2}$, responses $\{y_i\}_{i=1}^m\subset\bbR$}
\KwPara{$r$}
\textbf{1. Feature-wise data normalization ($j\in[n_1]$, $k\in[n_2]$):} \;
1.1. Sample means: $\mu_a^{(j)} = \frac{1}{m} \sum_{i=1}^m a_i^{(j)}$,~~~$\mu_b^{(k)} = \frac{1}{m} \sum_{i=1}^m b_i^{(k)}$,~~~$\mu_y = \frac{1}{m} \sum_{i=1}^m y_i$ \;
1.2. Sample variances: $(\sigma_a^{(j)})^2 = \frac{1}{m} \sum_{i=1}^m (a_i^{(j)}-\mu_a^{(j)})^2$,~~~$(\sigma_b^{(k)})^2 = \frac{1}{m} \sum_{i=1}^m (b_i^{(k)}-\mu_b^{(k)})^2$ \;
1.3. Normalization: $a_i'^{(j)} = (a_i^{(j)}-\mu_a^{(j)})/\sigma_a^{(j)}$,~~~$b_i'^{(k)} \leftarrow (b_i^{(k)}-\mu_b^{(k)})/\sigma_b^{(k)}$,~~~$y_i' \leftarrow y_i-\mu_y$ \;
\textbf{2. Embedding estimation:} \;
2.1. Generate $n_2\times (2r)$ Gaussian random matrix $S$, and compute randomized proxy: 
\vspace{-0.1in}
\begin{align*}
Z = \frac{1}{m} \sum_{i=1}^{m} a_i' y_i' (b_i'^T S)
\end{align*}\vspace{-0.2in}\;
2.2. Economy size QR decomposition: $QR = \mathrm{QR}(Z)$ \;
2.3. Singular value decomposition: $\hU'\hSigma' \hV'^T = \mathrm{SVD}_{2r}(\sum_{i=1}^{m} (Q^T a_i') y_i' b_i'^T)$ \;
2.4. Embeddings: $\hU = \diag^{-1}[\sigma_a^{(1)},\dots,\sigma_a^{(n_1)}]Q\hU'^{(:,1:r)}$,~~~$\hV=\diag^{-1}[\sigma_b^{(1)},\dots,\sigma_b^{(n_2)}]\hV'^{(:,1:r)}$ \;
\KwOut{$\hU \in \bbR^{n_1\times r}$, $\hV \in \bbR^{n_2\times r}$}
\end{algorithm}

\textbf{Extension to Higher-Order Dimensionality Reduction and Multiple Responses.}
In applications where more than two types of features are present, one needs a higher-order joint dimensionality reduction method. Instead of the proxy matrix (tensor of order $2$) in \eqref{eq:lin}, one can compute a proxy tensor of higher order. For example, for three types of features $a_i\in\bbR^{n_1}$, $b_i\in\bbR^{n_2}$, and $c_i\in\bbR^{n_3}$, the corresponding $n_1\times n_2\times n_3$ tensor is $\frac{1}{m} \sum_{i=1}^{m} y_i a_i \otimes b_i \otimes c_i$, where $\otimes$ denotes the tensor product. To estimate the embedding matrices in higher-order DR, one can apply higher-order singular value decomposition (HOSVD) \cite{Bergqvist2010}. The embedding matrix of one type of features is the HOSVD factor in the mode corresponding to that feature type.

In some applications, the response $y_i$'s are vectors in stead of scalars (e.g., multiclass classification, regression with multiple dependent variables). In this case (assuming two feature vectors), the proxy tensor is computed as $\frac{1}{m} \sum_{i=1}^{m} y_i \otimes a_i \otimes b_i$. Similar to the previous case, the HOSVD factors in modes corresponding to features are the embedding matrices.


\section{Feature Selection}
Instead of extracting features from all variables in the original high dimensional data, one may consider selecting a smaller number of variables for learning. The combination of feature selection and extraction is equivalent to DR with sparsity constraints, which results in more interpretable features. Previous feature selection methods for supervised DR apply only to the case with one feature vector \cite{Li2007,Wang2008,Chen2010}. In this paper, we present a guaranteed greedy feature selection approach for two feature vectors. 

We assume that the embedding matrix $U$ (resp. $V$) in \eqref{eq:markov} has at most $s_1$ (resp. $s_2$) nonzero rows, where $r<s_1<n_1$ and $r<s_2<n_2$. Therefore, only $s_1$ features in $a$ and $s_2$ features in $b$ are active, and they are each reduced to $r$ features in $U^Ta$ and $V^Tb$, respectively. Let $\norm{\cdot}_0$ denote the number of nonzero entries in a vector or a matrix, and let $\norm{\cdot}_{0,r}$ and $\norm{\cdot}_{0,c}$ denote the numbers of nonzero rows and nonzero columns, respectively. We use $X^{(:,k)}$ to denote the $k$-th column of $X$ . The projection of matrix $Y$ onto set $\Omega$ is denoted by $P_{\Omega} Y \coloneqq \arg\min_{X\in\Omega} \norm{X-Y}_F$. Define
\begin{align}\label{eq:Omega}
\begin{split}
& \Omega_1 \coloneqq \{X\in\bbR^{n_1\times n_2}: \norm{X^{(:,k)}}_0\leq s_1,~\forall k\in [n_2]\}, \quad \Omega_2 \coloneqq \{X\in\bbR^{n_1\times n_2}: \norm{X}_{0,c}\leq s_2\}, \\
&\quad \Omega_3 \coloneqq \{X\in\bbR^{n_1\times n_2}: \norm{X}_{0,r}\leq s_1\}, \quad \Omega_r \coloneqq \{X\in\bbR^{n_1\times n_2}: \rank(X)\leq r\}.
\end{split}
\end{align}
We summarize the embedding estimation procedure with feature selection in Algorithm \ref{alg:gfs}.
\begin{algorithm} \label{alg:gfs}
\DontPrintSemicolon
\SetKwInput{KwPara}{Parameter}
\caption{Embedding Estimation with Feature Selection}
\KwIn{features $\{a_i\}_{i=1}^m\subset\bbR^{n_1}$, $\{b_i\}_{i=1}^m\subset\bbR^{n_2}$, responses $\{y_i\}_{i=1}^m\subset\bbR$}
\KwPara{$r$, $s_1$, $s_2$}
1. Compute embedding proxy $\hX_0 = \frac{1}{m} \sum_{i=1}^{m} a_i y_i b_i^T$ \;
2. Compute projections ($\Omega_1$, $\Omega_2$, and $\Omega_3$ are defined in \eqref{eq:Omega}, using $s_1$ and $s_2$): \vspace{-0.05in}
\[
\hX_1 \coloneqq P_{\Omega_1} \hX_0, \quad \hX_2 \coloneqq P_{\Omega_2} \hX_1, \quad \hX_3 \coloneqq P_{\Omega_3} \hX_2
\] \vspace{-0.2in}\;
3. Compute rank-$r$ approximation: $\hU \hSigma \hV^T = \mathrm{SVD}_r(\hX_3)$ \;
\KwOut{$\hU \in \bbR^{n_1\times r}$, $\hV \in \bbR^{n_2\times r}$}
\end{algorithm}

Step 2 in Algorithm \ref{alg:gfs} computes an $(s_1,s_2)$-sparse approximation $\hX_3$ for $\hX_0$, defined as an approximation with $s_1$ nonzero rows and $s_2$ nonzero columns. Finding the best $(s_1,s_2)$-sparse approximation of $\hX_0$ is in general NP-hard \cite{Khot2006}. Therefore, we use the approximate algorithm of sequential projection.
We compute the projection onto $\Omega_1$ by setting to zero all but the $s_1$ largest (in terms of absolute value) entries in each column of $\hX_0$; we compute the projection onto $\Omega_2$ by setting to zero all but the $s_2$ largest (in terms of $\ell_2$ norm) columns in $\hX_1$; we compute the projection onto $\Omega_3$ by setting to zero all but the $s_1$ largest (in terms of $\ell_2$ norm) rows in $\hX_2$.

The estimator satisfies that $\hU\hSigma\hV^T = P_{\Omega_r} P_{\Omega_3}P_{\Omega_2} P_{\Omega_1} \hX_0$, and that $\hU$ (resp. $\hV$) has $s_1$ (resp. $s_2$) nonzero rows. We bound the estimation error in Theorem \ref{thm:gfs}, which yields a sample complexity $m =O(s_1s_2\log n_1 \log n_2)$.

\begin{theorem}\label{thm:gfs}
Suppose the assumptions in Theorem \ref{thm:lin} are satisfied, and $U$ (resp. $V$) has at most $s_1$ (resp. $s_2$) nonzero rows. Then for $n_1,n_2 \geq 8$, the estimated embeddings in Algorithm \ref{alg:gfs} satisfy
\begin{align*}
\max\left\{\bbE\left[d(U,\hU)\right],~ \bbE\left[d(V,\hV)\right] \right\} = O\left(\sqrt{\frac{s_1s_2\log n_1 \log n_2}{m}}\right).
\end{align*}
\end{theorem}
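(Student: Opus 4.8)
# Proof Proposal for Theorem \ref{thm:gfs}

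The plan is to decompose the estimation error into two parts: the error in approximating the population matrix $UQV^T$ by the sparse-projected proxy $\hX_3$, and the error in extracting the subspaces via truncated SVD. The key structural observation is that the true population matrix $\bbE[\hX_0] = UQV^T$ already lies in the intersection $\Omega_1 \cap \Omega_2 \cap \Omega_3 \cap \Omega_r$, because $U$ has at most $s_1$ nonzero rows and $V$ has at most $s_2$ nonzero rows, so $UQV^T$ has at most $s_1$ nonzero rows and at most $s_2$ nonzero columns (hence each column has at most $s_1$ nonzero entries). This means we are in the setting of estimating a structured low-rank matrix by sequential projections of a noisy observation, and the analysis from the work of Plan et al.\ that the paper builds on should apply, suitably adapted.

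First I would establish a perturbation bound: if $\|\hX_0 - UQV^T\|$ (in spectral norm, restricted appropriately) is small, then $d(U,\hU)$ and $d(V,\hV)$ are controlled. The sequential projection $P_{\Omega_r}P_{\Omega_3}P_{\Omega_2}P_{\Omega_1}$ is non-expansive onto each individual set, but the composition is not a projection onto the intersection; however, since the target $UQV^T$ is a fixed point of each $P_{\Omega_i}$, a standard argument gives $\|\hX_3 - UQV^T\|_\rmF \leq c\,\|\hX_0 - UQV^T\|_\rmF^{(\Omega)}$, where the right side is measured only on the relevant sparse support pattern — i.e., what matters is the restricted spectral/Frobenius norm of the noise $E \coloneqq \hX_0 - UQV^T$ over index sets of size $s_1 \times s_2$. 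Then a Wedin/Davis–Kahan-type bound converts $\|\hX_3 - UQV^T\|$ into a subspace error bound, using non-singularity of $Q$ (which controls the smallest singular value $\sigma_r(UQV^T) = \sigma_r(Q)$) exactly as in Theorem \ref{thm:lin}.

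Second, the probabilistic heart of the proof is to bound $\bbE$ of the restricted norm of $E = \frac1m\sum_i a_i y_i b_i^T - \bbE[a_i y_i b_i^T]$ over all $s_1 \times s_2$ submatrices. For a fixed support $(S_1, S_2)$ with $|S_1| = s_1$, $|S_2| = s_2$, the submatrix $E_{S_1,S_2}$ is a sum of $m$ i.i.d.\ centered random matrices with finite second moments (by the moment hypotheses inherited from Theorem \ref{thm:lin}), so a matrix concentration argument (matrix Bernstein, or a truncation-plus-Bernstein for the heavier-tailed terms) gives $\bbE\|E_{S_1,S_2}\| = O(\sqrt{(s_1+s_2)/m})$ up to logs. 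Taking a union bound over the $\binom{n_1}{s_1}\binom{n_2}{s_2} \leq n_1^{s_1} n_2^{s_2}$ choices of support inflates this by $\sqrt{s_1 \log n_1 + s_2 \log n_2}$, yielding the target rate $O(\sqrt{s_1 s_2 \log n_1 \log n_2 / m})$ once the product structure of the two sparsity patterns is accounted for. The condition $n_1, n_2 \geq 8$ is presumably what makes the logarithmic factors and the union-bound constants clean.

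The main obstacle I anticipate is the sequential (rather than simultaneous) nature of the projections in Step 2 of Algorithm \ref{alg:gfs}: because $P_{\Omega_1}, P_{\Omega_2}, P_{\Omega_3}$ are applied one after another, the output $\hX_3$ need not be the best $(s_1,s_2)$-sparse approximation, and one must argue carefully that each projection step does not increase the distance to the fixed target $UQV^T$ by more than a constant factor — this is where the fact that $UQV^T$ is itself invariant under all three projections is essential, and making the constant explicit (and controlling how the restricted-norm support "grows" to size at most $2s_1 \times 2s_2$ after one projection interacts with the true support) is the delicate bookkeeping. A secondary technical point is handling the possibly heavy-tailed summands $a_i y_i b_i^T$ under only a finite-second-moment assumption, which requires a truncation argument to apply matrix Bernstein and then controlling the truncation bias in expectation; this is routine but must be done with care to keep the logarithmic factors as stated.
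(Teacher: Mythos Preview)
Your high-level plan --- noting that $UQV^T$ is a fixed point of each $P_{\Omega_i}$, using constant-factor doubling for the sequential projections, and finishing with a Davis--Kahan/Wedin step via $\sigma_r(Q)$ --- matches the paper's architecture. But the probabilistic core of your proposal is different from the paper's and has a real gap.

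The paper does \emph{not} take a union bound over $s_1\times s_2$ submatrices plus matrix Bernstein. Instead it first observes that $P_{\Omega_2}P_{\Omega_1}=P_{\Omega_{12}}$ exactly (with $\Omega_{12}=\Omega_1\cap\Omega_2$), then invokes the Plan et~al.\ cone-projection lemma to get $\|\hX_2-UQV^T\|_\rmF\le 2\|\hX_0-UQV^T\|_{\Delta_{12}^\circ}$, where $\Delta_{12}=(\Omega_{12}-\Omega_{12})\cap\calB$. The crucial estimate is the elementary bound $\|Y\|_{\Delta_{12}^\circ}\le\sqrt{2s_1s_2}\,\max_{j,k}|Y^{(j,k)}|$, which reduces everything to an \emph{entrywise maximum} rather than a restricted spectral norm. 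That maximum is then controlled by decomposing along $U,\tU,V,\tV$ (so that the ``noise'' parts $\tU\ta_i,\tV\tb_i$ are independent of $y_i$), and applying a two-stage Gaussian-maximum bound: condition on $\{y_i,v_i\}$ to get a factor $\sqrt{\log n_1}$, then on $\{y_i\}$ to get $\sqrt{\log n_2}$. This uses only $\bbE[y_i^2]<\infty$, and the product $\log n_1\log n_2$ in the rate is precisely the signature of this iterated conditioning. The subsequent projections onto $\Omega_3$ and $\Omega_r$ are handled by simple triangle-inequality doubling, as you suggest.

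Your matrix-Bernstein route runs into trouble because Theorem~\ref{thm:gfs} inherits only the finite-second-moment hypotheses from Theorem~\ref{thm:lin} --- there is \emph{no} light-tail assumption here (that is Theorem~\ref{thm:optimal}). Bernstein-type concentration does not apply to $a_iy_ib_i^T$ without truncation, and pushing a truncation through a union bound over $\exp(s_1\log n_1+s_2\log n_2)$ events when you only control $\bbE[y^2]$ will not give the stated expectation bound; the ``routine'' truncation you allude to is not routine in this regime. Moreover, your arithmetic does not land on the target: $\sqrt{(s_1+s_2)/m}$ inflated by $\sqrt{s_1\log n_1+s_2\log n_2}$ gives $\sqrt{(s_1+s_2)(s_1\log n_1+s_2\log n_2)/m}$, not $\sqrt{s_1s_2\log n_1\log n_2/m}$. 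The missing idea is to replace the submatrix-spectral-norm union bound by the entrywise-max bound and then exploit the Gaussianity of $a_i,b_i$ through conditioning; this is what makes a second-moment assumption on $y$ sufficient and produces the $\log n_1\log n_2$ product.
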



\section{Application to Gene-Disease Association} \label{sec:application}

We apply the JDR method to predicting gene-disease associations \cite{Wu2008,Natarajan2014}. 
Suppose each disease $D_i$ ($i\in [m_D]$) has a feature vector $a_i\in\bbR^{n_D}$, and each gene $G_j$ ($j\in [m_G]$) has a feature vector $b_j\in\bbR^{n_G}$. 
Given an observed set $\Omega_\mathrm{obs} \subset [m_D]\times [m_G]$, with the gene-disease association score $y_{i,j}$ between $D_i$ and $G_j$ for each $(i,j) \in \Omega_\mathrm{obs}$, 
the goal is to predict the unknown associations in the unobserved set $\Omega_\mathrm{uno} = ([m_D]\times [m_G]) \backslash  \Omega_\mathrm{obs}$. This problem is an example of learning from dyadic data \cite{Hofmann1998}, which also arises in information retrieval, computational linguistics, and computer vision.

We propose to predict the unobserved gene-disease associations by combining JDR with kernel regression (JDR + KR). 
First, we apply Algorithm \ref{alg:jdr}, and compute the embedding proxy with normalized data, as $\hX_0 = \frac{1}{|\Omega_\mathrm{obs}|}\sum_{(i,j)\in \Omega_\mathrm{obs}} a_i'y_i'b_i'^T$. 
The embedded features for diseases and genes are computed as $a_i'' = \hSigma'^{1/2}\hU'^T a_i'$ and $b_i'' =\hSigma'^{1/2}\hV'^T b_i'$, respectively, where $\hU'$, $\hV'$, and $\hSigma'$ are defined in Step 2.2 in Algorithm \ref{alg:jdr}. 
Here we scale the $r$-dimensional embedded features with $\hSigma'^{1/2}$, therefore more prominent dimensionality reduction directions are given higher weights. 
Finally, we use Nadaraya-Watson kernel regression to predict unknown association scores. For $(i, j)\in \Omega_\mathrm{uno}$, the predicted score is
\vspace{-0.1in}
\[
\hat{y}_{i,j} = \frac{\sum_{(p,q) \in \Omega_\mathrm{obs}} K_{h_D}(a_i''-a_p'')K_{h_G}(b_j''-b_q'') y_{p,q}}{\sum_{(p,q) \in \Omega_\mathrm{obs}} K_{h_D}(a_i''-a_p'')K_{h_G}(b_j''-b_q'')},
\]
where $K_{h}(\cdot)$ denotes the Gaussian kernels in $\bbR^{r}$ with bandwidths $h$. 

Prior works applied network analysis to gene-disease prediction \cite{Barabasi2011, SinghBlom2013}, which have limited capability of making predictions for new diseases that do not belong to the networks. 
The recent method of inductive matrix completion (IMC) is based on bilinear regression, and demonstrated state-of-the-art gene-disease prediction performance \cite{Natarajan2014}. 
Compared to the parametric IMC model with explicit link function and loss function, the proposed prediction framework of dimensionality reduction plus non-parametric learning leads to better performance (see Section \ref{sec:gda}).


\section{Experiments} \label{sec:exp}

\subsection{Synthetic Data} \label{sec:synthetic}
In this section, we verify our theoretical analysis by examining the error in the estimated embedding matrices in Algorithms \ref{alg:jdr} and \ref{alg:gfs} with some numerical experiments. We use the normalized subspace estimation error (NSEE), defined by $\max\left\{d(U, \hU)/\sqrt{r},~ d(V,\hV)/\sqrt{r}\right\}$. We run experiments on two different models, both of which satisfy the light-tailed response condition: \\
\emph{(Bilinear model)} Let $\mu_i = f(U^Ta_i,V^Tb_i) = a_i^TUV^T b_i$, and $y_i = \mu_i+z_i$, where $\{z_i\}_{i=1}^m$ are i.i.d. Gaussian random variables $N(0,1)$.\\
\emph{(RBF model)} Let $\mu_i = f(U^Ta_i,V^Tb_i) = \exp(-\|U^T a_i-V^T b_i\|_2^2)$, and $y_i\sim\operatorname{Ber}(\mu_i)$ is a Bernoulli random variable with mean $\mu_i$.

In the experiments, we synthesize $a_i$ and $b_i$ using standard Gaussian random vectors, and the data normalization step is deactivated. 
We let $n_1=n_2\eqqcolon n$, $s_1=s_2 \eqqcolon s$, $r=5$, and synthesize true embeddings $U$ and $V$, each with $5$ orthonormal columns.  For each of the two models, we conduct four experiments. For Algorithm \ref{alg:jdr}, we fix the original feature dimension $n$ (resp. the sample size $m$), and study how error varies with $m$ (resp. $n$). For Algorithm \ref{alg:gfs} with feature selection, we fix $n$ and the sparsity level $s$ (resp. $n$ and $m$), and study how error varies with $m$ (resp. $s$). We repeat every experiment $100$ times, and show in Figure \ref{fig:log-log} the log-log plot (natural logarithm) of the mean NSEE versus $m$, $n$ or $s$. The results for the two models are roughly the same, which verifies that our algorithm and theory apply to different link functions or conditional distributions. The slopes of the plots in the first and third columns are roughly $-0.5$, which verifies the term $O(1/\sqrt{m})$ in the error bounds. The slopes of the plots in the second column are roughly $0.5$, which verifies the term $O(\sqrt{n_1+n_2})=O(\sqrt{n})$ in the error bound in Theorem \ref{thm:optimal}. The slopes of the plots in the fourth column are roughly $1$, which verifies the term $O(\sqrt{s_1s_2})=O(s)$ in the error bound in Theorem \ref{thm:gfs}. 

\begin{figure}[tb]
\centering
%
%
\begin{tikzpicture}[scale = 0.7]

\begin{axis}[%
width=1.268in,
height=1in,
at={(0in,0in)},
scale only axis,
xmin=6.90775527898214,
xmax=9.21034037197618,
xlabel={$\log(m)$},
ymin=-1.65133908669857,
ymax=0.164732059227385,
ylabel={$\log(\mathrm{NSEE})$},
axis background/.style={fill=white}
]
\addplot [color=blue,solid,line width=2.0pt,forget plot]
  table[row sep=crcr]{%
6.90775527898214	-0.215194441711561\\
7.60090245954208	-0.488714027390471\\
8.51719319141624	-0.929333134724102\\
9.21034037197618	-1.27141258575962\\
};
\end{axis}
\end{tikzpicture}
%
%
\begin{tikzpicture}[scale = 0.7]

\begin{axis}[%
width=1.268in,
height=1in,
at={(0in,0in)},
scale only axis,
xmin=3.91202300542815,
xmax=6.21460809842219,
xlabel={$\log(n)$},
ymin=-1.96308556601509,
ymax=-0.147014420089143,
ylabel={$\log(\mathrm{NSEE})$},
axis background/.style={fill=white}
]
\addplot [color=blue,solid,line width=2.0pt,forget plot]
  table[row sep=crcr]{%
3.91202300542815	-1.63435455894625\\
4.60517018598809	-1.2740427772473\\
5.29831736654804	-0.923316299740511\\
6.21460809842219	-0.475745427157983\\
};
\end{axis}
\end{tikzpicture}
%
%
\begin{tikzpicture}[scale = 0.7]

\begin{axis}[%
width=1.268in,
height=1in,
at={(0in,0in)},
scale only axis,
xmin=9.21034037197618,
xmax=11.5129254649702,
xlabel={$\log(m)$},
ymin=-4.08349742591548,
ymax=-2.26742627998953,
ylabel={$\log(\mathrm{NSEE})$},
axis background/.style={fill=white}
]
\addplot [color=blue,solid,line width=2.0pt,forget plot]
  table[row sep=crcr]{%
9.21034037197618	-2.55955806313597\\
9.90348755253613	-2.93485215962578\\
10.8197782844103	-3.39577582809748\\
11.5129254649702	-3.79136564276904\\
};
\end{axis}
\end{tikzpicture}
%
%
\begin{tikzpicture}[scale = 0.7]

\begin{axis}[%
width=1.268in,
height=1in,
at={(0in,0in)},
scale only axis,
xmin=2.23694911729812,
xmax=4.67080616168402,
xlabel={$\log(s)$},
ymin=-3.33684670266371,
ymax=-1.41724009830129,
ylabel={$\log(\mathrm{NSEE})$},
axis background/.style={fill=white}
]
\addplot [color=blue,solid,line width=2.0pt,forget plot]
  table[row sep=crcr]{%
2.30258509299405	-3.33684670266371\\
2.99573227355399	-2.68991573176776\\
3.91202300542815	-2.0295046437283\\
4.60517018598809	-1.41724009830129\\
};
\end{axis}
\end{tikzpicture}
\vspace{-0.05in}
%
%
\begin{tikzpicture}[scale = 0.7]

\begin{axis}[%
width=1.268in,
height=1in,
at={(0in,0in)},
scale only axis,
xmin=9.90348755253613,
xmax=12.2060726455302,
xlabel={$\log(m)$},
ymin=-3.82992250251224,
ymax=-2.01385135658629,
ylabel={$\log(\mathrm{NSEE})$},
axis background/.style={fill=white}
]
\addplot [color=blue,solid,line width=2.0pt,forget plot]
  table[row sep=crcr]{%
9.90348755253613	-2.34853472496215\\
10.8197782844103	-2.8173177878998\\
11.5129254649702	-3.14841200116734\\
12.2060726455302	-3.49523913413638\\
};
\end{axis}
\end{tikzpicture}
%
%
\begin{tikzpicture}[scale = 0.7]

\begin{axis}[%
width=1.268in,
height=1in,
at={(0in,0in)},
scale only axis,
xmin=2.30258509299405,
xmax=4.60517018598809,
xlabel={$\log(n)$},
ymin=-3.47200059299522,
ymax=-1.65592944706927,
ylabel={$\log(\mathrm{NSEE})$},
axis background/.style={fill=white}
]
\addplot [color=blue,solid,line width=2.0pt,forget plot]
  table[row sep=crcr]{%
2.30258509299405	-3.18594329230943\\
2.99573227355399	-2.78266797880402\\
3.91202300542815	-2.3097815954959\\
4.60517018598809	-1.94198674775507\\
};
\end{axis}
\end{tikzpicture}
%
%
\begin{tikzpicture}[scale = 0.7]

\begin{axis}[%
width=1.268in,
height=1in,
at={(0in,0in)},
scale only axis,
xmin=12.2060726455302,
xmax=14.5086577385242,
xlabel={$\log(m)$},
ymin=-4.80549659137636,
ymax=-2.98942544545042,
ylabel={$\log(\mathrm{NSEE})$},
axis background/.style={fill=white}
]
\addplot [color=blue,solid,line width=2.0pt,forget plot]
  table[row sep=crcr]{%
12.2060726455302	-3.1982945629925\\
13.1223633774043	-3.79488492192837\\
13.8155105579643	-4.14964904055467\\
14.5086577385242	-4.59662747383429\\
};
\end{axis}
\end{tikzpicture}
%
%
\begin{tikzpicture}[scale = 0.7]

\begin{axis}[%
width=1.268in,
height=1in,
at={(0in,0in)},
scale only axis,
xmin=1.49252226458343,
xmax=4.02893865327882,
xlabel={$\log(s)$},
ymin=-4.6840019801908,
ymax=-2.6835058284617,
ylabel={$\log(\mathrm{NSEE})$},
axis background/.style={fill=white}
]
\addplot [color=blue,solid,line width=2.0pt,forget plot]
  table[row sep=crcr]{%
1.6094379124341	-4.6840019801908\\
2.30258509299405	-4.17287904031326\\
2.99573227355399	-3.4981562412418\\
3.91202300542815	-2.6835058284617\\
};
\end{axis}
\end{tikzpicture}%
\vspace{-0.25in}
\caption{Log-log plots of mean NSEE versus $m$, $n$ or $s$. The two rows are for the bilinear model and the RBF model, respectively. Within each row, the four plots correspond to the four experiments.}
\label{fig:log-log}
\end{figure}
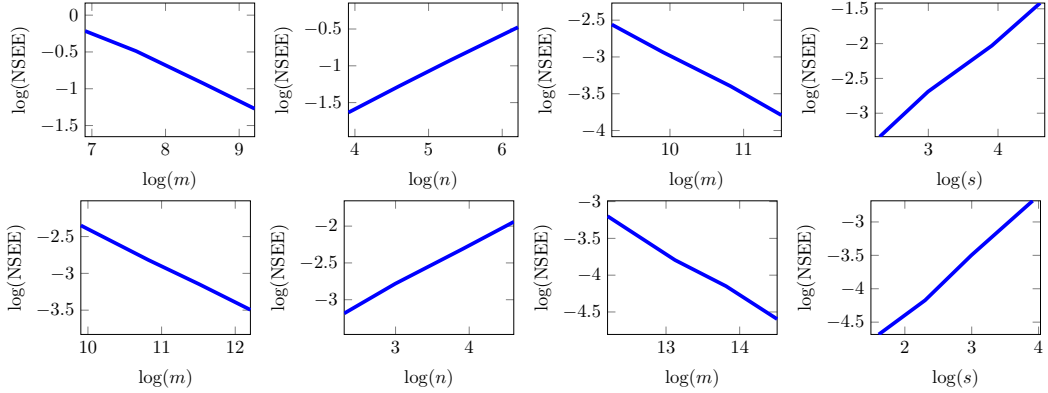

\begin{figure}[tb]
\centering
\subfigure{
\includegraphics[width=0.44\textwidth]{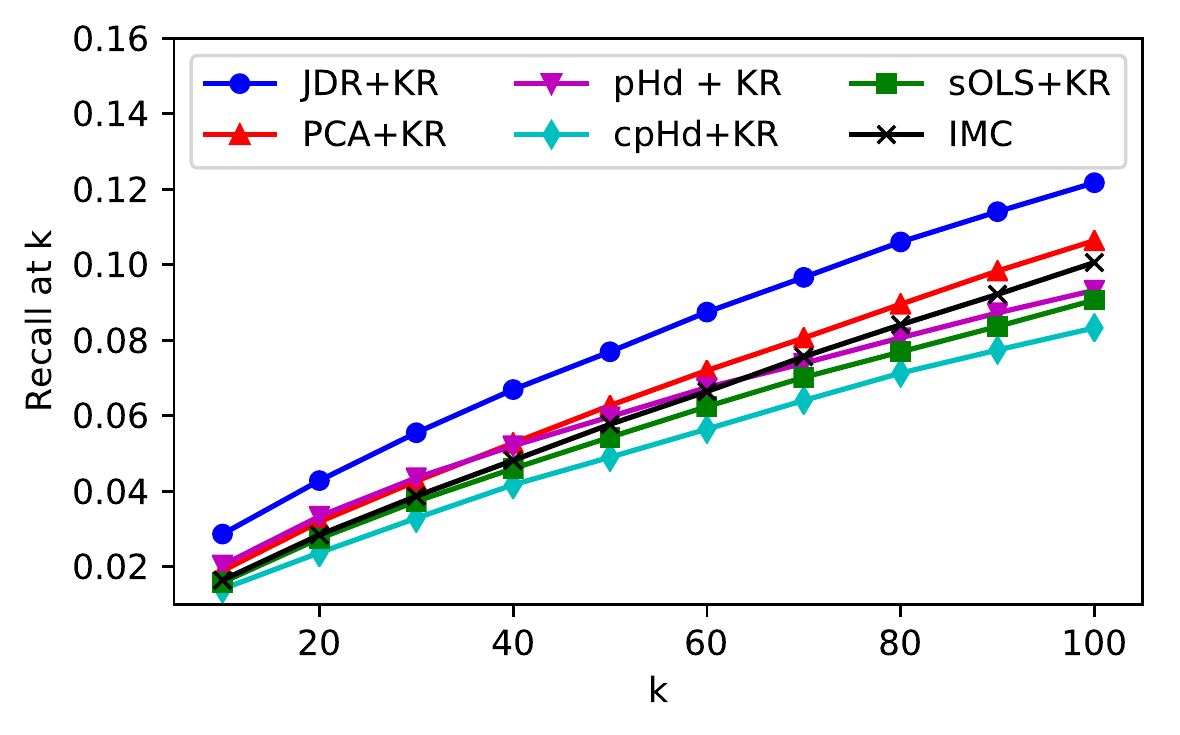}
}~~
\subfigure{
\includegraphics[width=0.44\textwidth]{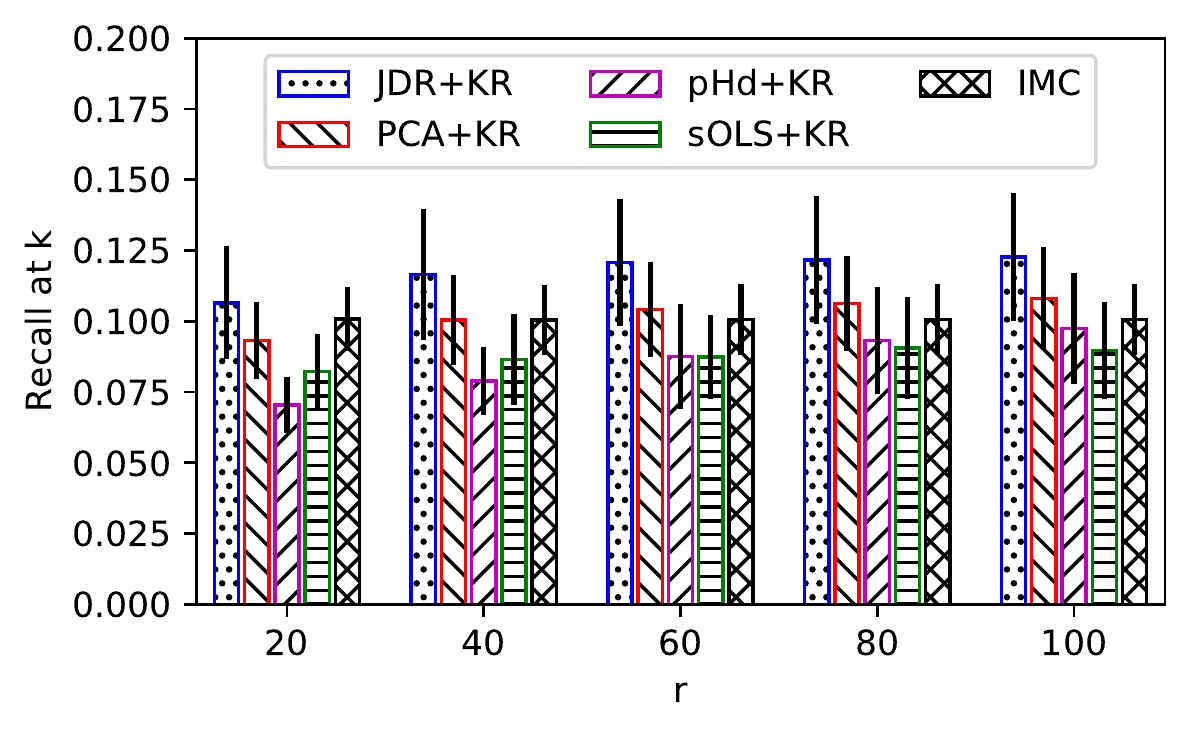}
}
\vspace{-0.15in}
\caption{Recall rates of different gene-disease association methods.}
\vspace{-0.15in}
\label{fig:linebar}
\end{figure}

\subsection{Gene-Disease Association} \label{sec:gda}

\textbf{Experiment Setup.}
In our experiments, $n_D=200$ features of $m_D=2358$ diseases are extracted via spectral embedding \cite{Ng2002} of the \emph{MimMiner} similarity matrix between diseases \cite{vanDriel2006}. Similarly, $n_G=200$ features of $m_G=2020$ genes are extracted via spectral embedding of the \emph{HumanNet} gene network \cite{Lee2011}. Associations between diseases and genes are obtained from the database \emph{DisGeNET} \cite{Pinero2016}, which assigns association scores between $0$ and $1$ (larger scores mean stronger associations) to $40854$ gene-disease pairs (among the $2358\times 2020$ pairs we study in this experiment). For simplicity, we set the scores not present in \emph{DisGeNET} to $0$, an approach also adopted by prior work \cite{Natarajan2014}. 

We randomly partition the $2358$ diseases into two sets. All association scores of the first set of $2122$ diseases are observed, constituting the training set $\Omega_\mathrm{obs}$. All association scores of the other $236$ diseases are unobserved, constituting the test set $\Omega_\mathrm{uno}$. One such random partition is used to tune the hyperparameters of the learning models. Another $20$ random partitions are created, over which the performances of the models are evaluated. For any disease in the test set, we retrieve the top $k$ genes based on their predicted association scores, and compute recall at $k$, i.e., the number of correctly retrieved genes divided by all genes with positive association scores \cite{Natarajan2014}.

\textbf{Competing Methods.} We compare the proposed JDR+KR scheme to IMC \cite{Natarajan2014}, which is a state-of-the-art method for gene-disease association prediction, based on bilinear regression. 
Furthermore, to demonstrate the advantage of \emph{JDR}, we also embed gene and disease features for kernel regression using other DR methods: (a) We apply \emph{PCA} and \emph{pHd} \cite{Li1992}, which represent unsupervised and supervised DR methods respectively, and compute $r$-dimensional embeddings for $a$ and $b$ separately; (b) We apply pHd to find a $2r$-dimensional embedding for the concatenated vector $[a^T,b^T]^T$, dubbed \emph{cpHd}; (c) We apply sOLS \cite{Liu2016}, which represents existing DR methods that learns separable embeddings.

Next, we compare the computational cost of different approaches. The training procedure of the parametric model IMC is more expensive than that of a non-parametric approach, which computes the embedding matrices. On the other hand, the inference procedure of predicting an unknown response using IMC is very efficient (only involving an inner product between vectors of length $r$). Inference using kernel regression requires multiple evaluations of the kernel, and is more expensive. In particular, the number of kernel evaluations required to predict one response $\hat{y}_{i,j}$ is $m_D + m_G$ for separable DR methods (JDR, PCA, pHd, and sOLS), and is $m_Dm_G$ for cpHd.

\textbf{Prediction Results.} Figure \ref{fig:linebar} shows the prediction error of different methods. The plot on the left shows the mean recall at $k=10,20,\dots,100$, for a fixed $r=80$. 
The bar graph on the right shows the mean and standard deviation (error bar) of the recall at $k=100$, for $r=20, 40,\dots,100$. (We exclude cpHd from the bar graph, due to its extremely expensive inference procedure.)

The prediction performance of the proposed JDR+KR approach is significantly better than the bilinear regression approach IMC, especially when the reduced feature dimension $r$ is between $40$ and $100$. The recall of top $k$ genes obtained using our approach is consistently higher than IMC at different $k$. Our experiment demonstrates that when appropriate features correlated with the responses are extracted, non-parametric learning can outperform conventional parametric models. Dimensionality reduction improves the computational efficiency of non-parametric methods, and prevents overfitting.
Furthermore, JDR is crucial to the success of our approach. Replacing JDR with PCA, pHd, cpHd, or sOLS leads to performances that are comparable to or worse than IMC. This means the features extracted by JDR are indeed highly correlated with association scores between diseases and gens, and hence verifies the efficacy of the JDR algorithm.

\bibliographystyle{myIEEEtran}
\bibliography{nips2017}

\newpage

\begin{center}
\Large
Joint Dimensionality Reduction\\
for Separable Embedding Estimation\\
\textbf{Supplementary Material}
\end{center}

\section{Performance of Algorithm \ref{alg:jdr} under Non-Ideal Conditions}

In this section, we test how Algorithm \ref{alg:jdr} performs when the assumptions in Theorem \ref{thm:lin} are violated, i.e., when 1) sample means and covariances are used in data normalization, or 2) the entries of $a_i,b_i$ are i.i.d. following a uniform distribution on $[-\sqrt{3},\sqrt{3}]$, or 3) the entries of $a_i,b_i$ are i.i.d. following a Poisson distribution ($\lambda = 4$, normalized with zero mean and unit variance), or 4) $a_i,b_i$ are jointly Gaussian and slightly correlated (not independent). For the bilinear model, the log-log plots (natural logarithm) of mean NSEE versus $m$ are shown in Figure \ref{fig:violate}. There is no significant change in the performance, hence our algorithms are robust against slight violations of ideal assumptions.

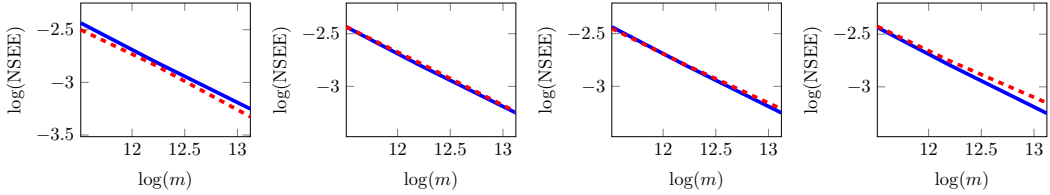
\begin{figure}[htbp]
\centering
%
%
\begin{tikzpicture}[scale = 0.7]

\begin{axis}[%
width=1.268in,
height=1in,
at={(0in,0in)},
scale only axis,
xmin=11.5129254649702,
xmax=13.1223633774043,
xlabel={$\log(m)$},
ymin=-3.51575278132376,
ymax=-2.24637352458138,
ylabel={$\log(\mathrm{NSEE})$},
axis background/.style={fill=white}
]
\addplot [color=blue,solid,line width=2.0pt,forget plot]
  table[row sep=crcr]{%
11.5129254649702	-2.43563598165736\\
12.2060726455302	-2.79689097157546\\
13.1223633774043	-3.25176321291227\\
};
\addplot [color=red,dashed,line width=2.0pt,forget plot]
  table[row sep=crcr]{%
11.5129254649702	-2.50058616160947\\
12.2060726455302	-2.82956987009585\\
13.1223633774043	-3.32649032424779\\
};
\end{axis}
\end{tikzpicture}
%
%
\begin{tikzpicture}[scale = 0.7]

\begin{axis}[%
width=1.268in,
height=1in,
at={(0in,0in)},
scale only axis,
xmin=11.5129254649702,
xmax=13.1223633774043,
xlabel={$\log(m)$},
ymin=-3.47801872363627,
ymax=-2.20863946689389,
ylabel={$\log(\mathrm{NSEE})$},
axis background/.style={fill=white}
]
\addplot [color=blue,solid,line width=2.0pt,forget plot]
  table[row sep=crcr]{%
11.5129254649702	-2.43563598165736\\
12.2060726455302	-2.79689097157546\\
13.1223633774043	-3.25176321291227\\
};
\addplot [color=red,dashed,line width=2.0pt,forget plot]
  table[row sep=crcr]{%
11.5129254649702	-2.43489497761789\\
12.2060726455302	-2.77584270005024\\
13.1223633774043	-3.24120683231683\\
};
\end{axis}
\end{tikzpicture}
%
%
\begin{tikzpicture}[scale = 0.7]

\begin{axis}[%
width=1.268in,
height=1in,
at={(0in,0in)},
scale only axis,
xmin=11.5129254649702,
xmax=13.1223633774043,
xlabel={$\log(m)$},
ymin=-3.478389225656,
ymax=-2.20900996891362,
ylabel={$\log(\mathrm{NSEE})$},
axis background/.style={fill=white}
]
\addplot [color=blue,solid,line width=2.0pt,forget plot]
  table[row sep=crcr]{%
11.5129254649702	-2.43563598165736\\
12.2060726455302	-2.79689097157546\\
13.1223633774043	-3.25176321291227\\
};
\addplot [color=red,dashed,line width=2.0pt,forget plot]
  table[row sep=crcr]{%
11.5129254649702	-2.45040266370146\\
12.2060726455302	-2.79241663529589\\
13.1223633774043	-3.21464610661094\\
};
\end{axis}
\end{tikzpicture}
%
%
\begin{tikzpicture}[scale = 0.7]

\begin{axis}[%
width=1.268in,
height=1in,
at={(0in,0in)},
scale only axis,
xmin=11.5129254649702,
xmax=13.1223633774043,
xlabel={$\log(m)$},
ymin=-3.47333476862539,
ymax=-2.20395551188301,
ylabel={$\log(\mathrm{NSEE})$},
axis background/.style={fill=white}
]
\addplot [color=blue,solid,line width=2.0pt,forget plot]
  table[row sep=crcr]{%
11.5129254649702	-2.43563598165736\\
12.2060726455302	-2.79689097157546\\
13.1223633774043	-3.25176321291227\\
};
\addplot [color=red,dashed,line width=2.0pt,forget plot]
  table[row sep=crcr]{%
11.5129254649702	-2.42552706759612\\
12.2060726455302	-2.75601536346579\\
13.1223633774043	-3.15223244932912\\
};
\end{axis}
\end{tikzpicture}%
\caption{Log-log plots when ideal assumptions are violated. The blue solid lines are the performances when all the assumptions are met. The red dashed lines are the performances when some assumption is violated: 1) sample means and variances are used; 2) $a_i,b_i$ follow uniform distribution; 3) $a_i,b_i$ follow Poisson distribution; 4) $a_i$ and $b_i$ are weakly correlated.
}
\label{fig:violate}
\end{figure}

\section{Additional Results on Gene-Disease Association}

\textbf{Time Complexity of Different Approaches.} In our experiments, the numbers $m_D$ and $m_G$ of diseases and genes (a few thousands) are larger than the numbers $n_D$ and $n_G$ of disease and gene features (a few hundreds). The computational cost of the three stages (computing the embedding matrices, computing the low-dimensional features, and predicting one association score) of different approaches are summarized in Table \ref{tab:cost}. Since evaluating the kernel in kernel regression, although has a time complexity of $O(r)$, is usually much more expensive than $r$ floating point number additions and multiplications in matrix (vector) product, we quantify the cost of predicting one association score using the number of kernel evaluations. Clearly the embedding estimation of IMC is more expensive, but its inference is much faster, which only involves an inner product. Among different DR+KR approaches, the separable embeddings (JDR, PCA, pHd, and sOLS) are more efficient than cpHd, in terms of computing the embedded features and making new predictions.  

\begin{table}[ht]%
\newcolumntype{L}[1]{>{\raggedright\let\newline\\\arraybackslash}m{#1}}
\begin{tabular}{L{1.8cm} L{4cm} L{3.5cm} L{2.8cm}}
\hline
Approach & Estimating Embeddings$^*$ & Computing Features & Predicting One Association Score \\
\hline
JDR + KR & $O(m_Dm_G\min\{n_D, n_G\}+m_Dn_D^2+m_Gn_G^2)$ & $O(m_Dn_Dr+m_Gn_Gr)$ & $m_D + m_G$ \textbf{kernel evaluations} \\
PCA + KR & $O(m_Dn_D^2+m_Gn_G^2)$ & $O(m_Dn_Dr+m_Gn_Gr)$ & $m_D + m_G$ \textbf{kernel evaluations} \\
pHd + KR & $O(m_Dm_G+m_Dn_D^2+m_Gn_G^2)$ & $O(m_Dn_Dr+m_Gn_Gr)$ & $m_D + m_G$ \textbf{kernel evaluations} \\
sOLS + KR & $O(m_Dm_G+m_Dn_D^2+m_Gn_G^2)$ & $O(m_Dn_Dr+m_Gn_Gr)$ & $m_D + m_G$ \textbf{kernel evaluations} \\
cpHd + KR & $O(m_Dm_G\min\{n_D, n_G\}+m_Dn_D^2+m_Gn_G^2)$ & $O(m_Dm_Gn_Dr+m_Dm_Gn_Gr)$ & $m_D m_G$ \textbf{kernel evaluations} \\
IMC & $O(m_Dm_Gn_Dr+m_Dm_Gn_Gr)$ \textbf{per iteration} & $O(m_Dn_Dr+m_Gn_Gr)$ & $1$ \textbf{inner product} \\
\hline
\end{tabular}
\caption{The computational costs of different approaches of gene-disease association. $^*$In general, $m_Dm_G$ in the time complexities for embedding estimation can be replaced by the number of nonzero association scores that are observed. The table shows conservative bounds under the assumption that there exist $O(m_Dm_G)$ such association scores.}
\label{tab:cost}
\end{table}

\textbf{Recall Rates for Significant Associations.} In Section \ref{sec:gda}, we treat all nonzero association scores as positive when computing recall. It is also interesting to examine the recall rates for significant associations. The recall rates for gene-disease associations with scores larger than $0.01$ are shown in Figure \ref{fig:linebar2}. The advantage of using JDR+KR is even more pronounced in this case.

\begin{figure}[tb]
\centering
\subfigure{
\includegraphics[width=0.44\textwidth]{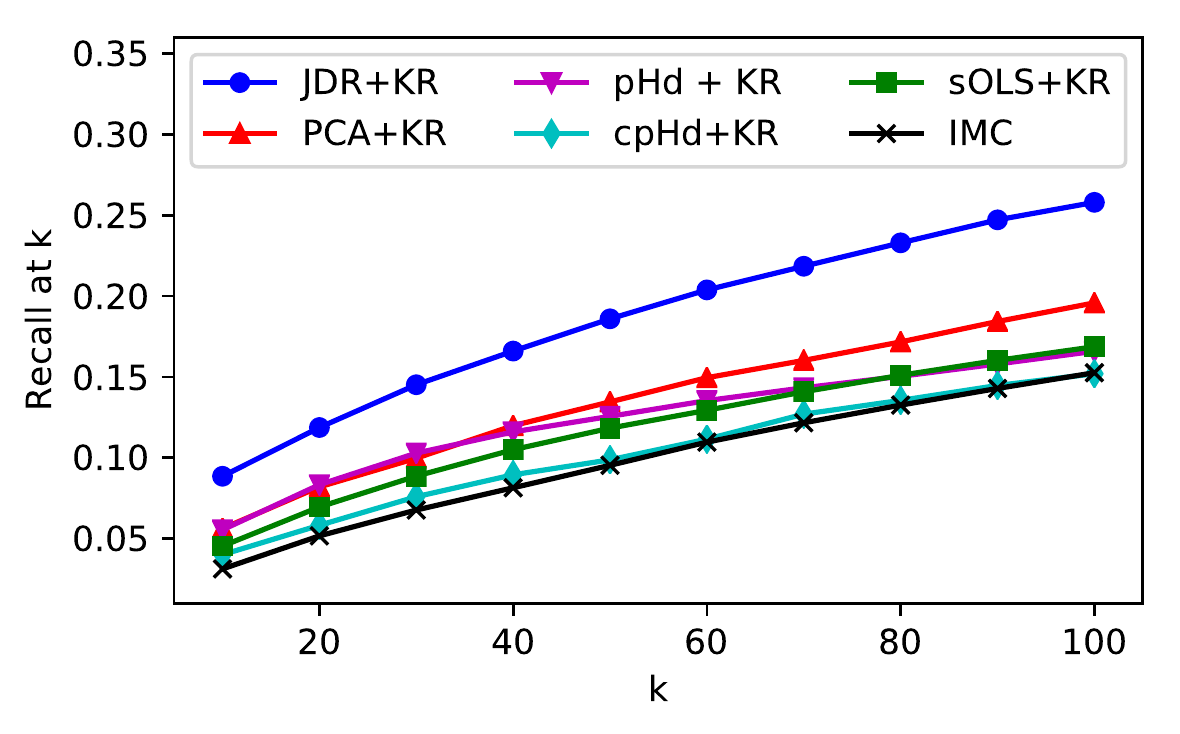}
}~~
\subfigure{
\includegraphics[width=0.44\textwidth]{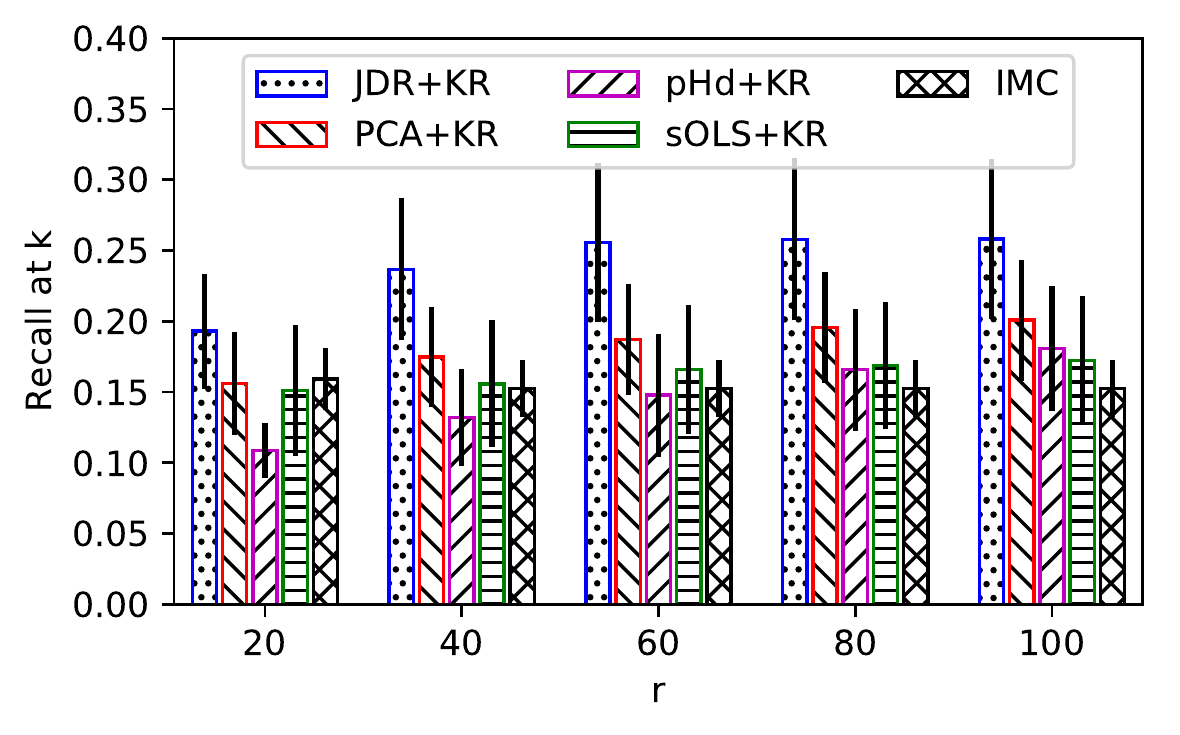}
}
\vspace{-0.15in}
\caption{Recall rates for significant associations of different methods.}
\vspace{-0.15in}
\label{fig:linebar2}
\end{figure}

\section{Proofs}

\subsection{Proof of Theorem \ref{thm:lin}} \label{sec:proof_lin}

Let $\tU\in\bbR^{n_1\times(n_1-r)}$ and $\tV\in\bbR^{n_2\times(n_2-r)}$ be matrices of orthonormal columns that satisfy $U^T\tU=0$, $V^T\tV = 0$, i.e., the columns of $\tU$ and $\tV$ span the orthogonal complements of the subspaces spanned by the columns of $U$ and $V$. Define $\ba_i \coloneqq U^T a_i$, $\ta_i \coloneqq \tU^T a_i$, $\bb_i \coloneqq V^T b_i$, and $\tb_i \coloneqq \tV^T b_i$.
\begin{lemma}\label{lem:r_ind}
$\{\ba_i\}_{i=1}^m$, $\{\ta_i\}_{i=1}^m$, $\{\bb_i\}_{i=1}^m$, and $\{\tb_i\}_{i=1}^m$ are all independent Gaussian random vectors. Moreover, $\ba_i\sim N(0,I_{r})$, $\ta_i\sim N(0,I_{n_1-r})$, $\bb_i\sim N(0,I_{r})$, $\tb_i\sim N(0,I_{n_2-r})$.
\end{lemma}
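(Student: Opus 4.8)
The plan is to reduce everything to a single statement about an orthogonal change of coordinates applied to a jointly Gaussian vector, followed by the standard fact that pairwise-uncorrelated jointly Gaussian blocks are mutually independent.

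First I would record the elementary linear-algebra fact that $W_1 \coloneqq [U\ \tU] \in \bbR^{n_1\times n_1}$ is an orthogonal matrix: $U$ contributes $r$ orthonormal columns, $\tU$ contributes $n_1-r$ orthonormal columns, and $U^T\tU = 0$ by construction, so $W_1^TW_1 = I_{n_1}$, and since $W_1$ is square this makes it orthogonal. Likewise $W_2 \coloneqq [V\ \tV] \in \bbR^{n_2\times n_2}$ is orthogonal. By the definitions of $\ba_i,\ta_i,\bb_i,\tb_i$ we then have $[\ba_i^T,\ta_i^T]^T = W_1^T a_i$ and $[\bb_i^T,\tb_i^T]^T = W_2^T b_i$.

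Next I would stack all the data. Let $g \coloneqq [a_1^T,\dots,a_m^T,b_1^T,\dots,b_m^T]^T$. The i.i.d.\ assumption on $\{a_i\}_{i=1}^m$, the i.i.d.\ assumption on $\{b_i\}_{i=1}^m$, and the mutual independence of the two families give $g \sim N(0,I_{m(n_1+n_2)})$. The concatenation of all the vectors $\ba_i,\ta_i,\bb_i,\tb_i$ ($i\in[m]$) is $\Pi g$ for the block-diagonal matrix $\Pi$ obtained by placing $m$ copies of $W_1^T$ and $m$ copies of $W_2^T$ on the diagonal; hence this concatenation is jointly Gaussian with mean zero and covariance $\Pi\Pi^T$, which is block diagonal with blocks $W_1^TW_1 = I_{n_1}$ and $W_2^TW_2 = I_{n_2}$. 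Reading off the diagonal sub-blocks gives $\ba_i\sim N(0,I_r)$, $\ta_i\sim N(0,I_{n_1-r})$, $\bb_i\sim N(0,I_r)$, $\tb_i\sim N(0,I_{n_2-r})$; reading off the off-diagonal blocks shows that every pair among $\{\ba_i\},\{\ta_i\},\{\bb_i\},\{\tb_i\}$ has zero cross-covariance — including the within-sample pair $\ba_i,\ta_i$, since those occupy disjoint coordinate blocks of $W_1^T a_i$. For a jointly Gaussian collection, pairwise zero cross-covariance of blocks implies their mutual independence, which is exactly the assertion of the lemma.

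The argument is entirely routine; there is no real obstacle. The only two points worth stating carefully are (i) that $[U\ \tU]$ is a genuine $n_1\times n_1$ orthogonal matrix (which is automatic once the dimensions are counted), and (ii) the invocation of the Gaussian fact that uncorrelatedness upgrades to independence.
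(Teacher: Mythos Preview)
Your proof is correct and follows essentially the same approach as the paper: both arguments exploit that $[U\ \tU]$ and $[V\ \tV]$ are orthogonal, compute the (cross-)covariances of the transformed blocks, and invoke the fact that uncorrelated jointly Gaussian vectors are independent. Your version packages this into one block-diagonal orthogonal transformation of a single stacked Gaussian vector, whereas the paper checks the covariances pairwise; the content is the same.
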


\begin{proof}[Proof of Lemma \ref{lem:r_ind}]
Obviously, these vectors are all zero mean Gaussian random vectors. Independence follows from two facts:
\begin{enumerate}
	\item $\{a_i\}_{i=1}^m$ and $\{b_i\}_{i=1}^m$ are independent Gaussian vectors. 
	\item $\bbE[\ba_i \ta_i^T] = \bbE [U^T a_i a_i^T \tU] = U^T I_{n_1} \tU = 0$, and $\bbE[\bb_i \tb_i^T] = \bbE [V^T b_i b_i^T \tV] = V^T I_{n_2} \tV = 0$. (Uncorrelated Gaussian random vector are independent.)
\end{enumerate}
Covariance matrices are easy to compute. For example, $\operatorname{Cov}(\ba_i) = \bbE[\ba_i \ba_i^T] = \bbE [U^T a_i a_i^T U] = U^T I_{n_1} U = I_r$.
\end{proof}

\begin{lemma} \label{lem:independent}
$y_i$ and $\ta_i,\tb_i$ are independent.
\end{lemma}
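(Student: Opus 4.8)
The plan is to exploit the Markov chain \eqref{eq:markov}, which says $y_i$ depends on $(a_i,b_i)$ only through $(U^Ta_i, V^Tb_i) = (\ba_i,\bb_i)$. First I would make this precise: by \eqref{eq:markov}, conditionally on $(\ba_i,\bb_i)$, the response $y_i$ is independent of $(a_i,b_i)$, hence independent of any deterministic function of $(a_i,b_i)$; in particular $y_i \perp (\ta_i,\tb_i) \mid (\ba_i,\bb_i)$. So the conditional law of $y_i$ given $(\ba_i,\bb_i,\ta_i,\tb_i)$ depends only on $(\ba_i,\bb_i)$. Next I would invoke Lemma \ref{lem:r_ind}, which gives that $(\ba_i,\bb_i)$ and $(\ta_i,\tb_i)$ are independent (all four blocks are mutually independent Gaussian vectors).

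The remaining step is a standard measure-theoretic fact: if $y_i \perp (\ta_i,\tb_i) \mid (\ba_i,\bb_i)$ and $(\ba_i,\bb_i) \perp (\ta_i,\tb_i)$, then $y_i \perp (\ta_i,\tb_i)$. Concretely, for bounded measurable test functions $g,h$,
\begin{align*}
\bbE\!\left[g(y_i)\,h(\ta_i,\tb_i)\right]
&= \bbE\!\left[\bbE\!\left[g(y_i)\mid \ba_i,\bb_i,\ta_i,\tb_i\right] h(\ta_i,\tb_i)\right]
= \bbE\!\left[\phi(\ba_i,\bb_i)\,h(\ta_i,\tb_i)\right]\\
&= \bbE\!\left[\phi(\ba_i,\bb_i)\right]\,\bbE\!\left[h(\ta_i,\tb_i)\right]
= \bbE\!\left[g(y_i)\right]\,\bbE\!\left[h(\ta_i,\tb_i)\right],
\end{align*}
where $\phi(\ba_i,\bb_i) \coloneqq \bbE[g(y_i)\mid \ba_i,\bb_i]$ (the conditional expectation collapses to a function of $(\ba_i,\bb_i)$ by the first step), and the third equality uses the independence from Lemma \ref{lem:r_ind}. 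Since this holds for all bounded measurable $g,h$, the pair $(\ta_i,\tb_i)$ is independent of $y_i$.

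I do not expect a serious obstacle here; this is essentially bookkeeping with conditional independence. The only point requiring mild care is the very first reduction — formally extracting from the Markov chain notation \eqref{eq:markov} the statement that $y_i$ is conditionally independent of the full vector $(a_i,b_i)$, and hence of $(\ta_i,\tb_i)$, given $(\ba_i,\bb_i)$ — but this is exactly the definition of the Markov chain as stated in Section \ref{sec:model} (the conditional distributions $y|(a,b)$ and $y|(U^Ta,V^Tb)$ are identical). One should also note that the i.i.d.\ assumption lets us fix a single index $i$ throughout without loss of generality.
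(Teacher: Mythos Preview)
Your proposal is correct and follows essentially the same route as the paper: invoke the Markov chain \eqref{eq:markov} to obtain $y_i \perp (\ta_i,\tb_i) \mid (\ba_i,\bb_i)$, invoke Lemma~\ref{lem:r_ind} for $(\ba_i,\bb_i) \perp (\ta_i,\tb_i)$, and then combine the two via the contraction property of conditional independence. The only cosmetic difference is that the paper verifies the contraction step with a density calculation (assuming $y_i$ is continuous), whereas you give the cleaner test-function argument that works in general.
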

\begin{proof}[Proof of Lemma \ref{lem:independent}]
By Lemma \ref{lem:r_ind}, $\ta_i,\tb_i$ and $\ba_i,\bb_i$ are independent. By the Markov chain assumption \eqref{eq:markov}, $y_i$ and $\ta_i,\tb_i$ are conditionally independent given $\ba_i,\bb_i$. Therefore, by contraction property of conditional independence, $y_i$ and $\ta_i,\tb_i$ are independent.

When $y_i$ is a continuous random variable, the contraction property can be proved as follows:
\begin{align}
p(y_i,\ta_i,\tb_i) =&~ p(y_i,\ta_i,\tb_i | \ba_i,\bb_i)\cdot p(\ba_i,\bb_i) \nonumber\\
=&~ p(y_i| \ba_i,\bb_i) \cdot p(\ta_i,\tb_i | \ba_i,\bb_i)  \cdot p(\ba_i,\bb_i) \nonumber\\
=&~ p(y_i| \ba_i,\bb_i) \cdot p(\ta_i,\tb_i)  \cdot p(\ba_i,\bb_i) \nonumber\\
=&~ p(y_i) \cdot p(\ta_i,\tb_i). \nonumber
\end{align}
The second line follows from the conditional independence of $y_i$ and $(\ta_i,\tb_i)$ given $(\ba_i,\bb_i)$. The third line follows from the independence between $(\ta_i,\tb_i)$ and $(\ba_i,\bb_i)$.
\end{proof}

\textbf{Next, we prove the first part of Theorem \ref{thm:lin}.} Note that
\begin{align}
a_i y_i b_i^T = &~  (UU^T + \tU\tU^T)a_i y_i b_i^T (VV^T+\tV\tV^T)  \nonumber \\
= &~ U\ba_i y_i \bb_i^TV^T + \tU\ta_i y_i \tb_i^T\tV^T + U\ba_i y_i \tb_i^T\tV^T +\tU\ta_i y_i \bb_i^TV^T. \label{eq:split}
\end{align}
Define $Q \coloneqq \bbE\left[\ba_1 f(\ba_1,\bb_1) \bb_1^T \right]$, we have
\begin{align}
\bbE \left[ \hX_0 \right] = &~ \frac{1}{m} \sum_{i=1}^{m}\bbE\left[ a_i y_i b_i^T\right] \nonumber\\
= &~ \bbE\left[a_i y_i b_i^T \right] \nonumber\\
= &~ \bbE\left[U\ba_i y_i \bb_i^TV^T\right] + \bbE\left[\tU\ta_i y_i \tb_i^T\tV^T\right] + \bbE\left[U\ba_i y_i \tb_i^T\tV^T\right] + \bbE\left[\tU\ta_i y_i \bb_i^TV^T\right] \nonumber\\
= &~ U\bbE\left[\ba_i y_i \bb_i^T\right]V^T + \tU\bbE\left[\ta_i\right]  \bbE\left[y_i\right] \bbE\left[\tb_i^T\right]\tV^T + U\bbE\left[\ba_i y_i\right] \bbE\left[\tb_i^T\right]\tV^T + \tU\bbE\left[\ta_i\right]\bbE\left[ y_i \bb_i^T\right]V^T \nonumber\\
=&~  UQV^T. \nonumber
\end{align}
The second line follows from independence of $y_i,\ta_i,\tb_i$ (see Lemma \ref{lem:independent}), and the third line follows from the tower property of conditional expectation:
\begin{align*}
\bbE\left[\ba_i y_i \bb_i^T \right] = &~ \bbE\left[\ba_i ~\bbE[ y_i | a_i,b_i ]~\bb_i^T \right] = \bbE\left[\ba_i f(\ba_i,\bb_i)\bb_i^T \right] = Q.
\end{align*}

Next we show that $\hX_0$ is a consistent estimator of $UQV^T$ by bounding their difference. Thanks to the finite second moments assumptions, we define the following notations:
\begin{align*}
& \sigma_{y|a,b}^2 \coloneqq \var[y|a,b], \quad \sigma^2 \coloneqq \bbE\left[\norm{\ba_1 f(\ba_1,\bb_1) \bb_1^T-Q}_\rmF^2 \right],\\ 
& \tau_0^2 \coloneqq \bbE\left[\left|f(\ba_1,\bb_1)\right|^2\right], \quad \tau_1^2 \coloneqq \bbE\left[\norm{\ba_1f(\ba_1,\bb_1)}_2^2\right], \quad \tau_2^2 \coloneqq \bbE\left[\norm{\bb_1f(\ba_1,\bb_1)}_2^2 \right].
\end{align*}

\begin{lemma}\label{lem:mean_var}
\[
\bbE\left[\norm{\ba_i y_i \bb_i^T-Q}_\rmF^2 \right] = r^2\sigma_{y|a,b}^2 + \sigma^2, \quad \bbE\left[\left|y_i\right|^2\right] = \sigma_{y|a,b}^2+ \tau_0^2,
\]
\[
\bbE\left[\norm{\ba_iy_i}_2^2\right] = r\sigma_{y|a,b}^2+\tau_1^2, \quad \bbE\left[\norm{\bb_iy_i}_2^2 \right] = r\sigma_{y|a,b}^2+\tau_2^2.
\]
\end{lemma}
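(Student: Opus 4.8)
The plan is to condition on the feature vectors $a_i,b_i$ and exploit the Markov chain \eqref{eq:markov}, which forces $\bbE[y_i\mid a_i,b_i]=f(\ba_i,\bb_i)$ where $\ba_i=U^Ta_i$ and $\bb_i=V^Tb_i$. I would write $y_i=f(\ba_i,\bb_i)+\varepsilon_i$ with $\varepsilon_i\coloneqq y_i-f(\ba_i,\bb_i)$, so that $\bbE[\varepsilon_i\mid a_i,b_i]=0$ and $\bbE[\varepsilon_i^2\mid a_i,b_i]=\sigma_{y|a,b}^2$. Each of the four identities then follows by substituting this decomposition, expanding, and evaluating the expectation in two stages via the tower property: first conditionally on $(a_i,b_i)$, then over $(a_i,b_i)$. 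Throughout I use Lemma \ref{lem:r_ind}: $\ba_i\sim N(0,I_r)$ and $\bb_i\sim N(0,I_r)$ are independent, hence $\bbE\norm{\ba_i}_2^2=\bbE\norm{\bb_i}_2^2=r$ and $\bbE\big[\norm{\ba_i}_2^2\norm{\bb_i}_2^2\big]=r^2$.

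For the first identity, substituting the decomposition gives $\ba_i y_i\bb_i^T-Q=\big(\ba_i f(\ba_i,\bb_i)\bb_i^T-Q\big)+\varepsilon_i\,\ba_i\bb_i^T$. Expanding $\norm{\cdot}_\rmF^2$ produces three terms. The cross term $2\varepsilon_i\big\langle\ba_i f(\ba_i,\bb_i)\bb_i^T-Q,\ \ba_i\bb_i^T\big\rangle$ has zero expectation, because conditioning on $(a_i,b_i)$ fixes everything except $\varepsilon_i$, whose conditional mean vanishes. The first squared term has expectation $\sigma^2$ by the definition of $\sigma^2$ and the fact that $(\ba_i,\bb_i)$ is distributed as $(\ba_1,\bb_1)$. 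The remaining term equals $\bbE\big[\varepsilon_i^2\norm{\ba_i}_2^2\norm{\bb_i}_2^2\big]=\bbE\big[\sigma_{y|a,b}^2\norm{\ba_i}_2^2\norm{\bb_i}_2^2\big]=r^2\sigma_{y|a,b}^2$, after pulling out the conditional second moment of $\varepsilon_i$ and using the independence of $\ba_i$ and $\bb_i$. Summing gives $r^2\sigma_{y|a,b}^2+\sigma^2$.

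The other three identities are the same computation but shorter, since no cancellation against $Q$ is needed: conditioning on $(a_i,b_i)$ gives $\bbE[y_i^2\mid a_i,b_i]=f(\ba_i,\bb_i)^2+\sigma_{y|a,b}^2$, and then $\bbE[|y_i|^2]=\tau_0^2+\sigma_{y|a,b}^2$, $\bbE[\norm{\ba_iy_i}_2^2]=\bbE[f(\ba_i,\bb_i)^2\norm{\ba_i}_2^2]+\sigma_{y|a,b}^2\,\bbE\norm{\ba_i}_2^2=\tau_1^2+r\sigma_{y|a,b}^2$, and likewise $\bbE[\norm{\bb_iy_i}_2^2]=\tau_2^2+r\sigma_{y|a,b}^2$. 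There is no real obstacle here beyond keeping the terms organized; the only substantive points are that the conditional variance $\var[y_i\mid a_i,b_i]$ is treated as the constant $\sigma_{y|a,b}^2$ so it factors out of the outer expectation, and that the finite-second-moment hypotheses of Theorem \ref{thm:lin} are precisely what guarantee every expectation appearing above is finite and the manipulations legitimate.
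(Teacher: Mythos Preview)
Your proposal is correct and essentially identical to the paper's proof: both decompose $y_i=f(\ba_i,\bb_i)+\varepsilon_i$, expand the square, kill the cross term by conditioning on $(a_i,b_i)$, and then take the outer expectation using $\bbE\norm{\ba_i}_2^2=\bbE\norm{\bb_i}_2^2=r$. Your remark that $\sigma_{y|a,b}^2$ is being treated as a constant (so it factors out of the outer expectation) exactly matches the paper's implicit assumption.
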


\begin{proof}[Proof of Lemma \ref{lem:mean_var}]
For the first equality, note that
\begin{align*}
\norm{\ba_i y_i \bb_i^T-Q}_\rmF^2 =&~ \norm{\ba_i \left[y_i-f(\ba_i,\bb_i)\right] \bb_i^T}_\rmF^2 + 2\left<\ba_i \left[y_i-f(\ba_i,\bb_i)\right] \bb_i^T,\ba_i f(\ba_i,\bb_i) \bb_i^T-Q\right>\\
&~ + \norm{\ba_i f(\ba_i,\bb_i) \bb_i^T-Q}_\rmF^2 \\
=&~ |y_i-f(\ba_i,\bb_i)|^2 \cdot\norm{\ba_i}_2^2\cdot \norm{\bb_i}_2^2 + 2 \left[y_i-f(\ba_i,\bb_i)\right] \cdot \left<\ba_i \bb_i^T,\ba_i f(\ba_i,\bb_i) \bb_i^T-Q\right> \\
&~ + \norm{\ba_i f(\ba_i,\bb_i) \bb_i^T-Q}_\rmF^2.
\end{align*}
Hence we have
\begin{align*}
\bbE\left[\norm{\ba_i y_i \bb_i^T-Q}_\rmF^2~ \middle| ~a,b \right] = \sigma_{y|a,b}^2\cdot\norm{\ba_i}_2^2\cdot \norm{\bb_i}_2^2 + \norm{\ba_i f(\ba_i,\bb_i) \bb_i^T-Q}_\rmF^2.
\end{align*}
Therefore,
\begin{align*}
\bbE\left[ \norm{\ba_i y_i \bb_i^T-Q}_\rmF^2 \right] = &~ \bbE\left[ \bbE\left[ \norm{\ba_i y_i \bb_i^T-Q}_\rmF^2~ \middle|~ a,b \right] \right] \\
= &~ \sigma_{y|a,b}^2\cdot\bbE\left[\norm{\ba_i}_2^2\right] \cdot \bbE\left[\norm{\bb_i}_2^2\right] + \bbE\left[\norm{\ba_i f(\ba_i,\bb_i) \bb_i^T-Q}_\rmF^2\right]\\
= &~ r^2\sigma_{y|a,b}^2 + \sigma^2.
\end{align*}
The other equalities can be proved similarly.
\end{proof}

Note that
\begin{align*}
\norm{a_i y_i b_i^T -UQV^T}_\rmF^2  = &~ \norm{U(\ba_i y_i \bb_i^T-Q)V^T}_\rmF^2 + \norm{\tU\ta_i y_i \tb_i^T\tV^T}_\rmF^2  + \norm{ U\ba_i y_i \tb_i^T\tV^T}_\rmF^2 + \norm{\tU\ta_i y_i \bb_i^TV^T}_\rmF^2\\
= &~ \norm{\ba_i y_i \bb_i^T-Q}_\rmF^2 + \norm{\ta_i y_i \tb_i^T}_\rmF^2 + \norm{\ba_i y_i \tb_i^T}_\rmF^2 + \norm{\ta_i y_i \bb_i^T}_\rmF^2\\
= &~ \norm{\ba_i y_i \bb_i^T-Q}_\rmF^2 + \norm{\ta_i}_2^2 |y_i|^2 \norm{\tb_i}_2^2 + \norm{\ba_i y_i}_2^2 \norm{\tb_i}_2^2 + \norm{\ta_i}_2^2 \norm{\bb_i y_i}_2^2.
\end{align*}
where the first line follows from Pythagorean theorem, the second line follows from $\norm{U\Sigma V^T}_\rmF = \norm{\Sigma}_\rmF$ for matrices $U,V$ of orthonormal columns, and the third line follows from $\norm{ab^T}_\rmF = \norm{a}_2\norm{b}_2$.
By Lemma \ref{lem:mean_var},
\begin{align}
\bbE\left[\norm{a_i y_i b_i^T - UQV^T}_\rmF^2 \right] \leq &~ (r^2\sigma_{y|a,b}^2+\sigma^2) + (n_1-r)(n_2-r)(\sigma_{y|a,b}^2+\tau_0^2) \nonumber\\
&~ + (n_2-r)(r\sigma_{y|a,b}^2+\tau_1^2) + (n_1-r)(r\sigma_{y|a,b}^2+\tau_2^2) \nonumber\\
= &~ n_1n_2\sigma_{y|a,b}^2 + \sigma^2+(n_1-r)(n_2-r)\tau_0^2+(n_2-r)\tau_1^2+ (n_1-r)\tau_2^2. \nonumber
\end{align}
It follows from the independence between $\{a_i y_i b_i^T\}_{i=1}^m$ that
\begin{align*}
\bbE \left[ \norm{\hX_0- UQV^T}_\rmF^2 \right] = &~ \frac{1}{m^2} \sum_{i=1}^{m}\bbE\left[\norm{ a_i y_i b_i^T - UQV^T}_\rmF^2 \right] \\
\leq &~ \frac{n_1n_2\sigma_{y|a,b}^2 + \sigma^2+(n_1-r)(n_2-r)\tau_0^2+(n_2-r)\tau_1^2+ (n_1-r)\tau_2^2}{m}.
\end{align*}
Clearly, $\sigma_{y|a,b},\sigma,\tau_0,\tau_1,\tau_2$ are all independent of $n_1$, $n_2$, and $m$. Since $r=O(1)$,
\[
\bbE \left[ \norm{\hX_0- UQV^T}_\rmF^2 \right] = O\left(\frac{n_1n_2}{m}\right).
\]

By triangle inequality, and the fact that $\hU\hSigma\hV^T$ is the best rank-$r$ approximation of $\hX_0$ via compact SVD,
\begin{align*}
\norm{\hU\hSigma\hV^T-UQV^T}_\rmF \leq  \norm{\hU\hSigma\hV^T-\hX_0}_\rmF+\norm{\hX_0-UQV^T}_\rmF \leq 2\norm{\hX_0-UQV^T}_\rmF.
\end{align*}
\textbf{The error bound in the second part of Theorem \ref{thm:lin}} is given by:
\begin{align*}
\max\left\{\bbE\left[d(U,\hU)\right],~ \bbE\left[d(V,\hV)\right]\right\} & \leq \bbE\left[\max\left\{d(U,\hU),~ d(V,\hV)\right\} \right] \\
& \leq \bbE\left[ \frac{1}{\sigma_r}\norm{UQV^T-\hU\hSigma\hV^T}_\rmF \right] \\
& \leq \frac{2}{\sigma_r} \bbE\left[\norm{\hX_0 - UQV^T}_\rmF\right] \\
& \leq \frac{2}{\sigma_r}\sqrt{\bbE\left[\norm{\hX_0 - UQV^T}_\rmF^2\right]} = O\left(\sqrt{\frac{n_1n_2}{m}}\right).
\end{align*}
The first and the last lines follow from Jensen's inequality. The second line follows from Lemma \ref{lem:subspace}, where $\sigma_r$ is the smallest singular value of $Q$.

\begin{lemma} \label{lem:subspace}
Let $\sigma_r$ denote the smallest singular value of $Q$, which is nonzero when $Q$ is non-singular.
\begin{align*}
\max\left\{ d(U,\hU),~d(V,\hV) \right\} \leq \frac{1}{\sigma_r}\norm{UQV^T-\hU\hSigma\hV^T}_\rmF.
\end{align*}
\end{lemma}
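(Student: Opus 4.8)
The plan is to reduce the statement to the one‑sided bound $d(U,\hU)\le\frac1{\sigma_r}\|UQV^T-\hU\hSigma\hV^T\|_\rmF$: the companion bound for $d(V,\hV)$ then follows by applying the same inequality to the transposed factorizations $UQV^T\mapsto VQ^TU^T$ and $\hU\hSigma\hV^T\mapsto\hV\hSigma\hU^T$, since transposition preserves both the Frobenius norm and the singular values of $Q$. Moreover, I may assume without loss of generality that $Q=\diag(\sigma_1,\dots,\sigma_r)$ with $\sigma_1\ge\cdots\ge\sigma_r>0$: replacing $(U,V,Q)$ by $(UU_Q,VV_Q,\Sigma_Q)$, where $Q=U_Q\Sigma_QV_Q^T$ is an SVD, changes neither the product $UQV^T$ nor the column spaces of $U$ and $V$; then $M:=UQV^T$ is its own compact SVD.

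Set $E:=M-\hU\hSigma\hV^T$, write $\hSigma=\diag(\hat\sigma_1,\dots,\hat\sigma_r)$, and introduce the two $r\times r$ matrices $P:=U^T\hU$ and $R:=\hV^TV$; since $U,\hU,\hV,V$ all have orthonormal columns, $\|P\|_2\le1$ and $\|R\|_2\le1$. Two identities then do the work. First, since $P_U=UU^T$,
\[
d(U,\hU)^2=\|(I-UU^T)\hU\|_\rmF^2=r-\|U^T\hU\|_\rmF^2=r-\|P\|_\rmF^2.
\]
Second, expanding $\|E\|_\rmF^2=\|M\|_\rmF^2-2\langle M,\hU\hSigma\hV^T\rangle+\|\hU\hSigma\hV^T\|_\rmF^2$ and using that $Q=\diag(\sigma_i)$ and $\hSigma=\diag(\hat\sigma_j)$ are diagonal,
\[
\|E\|_\rmF^2=\sum_i\sigma_i^2+\sum_j\hat\sigma_j^2-2\sum_{i,j}\sigma_i\hat\sigma_j P_{ij}R_{ji}.
\]

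The crux is then a short algebraic argument. Subtracting $\sigma_r^2\,d(U,\hU)^2=r\sigma_r^2-\sigma_r^2\|P\|_\rmF^2$ gives
\[
\|E\|_\rmF^2-\sigma_r^2\,d(U,\hU)^2=\sum_i(\sigma_i^2-\sigma_r^2)+\sum_j\hat\sigma_j^2-2\sum_{i,j}\sigma_i\hat\sigma_j P_{ij}R_{ji}+\sigma_r^2\sum_{i,j}P_{ij}^2,
\]
and the elementary bound $2\sum_{i,j}\sigma_i\hat\sigma_j P_{ij}R_{ji}\le\sum_{i,j}\sigma_i^2P_{ij}^2+\sum_{i,j}\hat\sigma_j^2R_{ji}^2$ (which is just $\sum_{i,j}(\sigma_iP_{ij}-\hat\sigma_jR_{ji})^2\ge0$), after regrouping, yields
\[
\|E\|_\rmF^2-\sigma_r^2\,d(U,\hU)^2\ge\sum_i(\sigma_i^2-\sigma_r^2)\Bigl(1-\sum_jP_{ij}^2\Bigr)+\sum_j\hat\sigma_j^2\Bigl(1-\sum_iR_{ji}^2\Bigr)\ge0,
\]
because each $\sigma_i\ge\sigma_r$, $\hat\sigma_j^2\ge0$, and the rows of the contractions $P$ and $R$ have Euclidean norm at most $1$. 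This is exactly $d(U,\hU)\le\|E\|_\rmF/\sigma_r$, proving the lemma.

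I expect the main obstacle to be getting the right denominator. The naive route — left‑multiply $E$ by $\tU^T$, use $\tU^TU=0$ to obtain $\tU^T\hU\hSigma=-\tU^TE\hV$, take Frobenius norms, and peel off $\hSigma$ — only delivers $d(U,\hU)\le\|E\|_\rmF/\sigma_{\min}(\hSigma)$, which is weaker and not the form used in the derivation that follows the lemma. What restores the sharp constant $\sigma_r$ is the sum‑of‑squares regrouping above; it relies essentially on the retained singular values of the \emph{true} matrix being uniformly bounded below by $\sigma_r$ and on $P,R$ being contractions. Everything else is bookkeeping with orthonormal factors.
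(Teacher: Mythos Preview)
Your proof is correct, but it takes a genuinely different route from the paper's. The paper's argument is a two-line computation that exploits the symmetry $d(U,\hU)=d(\hU,U)$: letting $\widetilde{\hU}$ have orthonormal columns spanning the orthogonal complement of the range of $\hU$, one has $d(U,\hU)=\|\widetilde{\hU}^TU\|_\rmF$, and then
\[
\|\widetilde{\hU}^TU\|_\rmF \le \frac{1}{\sigma_r}\|\widetilde{\hU}^TUQV^T\|_\rmF = \frac{1}{\sigma_r}\|\widetilde{\hU}^T(UQV^T-\hU\hSigma\hV^T)\|_\rmF \le \frac{1}{\sigma_r}\|UQV^T-\hU\hSigma\hV^T\|_\rmF.
\]
In other words, the ``naive route'' you dismissed works perfectly once you multiply by the complement of $\hU$ rather than the complement of $U$: then the term $\hU\hSigma\hV^T$ vanishes for free, and it is $Q$ (with $\sigma_{\min}(Q)=\sigma_r$) that gets peeled off, not $\hSigma$. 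Your sum-of-squares expansion recovers the same constant by brute force and is self-contained, but the paper's trick is considerably shorter and avoids the diagonalization of $Q$ and the entrywise bookkeeping with $P$ and $R$.
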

\begin{proof}[Proof of Lemma \ref{lem:subspace}]
We only prove the bound for $d(U,\hU)$. The bound for $d(V,\hV)$ can be proved similarly. 

Let $\widetilde{\hU}\in\bbR^{n_1\times (n_1-r)}$ denote a matrix of orthonormal columns that satisfies $\widetilde{\hU}^T\hU = 0$, then we have the following two identities:
\begin{align*}
&\norm{\tU^T\hU}_\rmF^2 = \norm{\tU\tU^T\hU}_\rmF^2 = \norm{\hU}_\rmF^2-\norm{UU^T\hU}_\rmF^2 = r- \norm{U^T\hU}_\rmF^2,\\
&\norm{\widetilde{\hU}^T U}_\rmF^2 = \norm{\widetilde{\hU}\widetilde{\hU}^TU}_\rmF^2 = \norm{U}_\rmF^2-\norm{\hU\hU^TU}_\rmF^2 = r- \norm{\hU^TU}_\rmF^2.
\end{align*}
It follows that
\[
d(U,\hU) = \norm{\hU - UU^T \hU}_\rmF = \norm{\tU^T\hU}_\rmF = \norm{\widetilde{\hU}^TU}_\rmF = \norm{U - \hU\hU^TU}_\rmF = d(\hU, U).
\]

Therefore,
\begin{align}
d(U,\hU) = &~\norm{\widetilde{\hU}^T U}_\rmF \leq \frac{1}{\sigma_r} \norm{\widetilde{\hU}^T UQV^T}_\rmF = \frac{1}{\sigma_r} \norm{\widetilde{\hU}^T (UQV^T-\hU\hSigma\hV^T)}_\rmF \nonumber\\
\leq &~ \frac{1}{\sigma_r} \norm{\widetilde{\hU}^T}_2 \norm{UQV^T-\hU\hSigma\hV^T}_\rmF = \frac{1}{\sigma_r}  \norm{UQV^T-\hU\hSigma\hV^T}_\rmF. \nonumber
\end{align}

\end{proof}

\subsection{Proof of Theorem \ref{thm:optimal}} \label{sec:proof_optimal}

Suppose $\Omega\subset\bbR^{n_1\times n_2}$ is a cone, and $\calB_{n_1\times n_2}\subset\bbR^{n_1\times n_2}$ is the unit ball centered at the origin in the Frobenius norm. Define
\[
\Delta \coloneqq (\Omega-\Omega) \bigcap \calB_{n_1\times n_2},
\]
\[
\norm{Y}_{\Delta^\circ} \coloneqq \sup\limits_{X\in\Delta} \left<Y,X\right>.
\]
Here, ${\Delta^\circ}$ is the polar set of $\Delta$. We restate some results by Plan et al. \cite{Plan2016} in Lemmas \ref{lem:seminorm} -- \ref{lem:max2}. Lemma \ref{lem:seminorm} follows from the properties of polar sets.
\begin{lemma} \label{lem:seminorm}
For symmetric set $\Delta$, $\norm{\cdot}_{\Delta^\circ}$ is a pseudo-norm, or equivalently
\begin{enumerate}
	\item $\norm{Y}_{\Delta^\circ} \geq 0$, and $\norm{0}_{\Delta^\circ} = 0$.
	\item $\norm{cY}_{\Delta^\circ} = |c| \cdot\norm{Y}_{\Delta^\circ}$.
	\item $\norm{Y_1+Y_2}_{\Delta^\circ} \leq \norm{Y_1}_{\Delta^\circ} + \norm{Y_2}_{\Delta^\circ} $.
\end{enumerate}
Properties 2 and 3 imply that $\norm{\cdot}_{\Delta^\circ}$ is convex.
\end{lemma}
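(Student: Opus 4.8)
The plan is to verify the three listed properties directly from the definition $\norm{Y}_{\Delta^\circ} = \sup_{X\in\Delta}\left<Y,X\right>$, relying only on two structural facts about $\Delta = (\Omega-\Omega)\cap\calB_{n_1\times n_2}$. First, $\Delta$ is symmetric: if $X\in\Delta$ then $-X\in\Delta$, since $\Omega-\Omega$ is symmetric by construction and $\calB_{n_1\times n_2}$ is symmetric. Second, $\Delta$ is nonempty and bounded: it contains $0$ (because $\Omega$ is a cone, so $0\in\Omega$, hence $0 = 0-0\in\Omega-\Omega$, and $0\in\calB_{n_1\times n_2}$), and $\Delta\subseteq\calB_{n_1\times n_2}$ is bounded, so the supremum is finite and $\norm{\cdot}_{\Delta^\circ}$ is a well-defined real-valued function on $\bbR^{n_1\times n_2}$.

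With these facts in hand, the three properties are immediate. For property 1, $0\in\Delta$ gives $\norm{Y}_{\Delta^\circ}\geq\left<Y,0\right> = 0$ (equivalently, symmetry gives $\max\{\left<Y,X\right>,\left<Y,-X\right>\}\geq 0$ for every $X\in\Delta$), and $\norm{0}_{\Delta^\circ}=\sup_{X\in\Delta}\left<0,X\right>=0$. For property 2, if $c\geq 0$ then $\norm{cY}_{\Delta^\circ}=\sup_{X\in\Delta}c\left<Y,X\right>=c\sup_{X\in\Delta}\left<Y,X\right>=|c|\,\norm{Y}_{\Delta^\circ}$; if $c<0$, write $c=-|c|$ and use that $X\mapsto -X$ is a bijection of $\Delta$ onto itself, so $\sup_{X\in\Delta}\left<cY,X\right>=|c|\sup_{X\in\Delta}\left<Y,-X\right>=|c|\sup_{X'\in\Delta}\left<Y,X'\right>=|c|\,\norm{Y}_{\Delta^\circ}$, and the case $c=0$ is covered by property 1. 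For property 3, subadditivity of the supremum gives $\norm{Y_1+Y_2}_{\Delta^\circ}=\sup_{X\in\Delta}\bigl(\left<Y_1,X\right>+\left<Y_2,X\right>\bigr)\leq\sup_{X\in\Delta}\left<Y_1,X\right>+\sup_{X\in\Delta}\left<Y_2,X\right>=\norm{Y_1}_{\Delta^\circ}+\norm{Y_2}_{\Delta^\circ}$. Convexity then follows: for $\lambda\in[0,1]$, properties 2 and 3 yield $\norm{\lambda Y_1+(1-\lambda)Y_2}_{\Delta^\circ}\leq\lambda\norm{Y_1}_{\Delta^\circ}+(1-\lambda)\norm{Y_2}_{\Delta^\circ}$.

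There is no genuine obstacle here; the statement is essentially a collection of standard support-function facts. The two points worth stating carefully are exactly the two structural facts used above: that intersecting $\Omega-\Omega$ with the unit ball $\calB_{n_1\times n_2}$ is what forces the supremum to be finite (rather than identically $+\infty$ along directions in which $\Omega-\Omega$ is unbounded), and that the symmetry of $\Delta$ is what upgrades the plain support function into something homogeneous under sign changes, i.e., a pseudo-norm. One could alternatively phrase the whole argument as an instance of general properties of support functions of nonempty, bounded, symmetric sets, but the direct verification is the shortest route.
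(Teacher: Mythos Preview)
Your proof is correct. The paper does not actually prove this lemma; it simply remarks that it ``follows from the properties of polar sets'' and attributes the result to Plan et al.\ \cite{Plan2016}, so your direct verification from the definition is exactly the standard argument the paper is invoking, made explicit.
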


\begin{lemma} \label{lem:proj}
If $\Omega\subset\bbR^{n_1\times n_2}$ is a cone, and $X\in\Omega$, then for $\hX_0\in \bbR^{n_1\times n_2}$,
\[
\norm{P_\Omega \hX_0 - X}_\rmF \leq 2\norm{\hX_0 - X}_{\Delta^\circ}
\]
\end{lemma}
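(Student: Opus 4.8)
The plan is to reduce everything to the defining minimality property of the projection, since $\Omega$ (e.g.\ the rank-$r$ cone $\Omega_r$) need not be convex and hence no variational-inequality / obtuse-angle property is available. Write $\hX \coloneqq P_\Omega \hX_0$ and $D \coloneqq \hX - X$. First I would record that, because $X\in\Omega$ and $\hX$ minimizes $\norm{\cdot-\hX_0}_\rmF$ over $\Omega$, we have $\norm{\hX-\hX_0}_\rmF^2 \le \norm{X-\hX_0}_\rmF^2$; expanding both squares and cancelling $\norm{\hX_0}_\rmF^2$ gives the Pythagorean-type inequality $\norm{\hX}_\rmF^2 - \norm{X}_\rmF^2 \le 2\langle \hX - X,\ \hX_0\rangle$.

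Next I would combine this with the elementary identity $\norm{D}_\rmF^2 = \bigl(\norm{\hX}_\rmF^2 - \norm{X}_\rmF^2\bigr) - 2\langle D, X\rangle$, obtained by simply expanding $\norm{\hX - X}_\rmF^2$. Substituting the minimality inequality yields $\norm{D}_\rmF^2 \le 2\langle D,\ \hX_0\rangle - 2\langle D, X\rangle = 2\langle \hX_0 - X,\ D\rangle$. This is the heart of the argument: the squared estimation error is controlled by the inner product of the ``noise'' $\hX_0 - X$ with the error direction $D$.

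Finally I would exploit that $D = \hX - X$ is a difference of two elements of $\Omega$ and that $\Omega - \Omega$ is a cone (since $\Omega$ is). If $D = 0$ the bound is trivial; otherwise $D/\norm{D}_\rmF \in (\Omega-\Omega)\cap\calB_{n_1\times n_2} = \Delta$, so by the definition of $\norm{\cdot}_{\Delta^\circ}$ we get $\langle \hX_0 - X,\ D\rangle = \norm{D}_\rmF\,\bigl\langle \hX_0 - X,\ D/\norm{D}_\rmF\bigr\rangle \le \norm{D}_\rmF\,\norm{\hX_0 - X}_{\Delta^\circ}$. Plugging this into the previous display and dividing by $\norm{D}_\rmF$ gives $\norm{\hX - X}_\rmF \le 2\norm{\hX_0 - X}_{\Delta^\circ}$, as claimed. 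I do not anticipate a genuine obstacle; the only points needing care are (i) using solely the minimality inequality rather than any convexity of $\Omega$, and (ii) disposing of the degenerate case $\hX = X$ before normalizing the error direction into $\Delta$.
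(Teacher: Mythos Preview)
Your argument is correct. The key inequality $\norm{D}_\rmF^2 \le 2\langle \hX_0 - X, D\rangle$ follows cleanly from the minimality of the projection as you outlined, and the normalization $D/\norm{D}_\rmF \in \Delta$ uses precisely the cone property of $\Omega-\Omega$; both steps check out.

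Your route is more elementary and self-contained than the paper's. The paper does not argue directly: it invokes \cite[Corollary~8.3]{Plan2016}, which gives for every $t>0$ the bound $\norm{P_\Omega\hX_0 - X}_\rmF \le \max\{t,\ \tfrac{2}{t}\norm{\hX_0 - X}_{(t\Delta)^\circ}\}$, observes that the cone property makes $\tfrac{1}{t}\norm{\cdot}_{(t\Delta)^\circ} = \norm{\cdot}_{\Delta^\circ}$ so the right-hand side becomes $\max\{t,\ 2\norm{\hX_0 - X}_{\Delta^\circ}\}$, and then lets $t\to 0$. Your proof essentially unpacks what lies behind that cited corollary and delivers the conclusion in one stroke, with no external reference and no limiting argument. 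The paper's version has the advantage of reusing a general tool; yours has the advantage of being transparent about exactly where nonconvexity is harmless (only the defining inequality of $P_\Omega$ is used) and of avoiding the slightly artificial $t$-parametrization altogether.
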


\begin{proof}
Since $\Omega$ is a cone, we have $(\Omega-\Omega)\bigcap t\calB_{n_1\times n_2} = t\Delta$ for $t>0$. Moreover,
\begin{align*}
\frac{1}{t}\norm{Y}_{(t\Delta)^\circ} = \frac{1}{t}\sup\limits_{X\in t\Delta} \left<Y,X\right> = \sup\limits_{X\in\Delta} \left<Y,X\right> = \norm{Y}_{\Delta^\circ}.
\end{align*}
By \cite[Corollary 8.3]{Plan2016}, for every $t>0$ we have
\[
\norm{P_{\Omega} \hX_0 -X }_\rmF \leq \max\left\{t, ~\frac{2}{t}\norm{\hX_0 - X}_{(t\Delta)^\circ}\right\} = \max\left\{t,~2\norm{\hX_0 - X}_{\Delta^\circ}\right\}.
\]
Lemma \ref{lem:proj} follows from letting $t$ go to $0$.
\end{proof}

\begin{lemma} \label{lem:max2}
Suppose $\Omega_{r}\subset \bbR^{n_1\times n_2}$ is the set of matrices of at most rank $r$, and $\Delta_{r} = (\Omega_{r}-\Omega_{r}) \bigcap \calB_{n_1\times n_2}$. Then
\[
\norm{Y}_{\Delta_{r}^\circ} \leq \min\left\{\norm{Y}_\rmF,~\sqrt{2r} \norm{Y}\right\}.
\]
\end{lemma}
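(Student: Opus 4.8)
\textbf{Proof proposal for Lemma \ref{lem:max2}.}
The plan is to bound $\norm{Y}_{\Delta_r^\circ} = \sup_{X\in\Delta_r}\langle Y,X\rangle$ by controlling the set $\Delta_r$ in two different ways, one giving each term inside the minimum. The first bound is immediate: since $\Delta_r = (\Omega_r-\Omega_r)\cap\calB_{n_1\times n_2}\subseteq\calB_{n_1\times n_2}$, we have $\sup_{X\in\Delta_r}\langle Y,X\rangle\leq\sup_{\norm{X}_\rmF\leq 1}\langle Y,X\rangle=\norm{Y}_\rmF$ by Cauchy--Schwarz (with equality at $X=Y/\norm{Y}_\rmF$). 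So $\norm{Y}_{\Delta_r^\circ}\leq\norm{Y}_\rmF$.

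For the second bound, the key structural observation is that every $X\in\Omega_r-\Omega_r$ has rank at most $2r$: writing $X=X_1-X_2$ with $\rank(X_i)\leq r$, subadditivity of rank gives $\rank(X)\leq\rank(X_1)+\rank(X_2)\leq 2r$. Hence $\Delta_r$ is contained in the set of matrices of rank at most $2r$ with Frobenius norm at most $1$. For any such $X$, I would use the duality between the spectral norm and the nuclear norm, $\langle Y,X\rangle\leq\norm{Y}\cdot\norm{X}_*$, together with the bound $\norm{X}_*=\sum_{i=1}^{2r}\sigma_i(X)\leq\sqrt{2r}\,\big(\sum_i\sigma_i(X)^2\big)^{1/2}=\sqrt{2r}\,\norm{X}_\rmF\leq\sqrt{2r}$, again by Cauchy--Schwarz applied to the vector of the (at most $2r$) nonzero singular values. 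Combining, $\langle Y,X\rangle\leq\sqrt{2r}\,\norm{Y}$ for all $X\in\Delta_r$, so $\norm{Y}_{\Delta_r^\circ}\leq\sqrt{2r}\,\norm{Y}$.

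Taking the smaller of the two bounds yields $\norm{Y}_{\Delta_r^\circ}\leq\min\{\norm{Y}_\rmF,\ \sqrt{2r}\,\norm{Y}\}$, as claimed. There is no real obstacle here; the only ingredients are subadditivity of rank, the spectral/nuclear-norm duality inequality, and Cauchy--Schwarz, all of which are standard. The one point worth stating carefully is that the rank-$2r$ containment is what converts the "difference of two rank-$r$ matrices" description of $\Delta_r$ into a form where the nuclear-norm estimate can be applied, and that this loses only the expected factor of $\sqrt{2}$ relative to the rank-$r$ case.
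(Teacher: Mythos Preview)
Your proof is correct and follows essentially the same approach as the paper: the Frobenius bound comes from Cauchy--Schwarz over the unit ball, and the spectral bound comes from the containment $\Delta_r\subset\{X:\rank(X)\le 2r,\ \|X\|_\rmF\le 1\}\subset\{X:\|X\|_*\le\sqrt{2r}\}$ together with the spectral/nuclear duality (which the paper phrases as H\"older's inequality). Your write-up is slightly more explicit about why the rank is at most $2r$ and why $\|X\|_*\le\sqrt{2r}\,\|X\|_\rmF$, but the logical skeleton is identical.
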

\begin{proof}
By Cauchy-Schwarz inequality,
\begin{align*}
\norm{Y}_{\Delta_r^\circ} = \sup\limits_{X\in\Delta_r} \left<Y,X\right> \leq \sup\limits_{X\in\Delta_r} \norm{X}_\rmF\norm{Y}_\rmF = \norm{Y}_\rmF.
\end{align*}
Since
\[
\Delta_{r} \subset \{X\in\bbR^{n_1\times n_2}: \rank(X)\leq 2r, \norm{X}_\rmF\leq 1\} \subset \{X\in\bbR^{n_1\times n_2}: \norm{X}_*\leq \sqrt{2r}\} \eqqcolon \Delta_*,
\]
By H\"{o}lder's inequality,
\begin{align*}
\norm{Y}_{\Delta_{r}^\circ} = \sup\limits_{X\in\Delta_{r}} \left<Y,X\right>\leq \sup\limits_{X\in\Delta_*} \left<Y,X\right> \leq \sup\limits_{X\in\Delta_*} \norm{X}_*\norm{Y} = \sqrt{2r}\norm{Y}.
\end{align*}
\end{proof}

\begin{lemma} \label{lem:degen_gauss}
Suppose $u\sim N(0,I_n)$, $\tilde{u}\sim N(0,P)$ and $P\in\bbR^{n\times n}$ is a projection matrix. Then for a convex function $g(\cdot)$, we have $\bbE[g(\tilde{u})]\leq \bbE[g(u)]$.
\end{lemma}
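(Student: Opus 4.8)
The plan is to realize both Gaussian vectors on a single probability space so that $\tilde u$ appears as a conditional expectation of $u$, and then invoke Jensen's inequality.

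First I would diagonalize $P$. Being an orthogonal projection matrix, $P$ is symmetric and idempotent with all eigenvalues in $\{0,1\}$; set $k\coloneqq\rank(P)$. I would pick an orthonormal eigenbasis of $\bbR^n$, collecting the eigenvectors with eigenvalue $1$ into $W\in\bbR^{n\times k}$ and those with eigenvalue $0$ into $W'\in\bbR^{n\times(n-k)}$, so that $W^TW=I_k$, $W'^TW'=I_{n-k}$, $W^TW'=0$, $WW^T=P$, and $WW^T+W'W'^T=I_n$.

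Next I would introduce independent standard Gaussian vectors $z\sim N(0,I_k)$ and $z'\sim N(0,I_{n-k})$. A one-line covariance computation gives $Wz\sim N(0,WW^T)=N(0,P)$, hence $Wz$ has the same law as $\tilde u$, and $Wz+W'z'\sim N(0,WW^T+W'W'^T)=N(0,I_n)$, hence $Wz+W'z'$ has the same law as $u$. Thus it suffices to prove $\bbE[g(Wz+W'z')]\geq\bbE[g(Wz)]$. Finally I would condition on $z$: since $z'$ is independent of $z$ and $\bbE[W'z']=0$, convexity of $g$ and the conditional Jensen inequality give $\bbE[\,g(Wz+W'z')\mid z\,]\geq g\big(\bbE[Wz+W'z'\mid z]\big)=g(Wz)$, and taking the expectation over $z$ yields the claim.

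I do not anticipate a genuine obstacle here — the content is a standard Gaussian coupling plus Jensen's inequality. The only point deserving a line of care is integrability: one should assume $g$ is convex and that the expectations in the statement are well defined (for instance $g$ bounded below, or $g(u)$ integrable), which is exactly what is needed for the conditional form of Jensen's inequality to apply. Everything else is routine linear algebra.
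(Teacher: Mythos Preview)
Your proposal is correct and is essentially the same argument as the paper's: both couple $u$ in distribution as $\tilde u$ plus an independent mean-zero Gaussian (the paper writes this directly as $\bar u\sim N(0,I-P)$, while you construct it explicitly via $W'z'$ after diagonalizing $P$) and then apply conditional Jensen. The diagonalization step you include is a harmless elaboration, not a different idea.
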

\begin{proof}
Let $\bar{u} \sim  N(0,I-P)$ be independent from $\tilde{u}$, then $\tilde{u}+\bar{u}$ have the same distribution as $u$.
\[
\bbE[g(\tilde{u})] = \bbE[g(\tilde{u}+ \bbE[\bar{u}])] \leq \bbE[g(\tilde{u}+ \bar{u})] = \bbE[g(u)],
\]
where the inequality follows from Jensen's inequality.
\end{proof}

\begin{lemma} \label{lem:use_exp}
If $\{y_i\}_{i=1}^m$ are i.i.d. and satisfy the light-tailed response condition (see Definition \ref{def:light}), then
\begin{align*}
\bbP\left[\max_i |y_i| > t \log m\right] \leq C m^{1-ct}.
\end{align*}
\end{lemma}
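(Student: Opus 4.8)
The plan is to use a simple union bound combined with the light-tailed tail estimate. The statement concerns the maximum of $m$ i.i.d. copies, so the natural first move is to bound the probability that \emph{some} $y_i$ is large by the sum of the individual probabilities.

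First I would write
\begin{align*}
\bbP\left[\max_{i\in[m]} |y_i| > t\log m\right] = \bbP\left[\bigcup_{i=1}^m \{|y_i| > t\log m\}\right] \leq \sum_{i=1}^m \bbP\left[|y_i| > t\log m\right] = m\,\bbP\left[|y_1| > t\log m\right],
\end{align*}
where the inequality is the union bound and the last equality uses that the $y_i$ are identically distributed. Note that $\bbP[|y_1| > s] \leq \bbP[|y_1| \geq s]$, so the light-tailed response condition of Definition \ref{def:light} applies with the threshold $s = t\log m \geq 0$, giving $\bbP[|y_1| > t\log m] \leq C e^{-ct\log m} = C m^{-ct}$.

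Combining the two displays yields $\bbP[\max_i |y_i| > t\log m] \leq m \cdot C m^{-ct} = C m^{1-ct}$, which is exactly the claim. There is essentially no obstacle here: the only points to be careful about are that the light-tailed condition is stated with a non-strict inequality (harmless, since strict events have no larger probability) and that it must hold for all nonnegative thresholds (which it does, so plugging in $t\log m$ is legitimate for any $t \geq 0$ and $m \geq 1$).
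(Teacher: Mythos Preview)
Your proof is correct and takes essentially the same approach as the paper: a union bound over the $m$ events, followed by the light-tailed tail estimate at threshold $t\log m$, and simplification of $m\cdot C e^{-ct\log m}$ to $Cm^{1-ct}$.
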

\begin{proof}
\begin{align*}
\bbP\left[\max_i |y_i| > t \log m\right] \leq \sum_i \bbP\left[ |y_i| > t \log m\right] \leq m Ce^{-c t \log m} = C m^{1-ct}.
\end{align*}
\end{proof}

We need the following matrix Bernstein inequality.
\begin{lemma} \label{lem:bern}\cite[Theorem 6.2]{Tropp2011}
Suppose $\{X_i\}_{i=1}^m$ are $n\times n$ symmetric independent random matrices,
\[
\bbE[X_i] = 0,\quad  \bbE\left[X_i^k\right] \preceq \frac{k!}{2}\cdot R^{k-2}A_i^2,\quad \sigma_A^2 \coloneqq \norm{\sum_i A_i^2}. 
\] 
Then for all $t\geq 0$, we have
\[
\bbP\left[\lambda_{\max}\left(\sum_i X_i\right)\geq t \right] \leq n \cdot \exp\left(\frac{-t^2/2}{\sigma_A^2+Rt}\right).
\]
\end{lemma}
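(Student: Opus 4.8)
The plan is to follow the now-standard matrix Laplace-transform approach of Ahlswede--Winter, Oliveira, and Tropp, replacing the scalar moment-generating-function machinery behind the classical Bernstein inequality with its matrix analogue. First I would reduce the tail bound to a statement about the trace exponential. By the matrix analogue of the Chernoff/Markov bound, for any $\theta > 0$,
\[
\bbP\left[\lambda_{\max}\left(\sum_i X_i\right)\geq t\right] \leq e^{-\theta t}\, \bbE\left[\operatorname{tr}\exp\left(\theta\sum_i X_i\right)\right],
\]
which follows from the monotonicity of the matrix exponential, the elementary bound $\lambda_{\max}(\exp(M)) \leq \operatorname{tr}\exp(M)$ for symmetric $M$, and Markov's inequality applied to $\operatorname{tr}\exp(\theta\sum_i X_i)$.

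The crucial step is to control the matrix cumulant generating function of the sum. Since the $X_i$ do not commute, $\exp(\sum_i \theta X_i) \neq \prod_i \exp(\theta X_i)$, so independence alone does not factor the trace mgf as it would in the scalar case. Here I would invoke Lieb's concavity theorem, which guarantees that the map $A \mapsto \operatorname{tr}\exp(H + \log A)$ is concave on positive-definite matrices; combined with Jensen's inequality and the tower property over the independent $X_i$, this yields the subadditivity bound
\[
\bbE\left[\operatorname{tr}\exp\left(\theta\sum_i X_i\right)\right] \leq \operatorname{tr}\exp\left(\sum_i \log\bbE\left[e^{\theta X_i}\right]\right).
\]
This is the main obstacle: Lieb's theorem is the deep ingredient that substitutes for the simple product rule available for scalars, and everything downstream is comparatively routine.

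Next I would translate the Bernstein moment hypothesis $\bbE[X_i^k] \preceq \frac{k!}{2} R^{k-2} A_i^2$ into a bound on each matrix mgf. Expanding the exponential and using $\bbE[X_i] = 0$,
\[
\bbE\left[e^{\theta X_i}\right] = I + \sum_{k\geq 2}\frac{\theta^k}{k!}\bbE\left[X_i^k\right] \preceq I + \frac{\theta^2/2}{1-R\theta}\, A_i^2
\]
for $0 < \theta < 1/R$, where the geometric series $\sum_{k\geq 2}(R\theta)^{k-2}$ is summed term by term (each coefficient $\theta^k/k!$ being nonnegative preserves the semidefinite order). Applying the operator inequality $\log(I+M) \preceq M$ then gives $\log\bbE[e^{\theta X_i}] \preceq \frac{\theta^2/2}{1-R\theta}A_i^2$. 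Summing over $i$, using monotonicity of the trace exponential under $\preceq$, the bound $\operatorname{tr}\exp(\cdot)\leq n\exp(\lambda_{\max}(\cdot))$, and the definition $\sigma_A^2 = \norm{\sum_i A_i^2} = \lambda_{\max}(\sum_i A_i^2)$, I obtain
\[
\bbE\left[\operatorname{tr}\exp\left(\theta\sum_i X_i\right)\right] \leq n\,\exp\left(\frac{\theta^2/2}{1-R\theta}\,\sigma_A^2\right).
\]

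Finally I would optimize the free parameter. Combining the last two displays gives
\[
\bbP\left[\lambda_{\max}\left(\sum_i X_i\right)\geq t\right] \leq n\,\exp\left(-\theta t + \frac{\theta^2/2}{1-R\theta}\sigma_A^2\right),
\]
and choosing $\theta = t/(\sigma_A^2 + Rt) \in (0, 1/R)$ makes $1-R\theta = \sigma_A^2/(\sigma_A^2+Rt)$, so the exponent collapses to $-\tfrac{t^2/2}{\sigma_A^2 + Rt}$, which is exactly the claimed inequality. Since the statement is cited verbatim as \cite[Theorem 6.2]{Tropp2011}, in the paper this argument would simply be referenced rather than reproduced; the sketch above records the derivation one would supply if a self-contained proof were required.
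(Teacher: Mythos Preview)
Your proposal is correct: the paper gives no proof of this lemma at all, simply citing it as \cite[Theorem 6.2]{Tropp2011}, and your sketch faithfully reconstructs the standard Laplace-transform argument from that reference (matrix Chernoff bound, Lieb-based subadditivity of matrix cumulants, Bernstein moment growth summed geometrically, then optimization in $\theta$). You even anticipate this at the end of your write-up, so there is nothing to add.
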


\textbf{Next, we prove Theorem \ref{thm:optimal}.} 
By \eqref{eq:split} and triangle inequality,
\begin{align}
\norm{\hX_0 - UQV^T}_{\Delta_{r}^\circ} \leq &~ \norm{ U \left(\frac{1}{m}\sum_{i=1}^m\ba_i y_i \bb_i^T -Q\right)V^T}_{\Delta_{r}^\circ} + \norm{\tU\left(\frac{1}{m}\sum_{i=1}^m\ta_i y_i \tb_i^T\right)\tV^T}_{\Delta_{r}^\circ}   \nonumber\\
&~ + \norm{U \left(\frac{1}{m}\sum_{i=1}^m\ba_i y_i \tb_i^T \right)\tV^T}_{\Delta_{r}^\circ} + \norm{\tU\left(\frac{1}{m}\sum_{i=1}^m\ta_i y_i \bb_i^T\right)V^T }_{\Delta_{r}^\circ} \nonumber\\
\eqqcolon &~ T_1+T_2+T_3+T_4. \label{eq:split3}
\end{align}

Next, we bound the expectation of the four terms. For $T_1$, we use Lemma \ref{lem:max2}:
\begin{align}
\bbE[T_1] \leq &~ \bbE\left[\norm{U \left(\frac{1}{m}\sum_{i=1}^m\ba_i y_i \bb_i^T -Q\right)V^T}_\rmF \right] \nonumber\\
= &~ \bbE\left[\norm{\left(\frac{1}{m}\sum_{i=1}^m\ba_i y_i \bb_i^T -Q\right)}_\rmF \right]  \nonumber\\
\leq &~ \sqrt{\bbE\left[\norm{\left(\frac{1}{m}\sum_{i=1}^m\ba_i y_i \bb_i^T -Q\right)}_\rmF^2 \right]} \nonumber\\
\leq &~ \sqrt{\frac{r^2\sigma_{y|a,b}^2+\sigma^2}{m}}. \label{eq:T1}
\end{align}
Similarly, one can obtain bounds on the expectations of $T_3$ and $T_4$:
\begin{align}
\bbE[T_3] \leq &~ \bbE\left[\norm{\frac{1}{m}\sum_{i=1}^m U\ba_i y_i v_i^T}_{\Delta_{r}^\circ}\right] \leq \frac{1}{m}~ \bbE\left[\norm{\sum_{i=1}^m (\ba_i y_i)v_i^T}_\rmF\right] \nonumber\\
\leq &~ \frac{1}{m}~ \sqrt{\bbE\left[\norm{\sum_{i=1}^m (\ba_i y_i)v_i^T}_\rmF^2\right]}\nonumber\\
\leq &~ \frac{1}{m}~ \sqrt{m \bbE\left[\norm{\ba_i y_i}_2^2\right] \bbE\left[\norm{v_i}_2^2\right]} \nonumber\\
\leq &~ \sqrt{\frac{n_2 (r\sigma_{y|a,b}^2+\tau_1^2)}{m}}, \nonumber\\
\bbE[T_4] \leq &~ \sqrt{\frac{n_1 (r\sigma_{y|a,b}^2+\tau_2^2)}{m}}. \nonumber
\end{align}

Suppose $u_i\sim N(0,I_{n_1})$, $v_i\sim N(0,I_{n_2})$, $\{u_i\}_{i=1}^m$, $\{v_i\}_{i=1}^m$, and $\{y_i\}_{i=1}^m$, $\{\ba_i\}_{i=1}^m$, $\{\bb_i\}_{i=1}^m$ are independent. Replacing $\tU\ta_i,\tV\tb_i$ in $T_2$ by $u_i,v_i$, by Lemmas \ref{lem:degen_gauss} and \ref{lem:max2},
\begin{align}
\bbE[T_2] \leq \bbE\left[\norm{\frac{1}{m}\sum_{i=1}^m u_i y_i v_i^T}_{\Delta_{r}^\circ}\right]\leq \frac{\sqrt{2r}}{m}~ \bbE\left[ \norm{\sum_{i=1}^m u_iy_i v_i^T}\right]. \label{eq:T2_second}
\end{align}

We give the following concentration of measure bound on the spectral norm in \eqref{eq:T2_second},
\begin{align}
&~ \bbP\left[\norm{\sum_{i=1}^m u_iy_i v_i^T} \geq t^2 \sqrt{(n_1+n_2)m}\cdot\log m\right] \nonumber\\
\leq &~ \bbP\left[\norm{\sum_{i=1}^m u_iy_i v_i^T} \geq t^2 \sqrt{(n_1+n_2)m}\cdot\log m,~\max_i |y_i| \leq t \log m\right] + \bbP\left[\max_i |y_i| > t \log m\right]  \nonumber\\
\leq &~ (n_1+n_2)\cdot \exp\left(\frac{-t^4}{2t^2+6t^3}\right)+Cm^{1-ct}. \label{eq:use_conc}
\end{align}
The bounds on the first and second terms follow from Lemmas \ref{lem:bern} and \ref{lem:use_exp}, respectively. The derivation for the first bound can be found in Section \ref{sec:spectral}.
By \eqref{eq:use_conc},
\begin{align*}
\bbP\left[\norm{\sum_{i=1}^m u_iy_i v_i^T} \geq t^2 \sqrt{(n_1+n_2)m}\cdot\log m\right]
\leq \begin{cases}
1, & \quad \text{if}~t\leq 8\log(n_1+n_2),\\
(n_1+n_2)\cdot \exp\left(\frac{-t}{8}\right)+2Cm^{1-ct}, & \quad \text{if}~t> 8\log(n_1+n_2).
\end{cases} 
\end{align*}
Hence
\begin{align*}
\bbE\left[ \norm{\sum_{i=1}^m u_iy_i v_i^T}\right] = &~ \int_{0}^{\infty} \bbP\left[\norm{\sum_{i=1}^m u_iy_i v_i^T} \geq x\right] \rmd x\\
\leq &~ \sqrt{(n_1+n_2)m}\cdot \log m\cdot \Big(64\log^2(n_1+n_2)+128\log(n_1+n_2)+128\\
&~ +\frac{16C \log(n_1+n_2)}{c\log m\cdot m^{8c\log(n_1+n_2)-1}}+\frac{2C}{c^2 \log^2m \cdot m^{8c\log(n_1+n_2)-1}} \Big)\\
\leq &~ 256(C+2) \sqrt{(n_1+n_2)m}\cdot \log m \cdot\log^2(n_1+n_2).
\end{align*}
The derivation is tedious but elementary, in which the assumptions $c>\frac{1}{8\log(n_1+n_2)}$ and $m>n_1+n_2$ are invoked.
By \eqref{eq:T2_second},
\begin{align*}
\bbE[T_2] \leq  \frac{\sqrt{2r}}{m}~ \bbE\left[ \norm{\sum_{i=1}^m u_iy_i v_i^T} \right] \leq 256\sqrt{2}(C+2) \sqrt{\frac{(n_1+n_2)r\log^2m\log^4(n_1+n_2)}{m}}
\end{align*}

By Lemma \ref{lem:proj} and \eqref{eq:split3}, we have
\begin{align*}
\bbE\left[\norm{\hU\hSigma\hV^T - UQV^T}_\rmF\right] \leq &~ 2 \bbE\left[\norm{\hX_0 - UQV^T}_{\Delta_{r}^\circ}\right] \leq 2 \bbE[T_1] + 2 \bbE[T_2] + 2 \bbE[T_3] + 2 \bbE[T_4] \\
\leq &~ 2\sqrt{\frac{r^2\sigma_{y|a,b}^2+\sigma^2}{m}} + 512\sqrt{2}(C+2) \sqrt{\frac{(n_1+n_2)r\log^2m\log^4(n_1+n_2)}{m}} \\
&~ + 2\sqrt{\frac{n_2 (r\sigma_{y|a,b}^2+\tau_1^2)}{m}} + 2\sqrt{\frac{n_1 (r\sigma_{y|a,b}^2+\tau_2^2)}{m}} \\
= &~ O\left(\sqrt{\frac{(n_1+n_2)\log^2m\log^4(n_1+n_2)}{m}}\right),
\end{align*}
where the last line uses the assumption that $r=O(1)$. Therefore, Theorem \ref{thm:optimal} follows from Lemma \ref{lem:subspace}.


\subsubsection{Spectral Norm Bound in the Proof of Theorem \ref{thm:optimal}}\label{sec:spectral}
In this section, we prove the first bound in \eqref{eq:use_conc}. We have
\begin{align}
&~ \bbP\left[\norm{\sum_{i=1}^m u_iy_i v_i^T} \geq t^2 \sqrt{(n_1+n_2)m}\cdot\log m,~\max_i |y_i| \leq t \log m~\middle |~\{y_i\}_{i=1}^m \right] \nonumber\\
= &~ \bbP\left[\norm{\sum_{i=1}^m u_iy_i v_i^T} \geq t^2 \sqrt{(n_1+n_2)m}\cdot\log m~\middle |~\{y_i\}_{i=1}^m \right] \cdot \ind{\max_i |y_i| \leq t \log m} \nonumber\\
\leq &~ (n_1+n_2)\cdot \exp\left(\frac{-t^4 (n_1+n_2)m\log^2 m /2}{t^2(n_1+n_2)m\log^2 m+et^3 (n_1+n_2)m\log^2 m}\right) \label{eq:use_bern}\\
\leq &~ (n_1+n_2)\cdot \exp\left(\frac{-t^4}{2t^2+6t^3}\right), \nonumber
\end{align}
Next, we show how \eqref{eq:use_bern} follows from the matrix Bernstein inequality in Lemma \ref{lem:bern}. 
The rest of the derivation is conditioned on $\{y_i\}_{i=1}^m$ that satisfy $\max_i |y_i| \leq t \log m$, hence $\sum_i y_i^2 \leq t^2 m (\log m)^2$. Define $(n_1+n_2)\times (n_1+n_2)$ matrices ($i=1,2,\cdots,m$):
\[
X_i = \begin{bmatrix}
0 & u_iy_i v_i^T\\
v_iy_i u_i^T & 0
\end{bmatrix}.
\]
They satisfy
\[
\lambda_{\max}\left(\sum_i X_i\right) = \norm{\sum_i u_i y_i v_i^T},
\]
\[
\bbE[X_i] = 0,\quad \bbE\left[X_i^k\right]  = 0, \quad \text{if $k$ is odd,}
\]
\begin{align*}
\bbE\left[X_i^k\right] 
= &~ y_i^k (n_1+2)\cdots(n_1+k-2)(n_2+2)\cdots(n_2+k-2)\begin{bmatrix}
n_2I_{n_1} & 0\\
0 & n_1I_{n_2}
\end{bmatrix}\\
\preceq &~ \frac{k!}{2}\left[e(n_1+n_2)\max_i|y_i|\right]^{k-2} \begin{bmatrix}
y_i^2 n_2I_{n_1} & 0\\
0 & y_i^2 n_1I_{n_2}
\end{bmatrix},  \quad \text{if $k$ is even.}
\end{align*}

Let $R= e(n_1+n_2)\max_i|y_i| \leq et (n_1+n_2)\log m\leq et \sqrt{(n_1+n_2)m}\cdot \log m$,
\[
A_i^2 = \begin{bmatrix}
y_i^2 n_2I_{n_1} & 0\\
0 & y_i^2 n_1I_{n_2}
\end{bmatrix},
\]
and $\sigma_A^2 = \sum_i y_i^2 \max\{n_1,n_2\}\leq t^2(n_1+n_2)m(\log m)^2$. Then \eqref{eq:use_bern} follows from Lemma \ref{lem:bern}.

\subsection{Proof of Theorem \ref{thm:gfs}}\label{sec:proof_gfs}
Lemma \ref{lem:hX2} follows trivially from the definitions of $\Omega_1$ and $\Omega_2$.
\begin{lemma} \label{lem:hX2}
Suppose $\Omega_{12} = \Omega_1 \bigcap \Omega_2 = \{X\in\bbR^{n_1\times n_2}: \norm{X^{(:,k)}}_0\leq s_1,~\forall k\in [n_2],~\norm{X}_{0,c}\leq s_2\}$. Then
\[\hX_2 = P_{\Omega_2} P_{\Omega_1} \hX_0 = P_{\Omega_{12}} \hX_0.\] 
\end{lemma}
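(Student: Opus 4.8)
The plan is to view the projection onto $\Omega_{12}$ as a two-stage optimization — first pick which $\le s_2$ columns are allowed to be nonzero, then pick an $s_1$-sparse approximation within each of those columns — and to show that the composition $P_{\Omega_2}\circ P_{\Omega_1}$ executes exactly this two-stage optimum. The key structural fact is that the Frobenius objective $\norm{X-\hX_0}_\rmF^2 = \sum_{k\in[n_2]}\norm{X^{(:,k)}-\hX_0^{(:,k)}}_2^2$ is separable across columns, and that the $\Omega_1$ constraint $\norm{X^{(:,k)}}_0\le s_1$ and the $\Omega_2$ constraint (which columns are active) decouple per column.

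First I would record that $P_{\Omega_1}$ acts columnwise: $\hX_1=P_{\Omega_1}\hX_0$ is obtained by replacing each column $\hX_0^{(:,k)}$ with its best $s_1$-term approximation, i.e., hard-thresholding to the $s_1$ entries of largest magnitude, and that by disjointness of supports this gives the Pythagorean identity $\norm{\hX_1^{(:,k)}-\hX_0^{(:,k)}}_2^2 = \norm{\hX_0^{(:,k)}}_2^2-\norm{\hX_1^{(:,k)}}_2^2$ for every $k$. Next I would analyze $P_{\Omega_{12}}\hX_0$ directly: any $X\in\Omega_{12}$ has column support contained in some set $S\subseteq[n_2]$ with $|S|\le s_2$, and for fixed $S$ the objective splits as $\sum_{k\in S}\min_{\norm{x}_0\le s_1}\norm{x-\hX_0^{(:,k)}}_2^2+\sum_{k\notin S}\norm{\hX_0^{(:,k)}}_2^2$, whose inner minima are attained by $\hX_1^{(:,k)}$. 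Substituting the Pythagorean identity, the value for a fixed $S$ equals $\norm{\hX_0}_\rmF^2-\sum_{k\in S}\norm{\hX_1^{(:,k)}}_2^2$, so minimizing over admissible $S$ is the same as maximizing $\sum_{k\in S}\norm{\hX_1^{(:,k)}}_2^2$ subject to $|S|\le s_2$. This is solved by taking $S^\star$ to be the indices of the $s_2$ columns of $\hX_1$ of largest $\ell_2$ norm; the resulting minimizer — equal to $\hX_1^{(:,k)}$ for $k\in S^\star$ and zero elsewhere — is exactly $P_{\Omega_2}\hX_1=\hX_2$, which proves $\hX_2=P_{\Omega_{12}}\hX_0$.

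The argument is essentially bookkeeping, and the lemma is elementary; the only point needing care is the legitimacy of interchanging the two minimizations (over the column support $S$, and over the $s_1$-sparse columns given $S$), which is justified precisely by the columnwise separability noted above — the best columns to keep, and the best entries to keep within a column, are chosen independently. I would also remark that $P_{\Omega_1}$, $P_{\Omega_2}$, and $P_{\Omega_{12}}$ are all set-valued in the presence of ties (equal-magnitude entries within a column, or equal-norm columns), so the identity should be read with a single fixed tie-breaking convention (e.g., smallest index) applied consistently throughout, after which all three projections are single-valued and the equality holds verbatim.
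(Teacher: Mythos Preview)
Your argument is correct. The paper does not actually prove this lemma: it merely states that ``Lemma~\ref{lem:hX2} follows trivially from the definitions of $\Omega_1$ and $\Omega_2$'' and moves on. Your two-stage decomposition (first optimize the $s_1$-sparse approximation within each column, then optimize the choice of $\le s_2$ active columns using the Pythagorean identity to reduce the outer problem to maximizing $\sum_{k\in S}\norm{\hX_1^{(:,k)}}_2^2$) is precisely the elementary calculation the paper has in mind but omits, and your remark about tie-breaking is a useful clarification the paper does not make explicit.
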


\begin{lemma} \label{lem:max}
Suppose $\Delta_{12} = (\Omega_{12}-\Omega_{12}) \bigcap \calB_{n_1\times n_2}$. Then
\[
\norm{Y}_{\Delta_{12}^\circ} \leq \min\left\{\norm{Y}_\rmF,~\sqrt{2s_1s_2} \max_{j,k}\left|Y^{(j,k)}\right|\right\}.
\]
\end{lemma}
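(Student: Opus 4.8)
The plan is to mirror the proof of Lemma \ref{lem:max2}, replacing ``low rank / nuclear norm'' there by ``sparsity / entrywise $\ell_1$ norm'' here. First I would dispatch the bound $\norm{Y}_{\Delta_{12}^\circ}\le\norm{Y}_\rmF$ exactly as in Lemma \ref{lem:max2}: since every $X\in\Delta_{12}$ lies in the unit Frobenius ball $\calB_{n_1\times n_2}$, the Cauchy--Schwarz inequality gives $\norm{Y}_{\Delta_{12}^\circ}=\sup_{X\in\Delta_{12}}\left<Y,X\right>\le\sup_{X\in\Delta_{12}}\norm{X}_\rmF\,\norm{Y}_\rmF=\norm{Y}_\rmF$.

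For the second bound, the key step I would establish is a support-counting claim: every $X\in\Omega_{12}-\Omega_{12}$ has at most $2s_1s_2$ nonzero entries. To prove it I would write $X=X_1-X_2$ with $X_1,X_2\in\Omega_{12}$, and let $S_\ell$ ($\ell=1,2$) denote the set of nonzero columns of $X_\ell$, so $|S_\ell|\le s_2$ and every column of $X_\ell$ has at most $s_1$ nonzero entries. A column of $X$ can be nonzero only if its index lies in $S_1\cup S_2$; a column indexed in exactly one of $S_1,S_2$ has at most $s_1$ nonzero entries, while a column indexed in $S_1\cap S_2$ has at most $2s_1$. Writing $a=|S_1\setminus S_2|$, $b=|S_2\setminus S_1|$, $c=|S_1\cap S_2|$, the total number of nonzero entries of $X$ is at most $(a+b+2c)s_1=(a+c)s_1+(b+c)s_1\le 2s_1s_2$, where I use $a+c=|S_1|\le s_2$ and $b+c=|S_2|\le s_2$. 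Consequently, for $X\in\Delta_{12}$, which additionally satisfies $\norm{X}_\rmF\le 1$, applying Cauchy--Schwarz over the support yields $\norm{X}_1\coloneqq\sum_{j,k}\left|X^{(j,k)}\right|\le\sqrt{2s_1s_2}\,\norm{X}_\rmF\le\sqrt{2s_1s_2}$; that is, $\Delta_{12}\subset\{X\in\bbR^{n_1\times n_2}:\norm{X}_1\le\sqrt{2s_1s_2}\}$, the sparse analogue of the inclusion $\Delta_r\subset\Delta_*$ used in Lemma \ref{lem:max2}.

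To finish, I would invoke Hölder's inequality for the dual pair (entrywise $\ell_1$, entrywise $\ell_\infty$) on the vectorized matrices:
\[
\norm{Y}_{\Delta_{12}^\circ}=\sup_{X\in\Delta_{12}}\left<Y,X\right>\le\sup_{\norm{X}_1\le\sqrt{2s_1s_2}}\left<Y,X\right>=\sqrt{2s_1s_2}\,\max_{j,k}\left|Y^{(j,k)}\right|,
\]
which combined with the first bound gives the claimed minimum. The only nonroutine point I anticipate is the support-counting inequality $|\mathrm{supp}(X)|\le 2s_1s_2$: the crude estimate ``$\le 2s_2$ nonzero columns, each with $\le 2s_1$ nonzero entries'' only yields the weaker constant $4s_1s_2$, and the sharp constant requires the observation above that columns outside the overlap $S_1\cap S_2$ contribute at most $s_1$ entries each, with the bookkeeping $(a+c)+(b+c)\le 2s_2$ balancing the two contributions.
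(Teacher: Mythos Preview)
Your proposal is correct and follows essentially the same route as the paper: Cauchy--Schwarz for the Frobenius bound, then the inclusion $\Delta_{12}\subset\{X:\norm{\vect(X)}_1\le\sqrt{2s_1s_2}\}$ followed by H\"older for the $\ell_\infty$ bound. The support count is in fact simpler than you anticipate: each $X_\ell\in\Omega_{12}$ already has at most $s_1s_2$ nonzero entries (at most $s_2$ nonzero columns, each with at most $s_1$ nonzeros), so $\norm{X_1-X_2}_0\le\norm{X_1}_0+\norm{X_2}_0\le 2s_1s_2$ directly, and the paper uses this without further comment.
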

\begin{proof}
Similar to Lemma \ref{lem:max2}, $\norm{Y}_{\Delta_{12}^\circ} \leq \norm{Y}_\rmF$.
Since
\[
\Delta_{12} \subset \{X\in\bbR^{n_1\times n_2}: \norm{X}_0\leq 2s_1s_2, \norm{X}_\rmF\leq 1\} \subset \{X\in\bbR^{n_1\times n_2}: \norm{\vect(X)}_1\leq \sqrt{2s_1s_2}\} \eqqcolon \Delta_{\ell_1},
\]
By H\"{o}lder's inequality,
\begin{align*}
\norm{Y}_{\Delta_{12}^\circ} = \sup\limits_{X\in\Delta_{12}} \left<Y,X\right>\leq \sup\limits_{X\in\Delta_{\ell_1}} \left<Y,X\right> \leq \sup\limits_{X\in\Delta_{\ell_1}} \norm{\vect(X)}_1\norm{\vect(Y)}_\infty = \sqrt{2s_1s_2}\max_{j,k} \left|Y^{(j,k)}\right|.
\end{align*}
\end{proof}

\begin{lemma} \label{lem:order}
Suppose $u_{i}^{(j)}$ ($i=1,2,\cdots,m$, $j=1,2,\cdots,n$) are i.i.d. Gaussian random variables $N(0,1)$. Then
\[
\bbE\left[\max_{j\in[n]} \sqrt{\sum_{i=1}^{m}\sigma_i^2 u_i^{(j)2}}\right] \leq \sqrt{(3\log n + 2) \sum_{i=1}^m \sigma_i^2}.
\]
\end{lemma}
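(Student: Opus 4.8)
The plan is to reduce the statement to a scalar concentration estimate for a weighted $\chi^2$ random variable. Write $W_j \coloneqq \sum_{i=1}^m \sigma_i^2 (u_i^{(j)})^2$ and $S \coloneqq \sum_{i=1}^m \sigma_i^2$, so the quantity to bound is $\bbE\big[\max_{j\in[n]}\sqrt{W_j}\big]$ and the target is $\sqrt{(3\log n + 2)S}$. First I would apply Jensen's inequality twice: $\bbE\big[\max_j \sqrt{W_j}\big] = \bbE\big[\sqrt{\max_j W_j}\big] \le \sqrt{\bbE\big[\max_j W_j\big]}$, so it suffices to prove $\bbE\big[\max_j W_j\big] \le (3\log n + 2)S$. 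Note that $W_1,\dots,W_n$ are i.i.d.\ (the columns $u^{(j)}$ are independent), so this is a maximum-of-i.i.d.\ problem.

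For that I would use a Chernoff/union bound on the maximum. For $0 \le \theta < \tfrac{1}{2S}$, each $W_j$ is a weighted sum of independent $\chi^2_1$ variables, hence $\bbE[e^{\theta W_j}] = \prod_{i=1}^m (1-2\theta\sigma_i^2)^{-1/2}$. The key simplification is that $x\mapsto -\log(1-x)$ is convex on $[0,1)$ and vanishes at $0$, so $-\log(1-x)\le \tfrac{x}{\bar x}\big(-\log(1-\bar x)\big)$ for $0\le x\le\bar x<1$; applying this with $\bar x = 2\theta S$ (which dominates each $2\theta\sigma_i^2$ since $S\ge\sigma_i^2$) and summing over $i$ gives $\sum_i \big(-\log(1-2\theta\sigma_i^2)\big)\le -\log(1-2\theta S)$, i.e. $\bbE[e^{\theta W_j}]\le (1-2\theta S)^{-1/2}$. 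This is exactly the worst case where all the weight sits in one coordinate and $W_j$ is $S$ times a $\chi^2_1$; it is what makes the final bound depend on $\{\sigma_i\}$ only through $S=\sum_i\sigma_i^2$.

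Then the standard step $e^{\theta\,\bbE[\max_j W_j]}\le \bbE[e^{\theta\max_j W_j}]\le \sum_{j=1}^n\bbE[e^{\theta W_j}]\le n(1-2\theta S)^{-1/2}$ yields $\bbE[\max_j W_j]\le \tfrac1\theta\big(\log n - \tfrac12\log(1-2\theta S)\big)$ for every admissible $\theta$. Choosing $\theta = \tfrac{1}{3S}$ makes $1-2\theta S = \tfrac13$ and $\tfrac1\theta = 3S$, so $\bbE[\max_j W_j]\le 3S\log n + \tfrac32 S\log 3 \le (3\log n + 2)S$ since $\tfrac32\log 3 < 2$ (equivalently $3<e^{4/3}$). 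Plugging back into the Jensen step gives the claim, and this choice of $\theta$ is valid and the bound holds for all $n\ge 1$, so degenerate cases (e.g.\ $n=1$, or some $\sigma_i=0$) need no separate treatment.

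I do not anticipate a real obstacle. The two points requiring a little care are (i) the convexity reduction of the product MGF to the single-coordinate $\chi^2$ case, which is what collapses the dependence on $\{\sigma_i\}$ to $S$; and (ii) the numerics, where the particular choice $\theta=1/(3S)$ is what produces the coefficient $3$ on $\log n$ while keeping the additive constant strictly below $2$ — a slightly smaller coefficient than $3$ is in fact available, so the stated constant leaves room to spare.
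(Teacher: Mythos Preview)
Your proposal is correct and follows essentially the same approach as the paper: Jensen to reduce to $\bbE[\max_j W_j]$, then a Chernoff/union bound via the product MGF, with the same choice $\theta=1/(3S)$. The only cosmetic difference is that the paper bounds each factor via the elementary inequality $-\tfrac12\log(1-2x)\le 2x$ on $[0,\tfrac13]$ rather than your convexity reduction to a single $\chi^2_1$; both yield the stated bound.
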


\begin{proof}
Let $d^{(j)} \coloneqq \sqrt{\sum_{i=1}^{m}\sigma_i^2 u_i^{(j)2}}$, and $d \coloneqq \max_{j\in[n]} d^{(j)}$.
By Jensen's inequality,
\[
e^{t\bbE[d^2]} \leq \bbE\left[e^{td^2}\right] \leq \sum_{j=1}^{n}\bbE\left[e^{td^{(j)2}}\right] = n \prod_{i=1}^m\bbE\left[e^{t\sigma_i^2 u_i^{(1)2}}\right] =  n \prod_{i=1}^m \left(1-2t\sigma_i^2\right)^{-\frac{1}{2}},\quad \forall~ 0<t<\frac{1}{2\max_i\sigma_i^2}.
\]
Therefore,
\[
\bbE[d^2] \leq \frac{\log n}{t} - \frac{1}{2t}\sum_{i=1}^{m}\log (1-2t\sigma_i^2),\quad \forall~ 0<t<\frac{1}{2\max_i\sigma_i^2}.
\]
It is easy to verify that $-\frac{1}{2}\log(1-2x) \leq 2x$ for $0<x<\frac{1}{3}$. Choose $t = \frac{1}{3\sum_{i=1}^m \sigma_i^2}$, then $0<t\sigma_i^2<\frac{1}{3}$. Hence
\[
\bbE[d^2] \leq \frac{\log n}{t} +\frac{1}{t}\sum_{i=1}^m 2t\sigma_i^2 = (3\log n + 2) \sum_{i=1}^m \sigma_i^2,
\]
\[
\bbE[d] \leq \sqrt{\bbE[d^2]} \leq \sqrt{(3\log n + 2) \sum_{i=1}^m \sigma_i^2}.
\]
\end{proof}

\textbf{Next, we prove Theorem \ref{thm:gfs}.}
By \eqref{eq:split} and triangle inequality,
\begin{align}
\norm{\hX_0 - UQV^T}_{\Delta_{12}^\circ} \leq &~ \norm{ U \left(\frac{1}{m}\sum_{i=1}^m\ba_i y_i \bb_i^T -Q\right)V^T}_{\Delta_{12}^\circ} + \norm{\tU\left(\frac{1}{m}\sum_{i=1}^m\ta_i y_i \tb_i^T\right)\tV^T}_{\Delta_{12}^\circ}   \nonumber\\
&~ + \norm{U \left(\frac{1}{m}\sum_{i=1}^m\ba_i y_i \tb_i^T \right)\tV^T}_{\Delta_{12}^\circ} + \norm{\tU\left(\frac{1}{m}\sum_{i=1}^m\ta_i y_i \bb_i^T\right)V^T }_{\Delta_{12}^\circ} \nonumber\\
\eqqcolon &~ T_1+T_2+T_3+T_4. \label{eq:split2}
\end{align}
Next, we bound the expectation of the four terms. Similar to \eqref{eq:T1},
\[
\bbE[T_1] \leq \sqrt{\frac{r^2\sigma_{y|a,b}^2+\sigma^2}{m}}.
\]

Suppose $u_i\sim N(0,I_{n_1})$, $v_i\sim N(0,I_{n_2})$, $\{u_i\}_{i=1}^m$, $\{v_i\}_{i=1}^m$, and $\{y_i\}_{i=1}^m$, $\{\ba_i\}_{i=1}^m$, $\{\bb_i\}_{i=1}^m$ are independent. Replacing $\tU\ta_i,\tV\tb_i$ in $T_2$ by $u_i,v_i$, by Lemmas \ref{lem:degen_gauss} and \ref{lem:max},
\begin{align}
\bbE[T_2] \leq \bbE\left[\norm{\frac{1}{m}\sum_{i=1}^m u_i y_i v_i^T}_{\Delta_{12}^\circ}\right]\leq \frac{\sqrt{2s_1s_2}}{m}~ \bbE\left[\max_{j,k} \Big|\sum_{i=1}^m u_i^{(j)}y_i v_i^{(k)}\Big|\right]. \label{eq:T2_first}
\end{align}
Conditioned on $\{y_i,v_i\}_{i=1}^{m}$, the distribution of $\sum_{i=1}^m u_i^{(j)}y_i v_i^{(k)}$ is $N(0,\sum_{i=1}^m y_i^2 v_i^{(k)2})$. By Lemma \ref{lem:order},
\begin{align*}
\bbE\left[\max_{j,k} \Big|\sum_{i=1}^m u_i^{(j)}y_i v_i^{(k)}\Big|~ \middle|~ \{y_i,v_i\}_{i=1}^{m}\right] \leq &~ \max_k \sqrt{(3\log n_1 +2)}\cdot\sqrt{\sum_{i=1}^m y_i^2 v_i^{(k)2}}\\
 \leq &~ 2\sqrt{\log n_1} \max_k \sqrt{\sum_{i=1}^m y_i^2 v_i^{(k)2}}.
\end{align*}
The second line follows from $n_1\geq 8$. Conditioned on $\{y_i\}_{i=1}^{m}$ alone, apply Lemma \ref{lem:order} one more time,
\begin{align*}
\bbE\left[\max_{j,k} \Big|\sum_{i=1}^m u_i^{(j)}y_i v_i^{(k)}\Big|~ \middle|~ \{y_i\}_{i=1}^{m}\right] \leq &~ 2\sqrt{\log n_1} \bbE\left[ \max_k \sqrt{\sum_{i=1}^m y_i^2 v_i^{(k)2}} ~ \middle|~ \{y_i\}_{i=1}^{m}\right] \\
\leq &~ 4\sqrt{\log n_1 \log n_2} \sqrt{\sum_{i=1}^m y_i^2}.
\end{align*}
By \eqref{eq:T2_first},
\begin{align*}
\bbE[T_2] \leq &~ \frac{\sqrt{2s_1s_2}}{m}~ \bbE\left[\max_{j,k} \Big|\sum_{i=1}^m u_i^{(j)}y_i v_i^{(k)}\Big|\right] \\
\leq &~ \frac{4\sqrt{2s_1s_2\log n_1\log n_2}}{m} \bbE\left[ \sqrt{\sum_{i=1}^m y_i^2} \right]\\
\leq &~ 4\sqrt{\frac{2s_1s_2\log n_1\log n_2 \cdot (\sigma_{y|a,b}^2+\tau_0^2)}{m}}.
\end{align*}

The bounds on the expectations of $T_3$ and $T_4$ can be derived similarly. 
\begin{align*}
\bbE[T_3] \leq &~ \bbE\left[\norm{\frac{1}{m}\sum_{i=1}^m U\ba_i y_i v_i^T}_{\Delta_{12}^\circ}\right] \leq \frac{\sqrt{2s_1s_2}}{m}~ \bbE\left[\max_{j,k} \Big|\sum_{i=1}^m (U\ba_i y_i)^{(j)}v_i^{(k)}\Big|\right] \\
\leq &~ \frac{2\sqrt{2s_1s_2\log n_2}}{m}~\bbE\left[\max_{j} \sqrt{\sum_{i=1}^m (U\ba_i y_i)^{(j)2} } \right] \\
\leq &~ \frac{2\sqrt{2s_1s_2\log n_2}}{m}~\bbE\left[\sqrt{\sum_{j=1}^{n_1}\sum_{i=1}^m (U\ba_i y_i)^{(j)2} } \right] \\
\leq &~ \frac{2\sqrt{2s_1s_2\log n_2}}{m}~\sqrt{\bbE\left[\sum_{i=1}^m \norm{\ba_i y_i}_2^2 \right]} \\
\leq &~ 2\sqrt{\frac{2s_1s_2\log n_2 \cdot (r\sigma_{y|a,b}^2+\tau_1^2)}{m}}. \\
\bbE[T_4] \leq &~ 2\sqrt{\frac{2s_1s_2\log n_1 \cdot(r\sigma_{y|a,b}^2+\tau_2^2)}{m}}.
\end{align*}

By Lemmas \ref{lem:proj} and \ref{lem:hX2}, and \eqref{eq:split2}, we have
\begin{align*}
\bbE\left[\norm{\hX_2 - UQV^T}_\rmF\right] \leq &~ 2 \bbE\left[\norm{\hX_0 - UQV^T}_{\Delta_{12}^\circ}\right] \leq 2 \bbE[T_1] + 2 \bbE[T_2] + 2 \bbE[T_3] + 2 \bbE[T_4] \\
\leq &~ 2\sqrt{\frac{r^2\sigma_{y|a,b}^2+\sigma^2}{m}} + 8\sqrt{\frac{2s_1s_2\log n_1\log n_2 \cdot (\sigma_{y|a,b}^2+\tau_0^2)}{m}} \\
&~ + 4\sqrt{\frac{2s_1s_2\log n_2 \cdot (r\sigma_{y|a,b}^2+\tau_1^2)}{m}} + 4\sqrt{\frac{2s_1s_2\log n_1 \cdot(r\sigma_{y|a,b}^2+\tau_2^2)}{m}} \\
= &~ O\left(\sqrt{\frac{s_1s_2\log n_1\log n_2}{m}}\right).
\end{align*}
The last line is due to the fact that $r,\sigma_{y|a,b},\sigma,\tau_0,\tau_1,\tau_2$ are all independent of $n_1$, $n_2$, and $m$.

Since $\hX_3 = P_{\Omega_3} \hX_2$, and $UQV^T\in \Omega_3$, we have
\[
\norm{\hX_3-UQV^T}_\rmF\leq \norm{\hX_3-\hX_2}_\rmF + \norm{\hX_2-UQV^T}_\rmF\leq 2\norm{\hX_2-UQV^T}_\rmF.
\]
Similarly, $\hU\hSigma\hV^T = P_{\Omega_r}\hX_3$, and $UQV^T\in \Omega_r$, hence
\[
\norm{\hU\hSigma\hV^T-UQV^T}_\rmF\leq \norm{\hU\hSigma\hV^T-\hX_3}_\rmF + \norm{\hX_3-UQV^T}_\rmF\leq 2\norm{\hX_3-UQV^T}_\rmF \leq 4\norm{\hX_2-UQV^T}_\rmF.
\]
By Lemma \ref{lem:subspace},
\begin{align*}
\max\left\{\bbE\left[d(U,\hU)\right],~\bbE\left[d(V,\hV)\right] \right\} &\leq \frac{1}{\sigma_r}\bbE\left[\norm{UQV^T-\hU\hSigma\hV^T}_\rmF\right] \leq \frac{4}{\sigma_r} \bbE\left[\norm{\hX_2 - UQV^T}_\rmF\right] \\
& = O\left(\sqrt{\frac{s_1s_2\log n_1\log n_2}{m}}\right).
\end{align*}

\section{Mildness of the Light-tailed Response Condition} \label{sec:mild}

In this section, we demonstrate that this condition holds under reasonably mild assumptions on $f(\cdot,\cdot)$ and $y-\mu_{y|(a,b)}$. To this end, we review a known fact: a probability distribution is light-tailed if its moment generating function is finite at some point. This is made more precise in Proposition \ref{pro:light}, which follows trivially from Chernoff bound.
\begin{proposition}\label{pro:light}
Let $M_y(t) = \bbE\left[e^{ty}\right]$ denote the moment generating function of a random variable $y$. Then $y$ is a light-tailed random variable, if
\begin{itemize}
\item there exist $t_1>0$ and $t_2<0$ such that $M_y(t_1)<\infty$ and $M_y(t_2)<\infty$.
\item $y\geq 0$ almost surely, and there exists $t_1>0$ such that $M_y(t_1)<\infty$.
\item $y\leq 0$ almost surely, and there exists $t_2<0$ such that $M_y(t_2)<\infty$.
\end{itemize}
\end{proposition}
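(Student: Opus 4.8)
The plan is to apply the Chernoff bound—i.e.\ Markov's inequality applied to the exponential moments—in each of the three cases, and then assemble the resulting one-sided tail estimates into a bound of the form $\bbP[|y|\ge t]\le Ce^{-ct}$ as required by Definition \ref{def:light}.

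First I would record the two generic one-sided estimates. Since $t_1>0$, the map $s\mapsto e^{t_1 s}$ is increasing, so for every $t\ge 0$,
\[
\bbP[y\ge t]=\bbP\left[e^{t_1 y}\ge e^{t_1 t}\right]\le e^{-t_1 t}\,\bbE\left[e^{t_1 y}\right]=M_y(t_1)\,e^{-t_1 t},
\]
using Markov's inequality and the assumed finiteness of $M_y(t_1)$. Symmetrically, since $t_2<0$ the map $s\mapsto e^{t_2 s}$ is decreasing, hence $\{y\le -t\}=\{e^{t_2 y}\ge e^{-t_2 t}\}$ and
\[
\bbP[y\le -t]=\bbP\left[e^{t_2 y}\ge e^{-t_2 t}\right]\le e^{t_2 t}\,M_y(t_2),
\]
where $e^{t_2 t}=e^{-|t_2|t}$ decays in $t$ because $t_2<0$.

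For the first bullet I would add the two bounds: $\bbP[|y|\ge t]\le\bbP[y\ge t]+\bbP[y\le -t]\le M_y(t_1)e^{-t_1 t}+M_y(t_2)e^{-|t_2|t}\le\bigl(M_y(t_1)+M_y(t_2)\bigr)e^{-\min\{t_1,|t_2|\}t}$, which is the light-tailed response condition with $C=M_y(t_1)+M_y(t_2)$ and $c=\min\{t_1,|t_2|\}>0$. For the second bullet, $y\ge 0$ almost surely gives $|y|=y$, so only the upper-tail estimate is needed: $\bbP[|y|\ge t]\le M_y(t_1)e^{-t_1 t}$, with $C=M_y(t_1)$ and $c=t_1$; here $M_y(t_1)=\bbE[e^{t_1 y}]\ge 1$ since $e^{t_1 y}\ge 1$ almost surely, so the bound is also valid at $t=0$, where the left-hand side is at most $1$. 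The third bullet is the mirror image: $y\le 0$ almost surely gives $|y|=-y$, and the lower-tail estimate yields the condition with $C=M_y(t_2)$ and $c=|t_2|$.

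I do not expect any genuine obstacle—this is a textbook consequence of Markov's inequality applied to $e^{t_1 y}$ and $e^{t_2 y}$, exactly as the Chernoff bound. The only places that warrant a moment of care are the sign bookkeeping in the decreasing regime $t_2<0$ (so that the event $\{y\le -t\}$ correctly translates into a lower bound on $e^{t_2 y}$ and the exponent $t_2 t$ comes out negative), and verifying that the claimed inequality holds at the endpoint $t=0$, which it does because the relevant moment generating function is always at least $1$ in the one-sided cases.
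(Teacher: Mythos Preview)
Your proposal is correct and matches the paper's approach exactly: the paper does not give a detailed proof at all, stating only that Proposition~\ref{pro:light} ``follows trivially from Chernoff bound,'' which is precisely the Markov-on-exponentials argument you carry out. One tiny point you might add for completeness in the first bullet: at $t=0$ the bound requires $C\ge 1$, and indeed $M_y(t_1)+M_y(t_2)=\bbE[e^{t_1 y}+e^{t_2 y}]\ge 1$ since for every real $y$ at least one of $t_1 y\ge 0$ or $t_2 y\ge 0$ holds, making one of the two exponentials at least $1$.
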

In the context of this paper, we have the following corollary:
\begin{corollary}\label{cor:light}
Suppose $f(\ba,\bb)$ satisfies $|f(\ba,\bb)|\leq \max\left\{C_1,~C_2\left(\norm{\ba}_2^2+\norm{\bb}_2^2\right)\right\}$ for some $C_1,C_2>0$, and $y-\mu_{y|(a,b)} = y-f(\ba,\bb)$ is a light-tailed random variable. Then $y$ is a light-tailed random variable.
\end{corollary}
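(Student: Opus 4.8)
The plan is to verify the first bulleted condition in Proposition \ref{pro:light}, namely that the moment generating function $M_y(t)=\bbE[e^{ty}]$ is finite at some positive $t_1$ and some negative $t_2$; by that proposition this immediately yields light-tailedness of $y$. First I would split $y = (y - f(\ba,\bb)) + f(\ba,\bb)$ and use the fact that, by hypothesis, the first summand $w \coloneqq y - f(\ba,\bb)$ is light-tailed, so $M_w(s)<\infty$ for $s$ in some interval $(-s_0,s_0)$ around the origin. The difficulty is that $w$ and $f(\ba,\bb)$ are not independent, so I cannot simply multiply moment generating functions; instead I would apply the Cauchy–Schwarz inequality conditionally or directly: $\bbE[e^{ty}] = \bbE[e^{tw} e^{tf(\ba,\bb)}] \le \sqrt{\bbE[e^{2tw}]}\sqrt{\bbE[e^{2tf(\ba,\bb)}]}$.

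The first factor is finite for $|t|$ small enough because $w$ is light-tailed (take $2|t| < s_0$). For the second factor I would invoke the growth bound $|f(\ba,\bb)| \le \max\{C_1, C_2(\norm{\ba}_2^2 + \norm{\bb}_2^2)\}$, so that $e^{2tf(\ba,\bb)} \le e^{2|t|C_1} + e^{2|t|C_2(\norm{\ba}_2^2+\norm{\bb}_2^2)}$. Now $\norm{\ba}_2^2 = \norm{U^T a}_2^2$ and $\norm{\bb}_2^2 = \norm{V^T b}_2^2$, and by Lemma \ref{lem:r_ind} these are $\chi^2$-distributed with $r$ degrees of freedom each (and independent, being functions of the independent vectors $a,b$). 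The moment generating function of a $\chi^2_r$ random variable $\bbE[e^{\lambda \chi^2_r}] = (1-2\lambda)^{-r/2}$ is finite for $\lambda < 1/2$, so $\bbE[e^{2|t|C_2(\norm{\ba}_2^2+\norm{\bb}_2^2)}] = (1-4|t|C_2)^{-r} < \infty$ provided $4|t|C_2 < 1$. Hence the second factor is finite for $|t| < 1/(4C_2)$.

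Putting these together, for any $t$ with $0 < |t| < \min\{s_0/2,\ 1/(4C_2)\}$ both factors in the Cauchy–Schwarz bound are finite, so $M_y(t)<\infty$ on a neighborhood of the origin; in particular there exist $t_1>0$ and $t_2<0$ with $M_y(t_1),M_y(t_2)<\infty$, and Proposition \ref{pro:light} gives the claim. The only mildly delicate point is handling the dependence between $y-f(\ba,\bb)$ and $f(\ba,\bb)$, which the Cauchy–Schwarz step resolves cleanly at the cost of doubling the argument of each moment generating function; everything else is a routine moment generating function computation for a chi-squared variable.
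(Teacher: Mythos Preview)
Your argument is correct, but it takes a somewhat different route from the paper's. The paper never touches the MGF of $y$ directly; instead it works at the tail-probability level, writing
\[
\bbP\bigl[|y|\ge t\bigr]\;\le\;\bbP\bigl[|f(\ba,\bb)|\ge t/2\bigr]+\bbP\bigl[|y-f(\ba,\bb)|\ge t/2\bigr],
\]
so that it suffices to show $\mu_{y|(a,b)}=f(\ba,\bb)$ is light-tailed. It then bounds the MGF of $f(\ba,\bb)$ alone via the same $\chi^2$ computation you carry out, and invokes Proposition~\ref{pro:light} for $\mu_{y|(a,b)}$. Your approach instead bounds $M_y(t)$ directly by Cauchy--Schwarz, which handles the dependence between $w$ and $f(\ba,\bb)$ at the cost of doubling the MGF arguments. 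Both routes hinge on the identical $\chi^2_r$ MGF bound; the paper's tail-probability split is marginally more self-contained because it uses only the forward direction of Proposition~\ref{pro:light}, whereas your argument quietly invokes the (standard, but unstated in the paper) converse that a light-tailed variable has finite MGF in a neighborhood of the origin. That converse is an easy layer-cake computation, so your proof is complete, just organized differently.
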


\begin{proof}
Since $\bbP\left[\left|y\right|\geq t\right] \leq \bbP\left[\left|\mu_{y|(a,b)}\right|\geq t/2\right]+\bbP\left[\left|y-\mu_{y|(a,b)}\right|\geq t/2\right]$, and $y-\mu_{y|(a,b)}$ is light-tailed, it is sufficient to show that $\mu_{y|(a,b)}$ is light-tailed. The moment generating function of $\mu_{y|(a,b)}$ is
\begin{align*}
M_\mu(t) = &~ \bbE[e^{tf(\ba,\bb)}] \leq  \bbE[e^{|t|\cdot|f(\ba,\bb)|}] \leq e^{C_1|t|}\bbE\left[e^{C_2|t|\left(\norm{\ba}_2^2+\norm{\bb}_2^2\right)}\right] \\
= &~ \frac{e^{C_1|t|}}{(2\pi)^r} \int\limits_{\bb}\int\limits_{\ba} e^{\left(C_2|t|-\frac{1}{2}\right)\left(\norm{\ba}_2^2+\norm{\bb}_2^2\right)}~ \rmd \ba ~\rmd \bb,
\end{align*}
which is finite for $|t|<\frac{1}{2C_2}$. By Proposition \ref{pro:light}, $\mu_{y|(a,b)}$ is light-tailed. Thus the proof is complete.
\end{proof}

\section{Estimation of Rank and Sparsity}
Throughout the paper, we assume that the rank $r$ and sparsity levels $s_1,s_2$ are known. In practice, these parameters often need to be estimated from data, or selected by user. In this section, we give a partial solution to parameter estimation.

If the sample complexity satisfies $m=\Omega(n_1n_2)$, then $r$, $s_1$, $s_2$ can be estimated from $\hX_0$ as follows. Let $(J,K)$ and $(J,K)^c$ denote the support of $X=UQV^T$ (the set of indices where $X$ is nonzero) and its complement. Let $\sigma_i(\cdot)$ denote the $i$-th singular value of a matrix. Suppose for some $\eta>0$,
\[
\min_{(j,k)\in(J,K)}|X^{(j,k)}|\geq \eta, \quad \sigma_r(X) = \sigma_r(Q) \geq \eta.
\]
By Theorem \ref{thm:lin}, we can achieve $\|\hX_0-X\|_\rmF \leq \frac{1}{3}\eta$ with $m=\Omega(n_1n_2)$ samples. Then
\begin{align*}
\min_{(j,k)\in(J,K)}|\hX_0^{(j,k)}| \geq \frac{2}{3}\eta, \quad \max_{(j,k)\in(J,K)^c}|\hX_0^{(j,k)}| \leq \frac{1}{3}\eta,\quad
\sigma_r(\hX_0) \geq \frac{2}{3}\eta, \quad \sigma_{r+1}(\hX_0) \leq \frac{1}{3}\eta.
\end{align*}
Therefore, an entry is nonzero in $X$ if and only if the absolute value of the corresponding entry in $\hX_0$ is greater than $\frac{1}{2}\eta$. We can determine $s_1$ and $s_2$ by counting the number of such entries. Similarly, the rank $r$ of matrix $X$ can be determined by counting the number of singular values of $\hX_0$ greater than $\frac{1}{2}\eta$. In practice, such a threshold $\eta$ is generally unavailable. However, by gathering a sufficiently large number of samples, the entries and singular values of $\hX_0$ will vanish if the corresponding entries and singular values in $X$ are zero, thus revealing the true sparsity and rank.

In practice, one can select parameters $r$, $s_1$, and $s_2$ based on cross validation.


\section{Pathological Cases for Supervised Dimensionality Reduction}

To estimate the embedding matrices, the matrix $Q\in\bbR^{r\times r}$, which depends on $f(\cdot,\cdot)$, needs to be non-singular. This means, as revealed by our analysis, our algorithms fail in the pathological case where $f(\ba,\bb)$ is even in $\ba$ or $\bb$. However, previous supervised DR methods suffer from similar pathological cases. For example, principal Hessian direction (pHd) \cite{Li1992} fails when $f(\cdot,\cdot)$ is odd in both variables. 
We compare our approach with pHd for two link functions: 1) $f(\ba,\bb) = \ba^T\bb = \sum_{j=1}^r \ba^{(j)}\bb^{(j)}$, which is odd in $\ba,\bb$, and 2) $f(\ba,\bb) = \sum_{j=1}^r \ba^{(j)2}\bb^{(j)2}$, which is even in $\ba,\bb$. The results in Figure \ref{fig:compare_pHd} show that: 1) For the odd function, our approach succeeds, but pHd fails; 2) For the even function, our approach fails, but pHd succeeds.

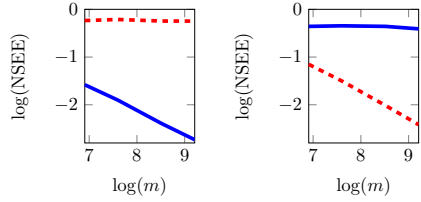
\begin{figure}[htbp]
\centering
%
%
\begin{tikzpicture}[scale=0.7]

\begin{axis}[%
width=0.822in,
height=1in,
at={(0in,0in)},
scale only axis,
xmin=6.908,
xmax=9.21,
xlabel={$\log(m)$},
ymin=-2.8,
ymax=0,
ylabel={$\log(\mathrm{NSEE})$},
axis background/.style={fill=white}
]
\addplot [color=blue,solid,line width=2.0pt,forget plot]
  table[row sep=crcr]{%
6.90775527898214	-1.58020068553952\\
7.60090245954208	-1.90241849989738\\
8.51719319141624	-2.39339524848198\\
9.21034037197618	-2.73179799885045\\
};
\addplot [color=red,dashed,line width=2.0pt,forget plot]
  table[row sep=crcr]{%
6.90775527898214	-0.235154733757411\\
7.60090245954208	-0.212884630672413\\
8.51719319141624	-0.245352264966206\\
9.21034037197618	-0.246449811460298\\
};
\end{axis}
\end{tikzpicture}
%
%
\begin{tikzpicture}[scale=0.7]

\begin{axis}[%
width=0.822in,
height=1in,
at={(0in,0in)},
scale only axis,
xmin=6.908,
xmax=9.21,
xlabel={$\log(m)$},
ymin=-2.8,
ymax=0,
ylabel={$\log(\mathrm{NSEE})$},
axis background/.style={fill=white}
]
\addplot [color=blue,solid,line width=2.0pt,forget plot]
  table[row sep=crcr]{%
6.90775527898214	-0.357542349616585\\
7.60090245954208	-0.345415705555318\\
8.51719319141624	-0.35742561210398\\
9.21034037197618	-0.409221455986557\\
};
\addplot [color=red,dashed,line width=2.0pt,forget plot]
  table[row sep=crcr]{%
6.90775527898214	-1.1528123149822\\
7.60090245954208	-1.50108420216452\\
8.51719319141624	-2.0156654517911\\
9.21034037197618	-2.41665463667388\\
};
\end{axis}
\end{tikzpicture}%
\caption{Log-log plots of our approach (blue solid lines) versus pHd (red dashed lines). The left plot is for an odd function, and the right plot is for an even function.
}
\label{fig:compare_pHd}
\end{figure}

In practice, such pathological cases, for which an algorithm completely fails to recover any useful direction in the dimensionality reduction subspace, are very rare. In most applications, depending on the underlying link functions, different DR methods recover useful embedding matrices to varying degrees. In this sense, our algorithm is a complement to previous supervised DR methods.

\end{document}